\newtheorem{theorem}{Theorem}
\pgfplotsset{compat=newest}
\newtheorem{lemma}[theorem]{Lemma}
\newtheorem{corollary}{Corollary}[theorem]
\def\UrlSpecials{\do\~{\kern -.15em\lower .7ex\hbox{~}\kern .04em}} \catcode`~=13 
\DeclareMathOperator*{\argmax}{arg\,max}
\newcommand{\calA}{\mathcal{A}}
\newcommand{\calE}{\mathcal{E}}
\newcommand{\calF}{\mathcal{F}}
\newcommand{\calH}{\mathcal{H}}
\newcommand{\calI}{\mathcal{I}}
\newcommand{\calN}{\mathcal{N}}
\newcommand{\calO}{\mathcal{O}}
\newcommand{\calW}{\mathcal{W}}
\newcommand{\rmd}{\mathrm{d}}
\newcommand{\rmK}{\mathrm{K}}
\newcommand{\rmL}{\mathrm{L}}
\newcommand{\bbE}{\mathbb{E}}
\newcommand{\bbN}{\mathbb{N}}
\newcommand{\bbP}{\mathbb{P}}
\DeclareMathAlphabet{\mathbsf}{OT1}{cmss}{bx}{n}
\DeclareMathAlphabet{\mathssf}{OT1}{cmss}{m}{sl}
\DeclareSymbolFont{bsfletters}{OT1}{cmss}{bx}{n}  
\DeclareSymbolFont{ssfletters}{OT1}{cmss}{m}{n}
\DeclareMathSymbol{\bsfGamma}{0}{bsfletters}{'000}
\DeclareMathSymbol{\ssfGamma}{0}{ssfletters}{'000}
\DeclareMathSymbol{\bsfDelta}{0}{bsfletters}{'001}
\DeclareMathSymbol{\ssfDelta}{0}{ssfletters}{'001}
\DeclareMathSymbol{\bsfTheta}{0}{bsfletters}{'002}
\DeclareMathSymbol{\ssfTheta}{0}{ssfletters}{'002}
\DeclareMathSymbol{\bsfLambda}{0}{bsfletters}{'003}
\DeclareMathSymbol{\ssfLambda}{0}{ssfletters}{'003}
\DeclareMathSymbol{\bsfXi}{0}{bsfletters}{'004}
\DeclareMathSymbol{\ssfXi}{0}{ssfletters}{'004}
\DeclareMathSymbol{\bsfPi}{0}{bsfletters}{'005}
\DeclareMathSymbol{\ssfPi}{0}{ssfletters}{'005}
\DeclareMathSymbol{\bsfSigma}{0}{bsfletters}{'006}
\DeclareMathSymbol{\ssfSigma}{0}{ssfletters}{'006}
\DeclareMathSymbol{\bsfUpsilon}{0}{bsfletters}{'007}
\DeclareMathSymbol{\ssfUpsilon}{0}{ssfletters}{'007}
\DeclareMathSymbol{\bsfPhi}{0}{bsfletters}{'010}
\DeclareMathSymbol{\ssfPhi}{0}{ssfletters}{'010}
\DeclareMathSymbol{\bsfPsi}{0}{bsfletters}{'011}
\DeclareMathSymbol{\ssfPsi}{0}{ssfletters}{'011}
\DeclareMathSymbol{\bsfOmega}{0}{bsfletters}{'012}
\DeclareMathSymbol{\ssfOmega}{0}{ssfletters}{'012}
\newcommand{\qednew}{\nobreak \ifvmode \relax \else
      \ifdim\lastskip<1.5em \hskip-\lastskip
      \hskip1.5em plus0em minus0.5em \fi \nobreak
      \vrule height0.75em width0.5em depth0.25em\fi}
\title{Constrained Best Arm Identification with Tests for Feasibility}
\author {
    Ting Cai,
    Kirthevasan Kandasamy
}
\begin{document}

\maketitle

\begin{abstract}
Best arm identification (BAI) aims to identify the highest-performance arm among a set of $K$ arms by collecting stochastic samples from each arm.
In real-world problems, the best arm needs to satisfy additional feasibility constraints.
While there is limited prior work on BAI with feasibility constraints, they typically assume the performance and constraints are observed simultaneously on each pull of an arm.
However, this assumption does not reflect most practical use cases, e.g., in drug discovery, we wish to find the most potent drug whose toxicity and solubility are below certain safety thresholds.
These safety experiments can be
conducted separately from the potency measurement.
Thus, this requires designing BAI algorithms that not only decide which arm to pull
but also decide whether to test for the arm's performance or feasibility.
In this work, we study feasible BAI which allows a
decision-maker to choose a tuple $(i,\ell)$, where $i\in [K]$ denotes an arm and
$\ell$ denotes whether she wishes to test for its performance ($\ell=0$) or any of its $N$
feasibility constraints ($\ell\in[N]$). 
We focus on the fixed confidence setting, which is to identify the
\textit{feasible} arm with the \textit{highest performance}, with a probability of at least
$1-\delta$.
We propose an efficient algorithm and upper-bound its sample complexity,
showing our algorithm can naturally adapt to the problem's difficulty and eliminate arms by worse performance or infeasibility, whichever is easier. 
We complement this upper bound with a lower bound
showing that
our algorithm is \textit{asymptotically ($\delta\rightarrow 0$) optimal}.
Finally, we empirically show that our algorithm outperforms other state-of-the-art BAI algorithms
in both synthetic and real-world datasets.
\end{abstract}


\newcommand{\insertAlgoOurs}{%
\begin{algorithm}[t!]
  \caption{ }  \label{alg:ours}
  \textbf{Input}: $[K]$ arms, $\delta\in(0,1)$.\\
  \textbf{Parameter}: $S \leftarrow [K]$, $P \leftarrow [K]$, $F \leftarrow \emptyset$, $I \leftarrow \emptyset$, $H_{i} \leftarrow [N]$ $\forall i\in[K]$. 
  \begin{algorithmic}[1]
    \FOR {$t = 1,2, \cdots$} 
        \IF {$S = \emptyset$} 
            \STATE \textbf{Return} $K+1$.
        \ELSIF {$P = \{i\}$ is a singleton \textbf{and} $i\in F$ }  
                \STATE \textbf{Return} $i$  
        \ELSIF {$P = \{i\}$ is a singleton \textbf{and} $i\notin F$ }       
            \STATE  $\sampleforsafety(i)$   
        \ELSE
            \STATE $a_t \leftarrow \argmax_{i\in P} \widehat{\mu}_{i,0}(t)$
            \STATE $b_t \leftarrow \argmax_{i\in P\backslash\{a_t\}} \overline{\mu}_{i, 0}(t)$
            \STATE Sample performance of $a_t$ and $b_t$.
            \IF{$a_t \notin F$}
                \STATE $\sampleforsafety(a_t)$
                
            \ENDIF
            \IF{$b_t \notin F$}
                \STATE $\sampleforsafety(b_t)$
            \ENDIF
        \ENDIF
        \STATE $S \leftarrow S \backslash I$
        \IF{$F \neq \emptyset$}
            \STATE $S \leftarrow \{i \in S:  \overline{\mu}_{i,0}(t) > \max_{j\in F} \underline{\mu}_{j,0}(t) \}$ 
        \ENDIF
        \STATE $P \leftarrow \{i \in S: \overline{\mu}_{i,0}(t) > \max_{j\in S} \underline{\mu}_{j,0}(t) \}$
    \ENDFOR

    \vspace{0.10in}
    {\textbf{Function:} \sampleforsafety($i$)}
        \STATE
            $\ell_t \leftarrow \argmax_{\ell \in H_{i}} \widetilde{\mu}_{i,\ell}(t)$
        \STATE
                Sample constraint $\ell_t$ of arm $i$
        \IF{$\underline{\mu}_{i, \ell_t}(t) > 1/2$}
            \STATE $I \leftarrow I \cup \{i\}$
        \ELSIF{$\overline{\mu}_{i, \ell_t}(t) < 1/2$}
            \STATE $H_{i} \leftarrow H_{i} \backslash \{\ell_t\}$
        \IF{$H_{i} = \emptyset$}
            \STATE $F \leftarrow F \cup \{i\}$
            \ENDIF
        \ENDIF
  \end{algorithmic}
\end{algorithm}
\vspace{-0.1in}
}

\section{INTRODUCTION}
\label{sec:intro}
Best arm identification (BAI) contains $K$ arms that allow a decision maker to repeatedly pull
one of the arms and observes an i.i.d. sample drawn from the distribution associated with that arm~\cite{bechhofer1958sequential,paulson1964sequential}.
In the fixed confidence setting of BAI, given a target failure rate $\delta>0$, the decision-maker aims to identify the arm with the \textit{highest expected performance},
with probability at least $1-\delta$,
while keeping the number of pulls to a minimum.
This problem has been widely applied in areas such as drug discovery, crowdsourcing,
distributed systems, and A/B testing~\cite{jun2016top,koenig1985procedure,schmidt2006integrating,van2017automatic,zhou2014optimal}.

In many real-world applications, we additionally wish to find the optimal arm that satisfies \textit{feasibility constraints}, which cannot be modeled analytically and need to be evaluated
via costly experimentation.
Moreover, these feasibility experiments can usually be tested separately from the performance evaluation.
As a motivating example, in drug discovery, we 
wish to find the most potent drug,
but also need to guarantee that the drug is soluble in the bloodstream,
and the risk of adverse side effects is small.
Scientists have developed different experiments
to measure the drug's potency (performance),
solubility (feasibility),
and toxicity (feasibility), which can be separately carried out independent of each
other~\cite{thall1998strategy}.
Similarly, in database tuning applications~\cite{van2017automatic}, we wish to 
minimize the end-to-end latency,
while guaranteeing the risk of system-wide failures is small.
The latency and robustness can also be evaluated via separate
tests~\cite{kanellis2020too}.

Majority of the prior work on BAI does not study feasibility constraints.
The only work on feasible BAI~\cite {katz2018feasible,katz2019top}
assumes that the performance and feasibility tests are all conducted simultaneously, which does not hold in many real-world settings, such as the examples highlighted above
Often, we can conduct separate experiments
and only observe the result of the experiment conducted.
Thus, applying methods in prior works to our setting by naively pulling all arms simultaneously can be unnecessarily
expensive as it samples all performance and feasibility distributions when testing one arm and does not focus on the most important experiment to determine an arm's optimality and/or feasibility.
Moreover, from a theoretical perspective, simultaneous pulls cannot capture the \textit{actual complexity} of the number of individual tests.

In our paper, we introduce a novel BAI formalism.
Given $K$ arms, each arm is associated with $N+1$ distributions.
For $i\in[K]$ and $\ell\in \{0\}\cup [N]$,
$\mu_{i,\ell}\in [0,1]$ denotes the unknown mean of arm $i$'s $\ell^{\rm th}$ distribution.
On each round, the decision-maker chooses a tuple $(i,\ell)$ where $i\in[K]$ denotes the arm
and $\ell$ denotes whether she wishes to test for its performance ($\ell=0$) or its $\ell^{\rm th}$
feasibility constraint ($\ell\in[N]$).
The thresholds for each feasibility constraint are given and for simplicity\footnote{It can be easily extended to different thresholds for different constraints.}, we assume the thresholds are all $\frac{1}{2}$.
An arm is said to be \emph{feasible} if $\mu_{i,\ell} < \frac{1}{2}$ for all $\ell\in[N]$.
Given $\delta \in (0,1)$, the goal of the decision-maker is to identify the optimal arm, i.e. the
feasible arm with the highest $\mu_{i,0}$, with probability at least $1-\delta$,
while minimizing the total number of collected samples. 

\textit{The \textbf{key challenge} to solve this problem is to balance when to test for feasibility and when to test for performance for different types of suboptimal arms at the same time.} We illustrate it using two naive algorithms:

First, consider a naive two-stage algorithm that identifies all feasible arms first and then executes a BAI algorithm on these arms. This algorithm is inefficient on suboptimal arms whose feasibility means $\mu_{i,\ell}$ for $\ell \in [N]$ are close to $\frac{1}{2}$ but $\mu_{i,0}$ value is much smaller compared to the optimal arm. Because it will waste many samples\footnote{%
Recall that, given two sub-Gaussian distributions with means $\mu$ and $\mu'$, and a threshold
 $\xi$,
we require $O((\mu - \mu')^{-2})$ samples to decide if $\mu<\mu'$ or $\mu>\mu'$,
and $O((\mu - \xi)^{-2})$ samples to decide if $\mu<\xi$ or $\mu>\xi$.
} on testing feasibility while these arms could have been eliminated faster if we started by comparing performance. 

Consider another algorithm that first tries to identify the best-performance arm and then tests its feasibility. It will eliminate the arm if it is infeasible and repeat with the remaining arms until it finds a feasible arm. This algorithm is inefficient on another type of suboptimal arms, whose $\mu_{i,0}$ values are slightly larger than the optimal arm but are clearly infeasible, i.e., $\mu_{i,\ell}\gg\frac{1}{2}$ for $\ell\in
[N]$. Because it will waste many samples to differentiate the performance of these arms which could
have been easily eliminated by feasibility.
Note that these two algorithms could indeed efficiently eliminate the two types of suboptimal arms described for each other.

In summary, the \textbf{main contributions} of this paper are:

\emph{1. Problem formalism and lower bound}: We define a novel feasible BAI problem and first quantify the complexity of the problem by developing a novel complexity term for each type of arm, which captures the best way to eliminate each type of suboptimal arm and the cost necessary to identify the optimal arm. Leveraging the insights from the complexity terms, we then provide a gap-dependent lower bound on the total expected sample complexity.
    
\emph{2. Algorithm design and upper bound}: 
We propose an algorithm which tests performance and feasibility for each arm simultaneously but only tests its performance and/or at most one of its $N$ feasibility constraints, which can eliminate all suboptimal arms in the easiest way.
Moreover, the number of samples collected by the algorithm does not scale linearly with the number of feasibility constraints $N$.  
We then provide the upper bound on the expected sample complexity of the algorithm, which is shown to be asymptotically ($\delta \to 0$) optimal compared with the lower bound.

\emph{3. Experiments}: We empirically compare our algorithm with other state-of-the-art algorithms on both synthetic and real-world datasets from drug discovery. In all the experiments, our algorithm outperforms all other algorithms.

\subsection{Related Work}
\label{sec:relatedwork}
The bandit framework is a popular paradigm to study
exploration-exploitation tradeoffs that occur in sequential decision-making under
uncertainty~\cite{lai85bandits,thompson33sampling}.
Here, a decision-maker adaptively samples one of $K$ arms from a bandit model so as to
achieve a given objective.
There is extensive prior work on developing algorithms for
BAI, whose goal is to identify the arm
with the highest
mean in the fixed confidence or fixed budget setting~\cite{bechhofer1958sequential,bubeck2009pure,even2002pac,jamieson2014lil,kalyanakrishnan2012pac,karnin2013almost,paulson1964sequential}.
Several works have also developed hardness results for BAI~\cite{kaufmann2016complexity,mannor2004sample}.
Our algorithm and theoretical results build on this rich line of work.
In particular, our algorithm's sampling strategy is inspired by the LUCB algorithm
of~\cite{kalyanakrishnan2012pac}.
However, none of these work studies constrained BAI, where the
feasibility constraints need to be tested separately.

Our model for testing feasibility is closely related to thresholding
bandits, i.e., identify all arms whose
mean is larger than a given threshold~\cite{katz2018feasible,locatelli2016optimal,mason2020findall}.
In our work, however, it is sufficient to identify just one constraint that is larger than
the threshold to determine if it is infeasible.
Another similar line of work is on finding if there is any arm below a threshold~\cite{degenne2019pure,kano2019good,katz2020true,kaufmann2018sequential}.
\cite{hou2023} consider a variance-constrained BAI problem, where feasibility is defined via a variance threshold and is not tested separately from performance, as both are derived from the same reward distribution.
While we use some ideas from these works when testing the feasibility of an arm,
their algorithms and analysis cannot be directly applied
to our setting, since we need to determine if the most efficient way to eliminate an arm is via
its sub-optimality or infeasibility.

Perhaps the closest work to ours is~\cite{katz2019top}, who study 
identifying the best $m$ arms that satisfy the given feasibility constraints.
They assume each arm $i$ is associated with a $D$-dimensional distribution with mean
$\bm{\mu}_i$ and aim to find the top $m$ arms that maximize $\mathbf{r}^\top\bm{\mu}_i$
subject to the constraint $\bm{\mu}_i \in P$.
Here $\mathbf{r}$ is a given direction for the reward and $P$ is a subset of $\mathbb{R}^D$
denoting the feasible set.
However, this is different from our setting, since we
allow the objective and feasibility constraints to be tested separately.
A naive application of their algorithm will require testing an arm's performance and all
feasibility constraints each time when we wish to test an arm, which is sample inefficient in our setting. \citet{chen2017nearly} studies identifying the best
set in a family of feasible subsets. Our work is different from theirs since they did not consider testing for feasibility separately and they assume the feasible subsets have 
various combinatorial structures.

Our work is also related to work in the Bayesian optimization
literature that studies zeroth order constrained optimization and multi-objective optimization,
where multiple performance objectives and constraints can be evaluated separately,
similar to our setting~\cite{gardner2014bayesian,ungredda2021bayesian,eriksson2021scalable,kirschner2022tuning,hernandez2016,berkenkamp2021bayesian,kirschner2019adaptive,paria2020flexible,hernandez2016predictive}.
These problem settings are distinctly different from ours as we consider the $K$--armed version of the problem.

\section{PROBLEM SETUP}
\label{sec:setup}

For $a\in\mathbb{N}_+$, we denote $[a] = \{1,\dots,a\}$.
Given $K$ arms. 
each arm $i\in [K]$ is associated with $N+1$ distributions $\{\nuil\}_{\ell=0}^{N}$,
which are all $1$ sub-Gaussian for simplicity\footnote{The variances can be easily extended to other numbers and we will get
the same result via scaling}.
Let $\muil = \mathbb{E}_{X\sim \nuil}[X]$ be the mean of distribution $\nuil$.
We assume w.l.o.g that $\muil \in [0,1]$ for all $i,\ell$.
For any arm $i$, $\nuio$ is associated with its performance,
and
$\nuil$, for $\ell\in[N]$ is associated with
feasibility constraint $\ell$.
The set of all feasible arms $\feasible$ is defined as
\begin{align*}
    \feasible = \left\{i\in [K]\,;\; \muil < \frac{1}{2}\; \text{ for all } \ell\in [N]\right\}.
\end{align*}
If $\feasible\neq \emptyset$,  we define the optimal arm $\iopt$ as the feasible arm with the highest
expected performance.
Otherwise, we define $\iopt = K+1$ (indicates there is no feasible arm).
\begin{align}
    \iopt = \begin{cases}
        \arg\max_{i\in\feasible} \,\muio , &\text{if } \feasible \neq \emptyset \\
        K+1, &\text{if } \feasible = \emptyset
    \end{cases}
\label{eqn:ioptdefn}
\end{align}

W.l.o.g., 
we assume that $\mu_{1,0} > \mu_{2, 0} > \cdots > \muiopto > \cdots > \mu_{K,0}$.
For each arm $i$, we assume that the $\{\muil\}_{\ell\in[N]}$
are also not equal to each other and 
$\muil\neq \frac{1}{2}$ for all $i,\ell$.
Note the mean values and the ordering are unknown.

An algorithm for this problem proceeds over a sequence of
rounds, terminates, and then recommends an arm as the optimal feasible arm.
On each round $r$, based on past observations,
it chooses $(i_r, \ell_r)$ that determines which distribution, i.e., arm $i_r\in[K]$
and performance/feasibility $\ell_r\in\{0\}\cup[N]$, to sample from.
When it stops, it outputs $\widehat{a}\in[K+1]$.
If $\widehat{a}\in[K]$, then the decision-maker recommends $\widehat{a}$ as the optimal feasible
arm, and if $\widehat{a} = K+1$, then it declares that there is no feasible arm.

%

\section{LOWER BOUND}
\label{sec:lowerbound}

In this section, we will first define the complexity of the feasible BAI problem and then provide a
lower bound for any $\delta$-correct algorithm.
To motivate the ensuing discussion, consider the bandit instance in
Fig.~\ref{fig:intro},
where we have $K=5$ arms and $N=1$ feasibility constraint.
The optimal arm is $\iopt = 2$.
Since $\mu_{1,0} > \mu_{2,0}$, the only way to determine $\iopt\neq 1$ is to
verify arm $1$ is infeasible, i.e. $\mu_{1,1} > \frac{1}{2}$.
Since arm 4 is feasible $\mu_{4,1} < \frac{1}{2}$, the only way to determine that $\iopt\neq 4$ is to
verify arm 4 has worse performance than $\iopt$, i.e., $\mu_{2,0} > \mu_{4, 0}$.
Note that arms 3 and 5 are both infeasible and have lower performance than $\iopt$, thus
they can be eliminated via both ways.
Of these, we may prefer to determine $\iopt\neq3$ based on feasibility since
$\mu_{3,0}\approx\mu_{2,0}$, but $\mu_{3,1} \gg \frac{1}{2}$.
However, we may prefer to determine $\iopt\neq 5$ based on performance since
$\mu_{5,1}\approx \frac{1}{2}$, but $\mu_{5,0}\ll\mu_{2,0}$.

\begin{figure}[t]
    \centering
    \includegraphics[width = 0.45\linewidth]{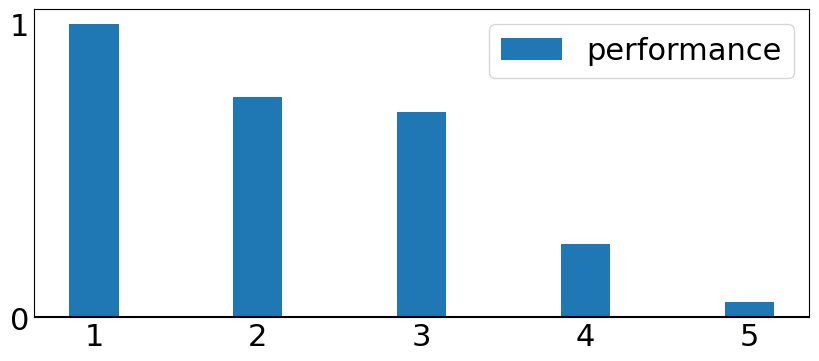}
    \includegraphics[width = 0.45\linewidth]{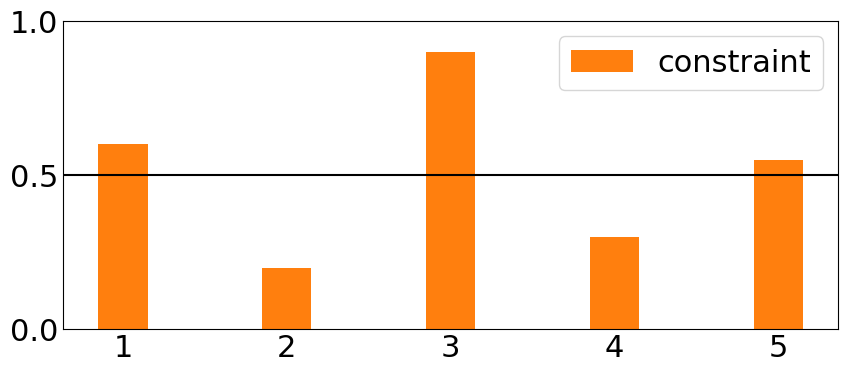}
    \caption{An example bandit instance when $K=5$ and $N=1$. The optimal feasible arm is
$\iopt=2$.}
    \label{fig:intro}
    \vspace{-0.1in}
\end{figure}

For each arm $i\in [K]$, we first define the complexity terms to determine its feasibility ($\theta_i$) and performance ($\phi_i$).
We will start with feasibility.
To verify that an arm $i$ is feasible, we have to check 
$\muil<\frac{1}{2}$ for all $\ell\in[N]$.
But to verify that an arm $i$ is infeasible, it is sufficient to check only one constraint $\ell\in[N]$ such that $\muil > \frac{1}{2}$.
If there are several such constraints, the easiest will be the 
constraint with the largest $\muil$.
Hence, we define $\theta_i$ as:
\begin{align*}
    \theta_i = 
    \begin{cases}
        (\max_{\ell \in [N]}\muil - \frac{1}{2} )^{-2}, \;\;&\text{if } i \notin \feasible, \\
        \sum_{\ell=1}^N (\mu_{i,\ell} - \frac{1}{2})^{-2}, \;\; &\text{if } i \in \feasible.
    \end{cases}
\label{eqn:thetai}\numberthis
\end{align*}
We then define the complexity term $\phi_i$ to differentiate the performance between arm $i$ and $\iopt$.
Recall that arms are arranged in decreasing order of expected performance.
For arms $i<\iopt$, i.e. those have better performance but
are infeasible, the only way to determine $i\neq \iopt$ is to test for their feasibility;
hence, we define $\phi_i=\infty$ for all $i<\iopt$.
For $i>\iopt$, we let $\phi_i = (\muiopto - \muio)^{-2}$.
Finally, for $\iopt$, we set $\phi_{\iopt}=0$ if there are no other feasible arms since it
is unnecessary to verify that its performance is better than other arms;
otherwise, we set it to the inverse squared gap between $\iopt$ and the feasible arm with the second-highest performance.
Putting this all together, we have:
\begin{align*}
\phi_i = 
\begin{cases}
     \infty,  &\text{if } i<\iopt, \\
     (\mu_{\iopt, 0} - \mu_{i, 0})^{-2}, 
        &\text{if } i>\iopt, \\
     0, &\text{if } i=\iopt, |\feasible|=1, \\
     (\mu_{\iopt, 0} - \max\limits_{j \in \feasible\backslash\{\iopt\}}\mu_{j, 0})^{-2}, 
        &\text{if } i=\iopt, |\feasible|>1, \\
\end{cases}
\numberthis\label{eqn:phii}
\end{align*}
which stands true when there is no feasible arm, i.e.
$\iopt = K+1$.
Now, we define two sets $\calI$ and $\calW$ using the complexity terms in~\eqref{eqn:thetai}
and~\eqref{eqn:phii}.
Intuitively, $\calI$ are the arms that are easier to eliminate due to infeasibility, and $\calW$ are the arms that are easier to eliminate due to their worse
performance than $\iopt$. We have,
\begin{align*}
\calI &= [\iopt-1] \cup \Big( \{\iopt+1,\dots,K\} \cap \{i \in \feasiblec:  \theta_i <
\phi_i\} \Big),\\
\calW &= \{\iopt+1,\dots,K\} \cap \Big( \feasible \cup 
 \{i \in \feasiblec:\,  \theta_i \geq \phi_i\}
\Big).
\end{align*}
Observe that $[K] = \calI \cup \calW \cup \{\iopt\}$.
This leads to the following definition of the complexity
of an arm $\calH_i$ and the complexity of the problem $\calH$ which is the sum of the arm complexities.
\begin{align*}
    \calH_i &= 
    \begin{cases}
        \theta_i, &\text{if } i \in \calI, \\
        \phi_i, &\text{if } i \in \calW, \\
        \theta_i + \phi_i, &\text{if } i = \iopt \text{ and } \iopt \le K,
    \end{cases}\\
    \calH &= \sum_{i=1}^K \calH_i. \label{eqn: H}
    \numberthis
\end{align*}
It suggests that for all suboptimal arms, we only need to verify either they are infeasible or have worse performance than $\iopt$.
For $\iopt$, we need to verify that it is both feasible and that its
mean $\muiopto$ is larger than the feasible arm with the second-highest performance.
Next, we show the lower bound of the expected sample complexity for any $\delta$-correct algorithms. 
\begin{theorem} \label{thm: lower bound}
Let $\nu$ denote a bandit instance with Gaussian observations satisfying assumptions in \S PROBLEM SETUP. Let $\delta \in (0,1)$ and $\calH$ defined in~\eqref{eqn: H}. 
Any algorithm $\calA$ that is $\delta$-correct
has a stopping time $\tau$ on $\nu$ that satisfies
\begin{align*}
    \bbE_\nu[\tau] \ge 2\calH\log\frac{1}{2.4\delta}.
\end{align*}
\end{theorem}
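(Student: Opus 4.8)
The plan is to use the standard change-of-measure (transportation) technique for pure-exploration lower bounds. Write $N_{i,\ell}(\tau)$ for the number of samples drawn from $\nu_{i,\ell}$ before the algorithm stops, so that $\tau=\sum_{i,\ell}N_{i,\ell}(\tau)$. The key tool is the transportation inequality of \cite{kaufmann2016complexity}: for any alternative instance $\nu'$ whose optimal answer differs from that of $\nu$, any $\delta$-correct algorithm satisfies
\[
\sum_{i=1}^K\sum_{\ell=0}^N \bbE_\nu[N_{i,\ell}(\tau)]\,\mathrm{KL}(\nu_{i,\ell},\nu'_{i,\ell}) \ge \mathrm{kl}(1-\delta,\delta) \ge \log\tfrac{1}{2.4\delta},
\]
where the first inequality applies the lemma to the event $\{\widehat a=\iopt\}$ (probability $\ge 1-\delta$ under $\nu$, $\le\delta$ under $\nu'$), and the second is the standard bound on binary relative entropy. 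For the Gaussian observations of the theorem, $\mathrm{KL}(\nu_{i,\ell},\nu'_{i,\ell})=\tfrac12(\muil-\mu'_{i,\ell})^2$, so perturbing one mean by $\Delta$ costs $\Delta^2/2$. I would build one family of alternatives per arm and add the resulting inequalities.

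For each suboptimal arm $i\neq\iopt$, I would construct a single alternative $\nu^{(i)}$ that changes only the $N+1$ distributions of arm $i$ and makes $i$ optimal, forcing the answer to change. Concretely: if $i$ is infeasible, push every violating constraint ($\muil>\tfrac12$) down to just below $\tfrac12$; and if $\muio<\muiopto$ (i.e.\ $i>\iopt$), also raise $\muio$ to just above $\muiopto$. After these perturbations $i$ is feasible with strictly highest performance, so the optimal answer is no longer $\iopt$. Since $\nu^{(i)}$ perturbs only arm $i$, the inequality reduces to $\sum_{\ell}\bbE_\nu[N_{i,\ell}(\tau)]\,\mathrm{KL}(\nu_{i,\ell},\nu^{(i)}_{i,\ell})\ge\log\tfrac{1}{2.4\delta}$. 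The crucial observation is that the \emph{largest} per-distribution KL equals exactly $\tfrac{1}{2\calH_i}$: for $i\in\calI$ the feasibility term $\tfrac12(\max_\ell\muil-\tfrac12)^2=\tfrac{1}{2\theta_i}$ dominates (when a performance term $\tfrac{1}{2\phi_i}$ is also present, $\theta_i<\phi_i$ makes feasibility the maximum), whereas for $i\in\calW$ either there is no feasibility perturbation or $\theta_i\ge\phi_i$, so the performance term $\tfrac{1}{2\phi_i}$ dominates. Bounding every KL by this maximum and rearranging gives $\sum_\ell\bbE_\nu[N_{i,\ell}(\tau)]\ge 2\calH_i\log\tfrac{1}{2.4\delta}$. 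This ``bound-by-the-maximum'' step is precisely what makes the lower bound charge only the cheaper elimination route, matching the definition of $\calH_i$.

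For the optimal arm $\iopt$ (when $\iopt\le K$) the complexity $\calH_{\iopt}=\theta_{\iopt}+\phi_{\iopt}$ is a \emph{sum}, so here I would use several alternatives, each perturbing a \emph{distinct} distribution of $\iopt$. For every constraint $\ell\in[N]$, raising $\muil$ above $\tfrac12$ renders $\iopt$ infeasible (hence not optimal) and changes only $\nu_{\iopt,\ell}$, giving $\bbE_\nu[N_{\iopt,\ell}(\tau)]\ge 2(\muil-\tfrac12)^{-2}\log\tfrac{1}{2.4\delta}$; summing over $\ell$ yields $2\theta_{\iopt}\log\tfrac{1}{2.4\delta}$. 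When $|\feasible|>1$, a further alternative lowering $\muiopto$ just below the second-best feasible performance makes another arm optimal and gives $\bbE_\nu[N_{\iopt,0}(\tau)]\ge 2\phi_{\iopt}\log\tfrac{1}{2.4\delta}$ (and when $|\feasible|=1$, $\phi_{\iopt}=0$, so this step is vacuous). Because these perturbations act on disjoint distributions, the individual bounds \emph{add}, giving $\sum_{\ell=0}^N\bbE_\nu[N_{\iopt,\ell}(\tau)]\ge 2\calH_{\iopt}\log\tfrac{1}{2.4\delta}$. Summing the per-arm bounds over all $i$ and using $\tau=\sum_{i,\ell}N_{i,\ell}(\tau)$ with $[K]=\calI\cup\calW\cup\{\iopt\}$ yields $\bbE_\nu[\tau]\ge 2\calH\log\tfrac{1}{2.4\delta}$.

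The main obstacle I anticipate is the bookkeeping needed to confirm that each alternative is a \emph{valid} confusing instance (its optimal answer differs from $\iopt$) across all cases: in particular the infeasible arms $i>\iopt$, which must have \emph{both} feasibility and performance perturbed simultaneously, and the degenerate cases $\iopt=K+1$ (no feasible arm, where making any single arm feasible flips the answer and $\calH=\sum_i\theta_i$) and $|\feasible|=1$. The delicate quantitative step is verifying that the maximal per-distribution KL equals $\tfrac{1}{2\calH_i}$ in every branch of the $\calI$ versus $\calW$ case split, but this follows directly from the definitions of $\theta_i$ and $\phi_i$ together with the membership conditions comparing $\theta_i$ and $\phi_i$.
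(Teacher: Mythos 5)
Your proposal is correct and follows essentially the same route as the paper's proof: the same transportation lemma of Kaufmann et al.\ applied to the event $\{\widehat a=\iopt\}$, the same alternative instances (feasibility-only perturbation for infeasible arms with $i<\iopt$, joint feasibility-and-performance perturbation for infeasible arms with $i>\iopt$, performance-only for feasible arms in $\calW$, and $N+1$ disjoint single-distribution perturbations for $\iopt$ whose bounds add), and the same bound-by-the-maximum-KL step that yields $\sum_\ell \bbE_\nu[N_{i,\ell}(\tau)]\ge 2\calH_i\log\tfrac{1}{2.4\delta}$ in each case. Your verification that the maximal per-distribution KL equals $\tfrac{1}{2\calH_i}$ across the $\calI$/$\calW$ branches is exactly the case analysis (A.1, A.2, B.1, B.2, C) carried out in the paper, including the degenerate cases $|\feasible|=1$ and $\iopt=K+1$.
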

The lower bound in Theorem~\ref{thm: lower bound} has an expected sample complexity aligning with the complexity terms in~\eqref{eqn: H}. The main challenge in proving the lower bound 
is to construct alternate bandit instances that match the complexity terms. All proofs are in the Appendix of the extended version.

\section{ALGORITHM AND UPPER BOUND} \label{sec:alg and upper bound}

We now present our algorithm and its upper bound of the expected sample complexity. 
Our algorithm, outlined in  Algorithm~\ref{alg:ours},
proceeds over a series of epochs, indexed by $t$.
At each epoch, it will choose up to two arms $a_t, b_t\in[K]$.
For each arm, it will test the performance and/or one of its feasibility constraints.

To describe our algorithm in detail, we first define some quantities. For all $i \in
[K], \ell\in\{0\}\cup[N]$,
$N_{i,\ell}(t)$ denotes the number of times the tuple
$(i,\ell)$ has been sampled and $M_i(t)$ denotes the total number of times arm $i$ has
been tested for feasibility up to epoch $t$:
\begin{align*}
N_{i,\ell}(t) &= \sum_{s=1}^t \msi[(i,\ell) \mbox{ is sampled on epoch } s ],\\
M_i(t) &= \sum_{\ell=1}^N N_{i,\ell}(t).
\end{align*}
Next, we define the upper and lower confidence bounds for each mean value.
Let $X_{i, \ell, s}$ be the sample collected from arm $i$'s $\ell^{\rm th}$
distribution when it is sampled for the $s^{\rm th}$ time.
$\widehat{\mu}_{i,\ell}(t)$ denotes the empirical mean of the distribution up to epoch $t$.
Let $D(n,\delta)$ denote the uncertainty term, the upper and lower
confidence bounds $\overline{\mu}_{i,\ell}(t), \underline{\mu}_{i,\ell}(t)$ for $\muil$ are:
\begin{align}  \label{eqn: confidence bound}
    \widehat{\mu}_{i,\ell}(t) &= \frac{\sum_{s=1}^{N_{i,\ell}(t)}X_{i, \ell, s}}{N_{i,\ell}(t)},\;
    D(n, \delta) = \sqrt{\frac{2}{n}\log\frac{4n^4}{\delta}},\\
    \nonumber \overline{\mu}_{i,\ell}(t) &= \widehat{\mu}_{i,\ell}(t)+ D(N_{i,\ell}(t), \delta/(K(N+1)),\\
    \nonumber \underline{\mu}_{i, \ell}(t) &= \widehat{\mu}_{i,\ell}(t) - D(N_{i, \ell}(t), \delta/(K(N+1))). 
\end{align}
For all $i\in [K], \ell \in [N]$ at epoch $t$, we define $\widetilde{\mu}_{i,\ell}(t)$ as follows which will help us to determine which constraint to test when we wish to test the feasibility of arm $i$: 
\begin{align} \label{eqn: feasibility}
    \widetilde{\mu}_{i,\ell}(t) = \widehat{\mu}_{i,\ell}(t) + \sqrt{\frac{2\log M_i(t)}{N_{i,\ell}(t)}}.
\end{align}

At each epoch, Algorithm~\ref{alg:ours} samples arms that are most probable to be $\iopt$ and simultaneously eliminate suboptimal arms, which are either determined to be infeasible or have worse performance than an arm that has been determined to be feasible. 
It maintains a few sets of arms that are updated at the end of each epoch.
First, $S$ denotes the set of surviving arms,
i.e. arms that are not eliminated. Second, 
we form a focus set $P$, which contains arms in $S$ that have high performance based on the
samples collected so far, see line 18.
Next, we have  $F$ (and $I$) which are arms determined to be feasible (infeasible)
with probability at least $1-\delta$
based on the samples collected so far.
Finally, 
for each arm $i \in [K]$, we maintains set $H_i\subset[N]$
which contains the feasibility constraints of $i$ that have not
yet been determined to be feasible. 
We initialize our algorithm by pulling each arm and each distribution once. 


We first describe the main sampling rule in each epoch in lines 6--11,
and then describe lines 2--4.
On each epoch, we choose up to two arms $a_t$ and $b_t$, and evaluate their performance.
This sampling rule, which is adapted from the LUCB algorithm~\cite{kalyanakrishnan2012pac},
chooses 
$a_t$ to be the arm with the highest sample mean in $P$,
and $b_t$ to be the arm with the highest upper confidence bound on performance in $P\backslash \{a_t\}$.
Intuitively, this choice of $a_t$ and $b_t$ focuses on the arms that are most
likely to be optimal, but are hardest to distinguish.
After testing for performance, if
$a_t$ or $b_t$ have not been determined to be feasible yet, i.e. if they are not in $F$,
then they are tested for exactly one feasibility
constraint as suggested by the subroutine \sampleforsafety{}.
The case where $P$ contains only one arm $i$ (lines 3--4), indicates that its performance
has been deemed to be clearly higher than the rest of the arms in $S$.
Hence, if it is feasible, it will clearly be the optimal arm.
Thus, if it is already guaranteed to be feasible, we can stop and return this arm (line 3),
but otherwise, we should keep testing if it is feasible (line 4) until it is clear as to
whether it is feasible or not.
Finally, if $S=\emptyset$, this is because all arms have been deemed infeasible,
so we return $K+1$ (line 2).

Inspired from~\citet{kano2019good}, we design the subroutine \sampleforsafety{}, which recommends one constraint to test when we wish to test the feasibility of an arm. Each time we invoke it with an arm $i \in [K]$, it chooses the constraint
in $H_{i}$ that maximizes $\widetilde{\mu}_{i,\ell}(t)$, as in \eqref{eqn: feasibility}.
Intuitively, via this choice, we are sampling the constraint that is most likely to be larger than
the threshold $\frac{1}{2}$.
After sampling, if the lower confidence bound for the sampled constraint is larger than $\frac{1}{2}$, 
we deem it to be infeasible so we add $i$ to $I$; then $i$ will be eliminated from $S$ shortly.
otherwise, if its upper confidence bound is smaller than $\frac{1}{2}$, we have verified that the
sampled constraint is not in violation and remove it from $H_{i}$.
In the event that $H_{i}$ is empty, then none of the constraints are in violation, and hence we
add $i$ to $F$ as we have verified that $i$ is feasible.

At the end of each epoch, we update $S$ and $P$ as follows (lines 14--18).
First, we eliminate all determined infeasible arms in $I$ from $S$ (line 14).
Then, to eliminate worse-performance arms, $S$ is updated to contain arms whose upper confidence bounds for performance
are larger than the smallest lower confidence bounds in $F$.
Intuitively, if arm $i$ has been found to be feasible, and another arm $j$ has a performance
guaranteed to be lower than $i$, then we can eliminate $j$ regardless its feasibility.
Finally, we update $P$ to be the arms in $S$ whose upper confidence bounds for
performance are larger than the smallest lower confidence bounds in $S$.
Intuitively, $P$ is the set of arms in $S$ which have high performance, so our sampling in the next
the round will focus on $P$.
The benefit of forming the focus set $P$ is if $\iopt \in P$, we do not need to spend extra samples on testing the feasibility of arms in $S\backslash P$; however, if $\iopt \notin P$, all arms in $P$ would belong to $\calI$ with high probability and since we need to test their feasibility anyway, it is harmless to focus on $P$ first.
\insertAlgoOurs
\subsection{Upper bound}
We will now state our main theoretical result, which shows that
Algorithm~\ref{alg:ours} is $\delta$-correct and upper bounds its expected sample complexity on the number of epochs, which is at most four times the number of rounds.
To state this,
we define the following terms, which are related to the gaps between different performance means and feasibility means to the threshold and will be useful in Theorem~\ref{thm: upper bound},
\begin{align*}
    \Delta_{i,j} &= \mu_{i,0} - \mu_{j, 0}, \quad \quad 
    \Gamma_{i, \ell} = (\mu_{i, \ell} - \frac{1}{2})^{-2},\\
    \delta' &= \delta/(K(N+1))^{\frac{1}{4}}.
\end{align*}
\begin{theorem} \label{thm: upper bound}
Given $\delta \in (0,1)$,
for any bandit instance satisfying the assumptions in
\S PROBLEM SETUP, Algorithm~\ref{alg:ours} is $\delta$-correct.
If $\iopt\leq K$, the stopping time $\tau$ satisfies,
\begin{align}
    \nonumber \bbE[\tau] &\le \sum_{i \in \calI} 32\theta_i\log\left( \frac{91\theta_i}{\delta'} \log \frac{95\theta_i}{\delta'} \right)\\
    \nonumber &+ 292\left(\sum_{i\notin \calI} \phi_i\right) \cdot \log \frac{\sum_{i\notin \calI} \phi_i}{\delta}  \\
     \nonumber &+ \sum_{\ell=1}^N 32\Gamma_{\iopt, \ell}\log\left( \frac{91\Gamma_{\iopt, \ell}}{\delta'} \log \frac{95\Gamma_{\iopt,\ell}}{\delta'}\right)\\
     &+ G_1 + G_2 +G_3. \label{eqn: upper bound < K}
\end{align}
If $\iopt = K+1$, i.e. if no feasible arm exists, $\tau$ satisfies
\begin{align} \label{eqn: upper bound > K+1}
    \bbE[\tau] \le \sum_{i \in [K]} 32\theta_i\log\left( \frac{91\theta_i}{\delta'} \log
\frac{95\theta_i}{\delta'} \right) + G_4 +G_5.
\end{align}
The quantities $G_1,\dots,G_5$ are lower order terms which do not have a leading $\log(1/\delta)$ term or a leading $\delta$ term, which can be ignored when $\delta \to 0$, see their definition in Appendix in the extended version.
\end{theorem}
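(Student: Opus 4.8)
The overall plan is to condition on a high‑probability \emph{good event} under which every confidence interval is valid, prove $\delta$‑correctness deterministically on that event, and then bound each arm's sample count by partitioning the arms according to the mechanism that eliminates them. First I would define
\begin{align*}
\calE=\bigcap_{i,\ell,t}\Big\{\underline{\mu}_{i,\ell}(t)\le\muil\le\overline{\mu}_{i,\ell}(t)\Big\}.
\end{align*}
Because each distribution is $1$‑sub‑Gaussian and the confidence parameter in~\eqref{eqn: confidence bound} is $\delta/(K(N+1))$, the fixed‑$n$ tail bound gives $\bbP\!\big(|\widehat{\mu}_{i,\ell}(t)-\muil|>D(n,\delta/(K(N+1)))\big)\le \tfrac{\delta}{2n^4 K(N+1)}$; summing over $n\ge1$ (using $\sum_n n^{-4}<\infty$) and union‑bounding over the $K(N+1)$ distributions yields $\bbP(\calE^{\mathsf c})\le\delta$.

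On $\calE$, correctness follows from three facts: every arm added to $F$ has $\overline{\mu}_{i,\ell}(t)<\tfrac12$ for all $\ell$, hence is truly feasible; every arm added to $I$ has $\underline{\mu}_{i,\ell_t}(t)>\tfrac12$, hence is truly infeasible; and any arm removed by the performance‑elimination rule has $\overline{\mu}_{i,0}(t)\le\underline{\mu}_{j,0}(t)\le\mu_{j,0}$ for some $j\in F\subseteq\feasible$, so it is dominated by a feasible arm. Since the confidence width is strictly positive, the arm attaining $\max_{j\in S}\underline{\mu}_{j,0}(t)$ always lies in $P$; hence whenever $P=\{i\}$ is a feasible singleton, that arm attains the largest lower bound in $S$, every other survivor has $\mu_{j,0}\le\mu_{i,0}$, and the fact that $\iopt$ is never eliminated forces $i=\iopt$. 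The same reasoning shows $S=\emptyset$ only when $\feasible=\emptyset$, so the algorithm correctly returns $K+1$.

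For the sample complexity I would bound feasibility and performance pulls separately. For an infeasible arm $i$, let $\ell^\star=\argmax_\ell\muil$, so $(\mu_{i,\ell^\star}-\tfrac12)^{-2}=\theta_i$; adapting the good‑arm‑identification argument of~\citet{kano2019good}, the index $\widetilde{\mu}_{i,\ell}(t)$ in~\eqref{eqn: feasibility} forces \sampleforsafety{} to concentrate its pulls on the most‑violated constraint, so once $N_{i,\ell^\star}(t)$ reaches $O(\theta_i\log(\theta_i/\delta'))$ its lower bound crosses $\tfrac12$, $i$ enters $I$, and pulls on the remaining constraints are dominated by this count; inverting $D$ then yields the stated $32\theta_i\log\!\big(\tfrac{91\theta_i}{\delta'}\log\tfrac{95\theta_i}{\delta'}\big)$ per arm. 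For $\iopt$, feasibility verification must confirm $\mu_{\iopt,\ell}<\tfrac12$ for \emph{every} $\ell$, each costing $O(\Gamma_{\iopt,\ell}\log(\Gamma_{\iopt,\ell}/\delta'))$ pulls until $\overline{\mu}_{\iopt,\ell}(t)<\tfrac12$ removes $\ell$ from $H_{\iopt}$, giving the $\sum_\ell\Gamma_{\iopt,\ell}$ term. The performance pulls are controlled by a LUCB‑style argument following~\citet{kalyanakrishnan2012pac}: each epoch charges two pulls to $a_t,b_t$, and on $\calE$ an arm with performance gap to $\iopt$ (or, for $\iopt$ itself, to the second‑best feasible arm) cannot remain selectable once its pull count exceeds $O(\phi_i\log(\cdots))$, producing the $292\big(\sum_{i\notin\calI}\phi_i\big)\log(\cdots)$ term. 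The residual quantities $G_1,\dots,G_5$ collect the coupling costs — performance pulls spent on $\calI$‑arms while they transiently occupy $P$, and feasibility pulls spent on $\calW$‑arms before performance eliminates them — each verified to carry no leading $\log(1/\delta)$ factor.

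The hardest part is precisely this performance/feasibility coupling. Unlike classical LUCB, an arm may leave $S$ by either criterion, so the set dynamics are driven by two interacting elimination rules and the usual "number of times each arm is pulled" accounting does not close on its own. I would need a \emph{whichever‑is‑easier} guarantee in both directions: an arm in $\calI$ entering the focus set $P$ is eliminated by infeasibility before the LUCB rule would have separated it on performance (so it contributes $\theta_i$, not $\phi_i$), and symmetrically a $\calW$‑arm incurs only lower‑order feasibility cost before performance removes it. Establishing these two statements cleanly, and checking that the leftover cross terms are genuinely $\delta$‑independent at leading order so that the bound matches the lower‑bound complexity $\calH$ as $\delta\to0$, is the technical crux of the upper bound.
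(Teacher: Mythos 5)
Your high-level ingredients match the paper's proof: the clean event $\calE$ with $\bbP(\calE^{\mathsf{c}})\le\delta$, correctness argued deterministically on $\calE$, feasibility sample complexity via an adaptation of \citet{kano2019good}, and the performance term via the LUCB analysis of \citet{kalyanakrishnan2012pac}. However, there is a genuine gap, and you name it yourself: the coupling between the two elimination mechanisms is left as something you ``would need,'' not something you prove. That coupling is not a detail one can defer --- it is the main technical content of the theorem, since without it none of the per-arm charges ($\theta_i$ for $i\in\calI$, $\phi_i$ for $i\in\calW$, $\theta_{\iopt}+\phi_{\iopt}$ for the optimal arm) can be made rigorous, and the claim that the cross terms carry no leading $\log(1/\delta)$ factor remains an assertion.

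Moreover, the form of guarantee you say is needed --- that an arm in $\calI$ entering $P$ is eliminated by infeasibility \emph{before} the LUCB rule would have separated it on performance, and symmetrically for $\calW$-arms --- is not how the argument goes; no such race statement is established in the paper, and proving one would be hard, since the sample paths of the two tests are not directly comparable. The paper's mechanism is different and avoids any elimination-order claim. It decomposes the stopping time over epochs as $\tau = A + B$ according to whether $a_t$ or $b_t$ lies in $\calI$. For $A$, the epochs of each $i\in\calI$ are split by whether its feasibility has been determined: while undetermined, every epoch in which $i$ is chosen delivers exactly one feasibility sample to $i$ (the algorithm calls \sampleforsafety{} whenever the chosen arm is not in $F$), so the number of such epochs is bounded by $\tau_{F,i}$, the feasibility sample complexity of order $\theta_i\log(\theta_i/\delta')$, \emph{unconditionally} and with no comparison to the performance gap. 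Once its feasibility is determined, $i$ can still be sampled only if it was wrongly added to $F$, an event of probability at most $\delta$; those epochs are therefore charged $\delta$ times an LUCB-type quantity and land in $G_3$. Symmetrically in $B$: the singleton case $P=\{a_t\}$ with $a_t\in\calW$ requires $\iopt$ to have been wrongly dominated or eliminated (probability $<\delta$), and the two-arm case is a standard LUCB instance up to another $\delta$-weighted correction for feasible arms wrongly declared infeasible. It is precisely this determined-versus-undetermined decomposition, combined with $\delta$-correctness to make every cross term $\delta$-weighted, that yields the whichever-is-easier behavior --- as a consequence of the accounting, not as a lemma about which mechanism fires first. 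Your proposal, as written, would stall exactly where you predict it would.
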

The following corollary, which is straightforward to verify by comparing the terms in
Theorem~\ref{thm: upper bound} with a leading $\log(1/\delta)$ term to $\calH$ in~\eqref{eqn: H}, shows that our algorithm matches the lower bound as
$\delta\rightarrow 0$.

\begin{corollary} \label{corollary}
    Let $\delta\in (0,1) $. Let $\calH$ be defined as in Equation~\eqref{eqn: H}. Algorithm~\ref{alg:ours} satisfies\footnote{$\widetilde\calO$ denotes up to constants and logarithmic factors. We believe the logarithmic factors could be reduced by adopting a tighter confidence bound as in~\citep{jamieson2014lil}.},
    \begin{align*}
        \limsup_{\delta \rightarrow 0} \frac{\bbE[\tau]}{\log(1/\delta)} 
        \in \widetilde\calO(\calH)
    \end{align*}
\end{corollary}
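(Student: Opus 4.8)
The plan is to start directly from the finite-$\delta$ upper bounds of Theorem~\ref{thm: upper bound}, divide through by $\log(1/\delta)$, and let $\delta\to 0$, matching the surviving coefficients term-by-term against $\calH$ in~\eqref{eqn: H}. Since the upper bound holds for every $\delta\in(0,1)$, writing $U(\delta)$ for the right-hand side of~\eqref{eqn: upper bound < K} (resp.~\eqref{eqn: upper bound > K+1}), it suffices to bound $\limsup_{\delta\to0}U(\delta)/\log(1/\delta)$. The lower-order terms $G_1,\dots,G_5$ vanish under this operation by their defining property (no leading $\log(1/\delta)$ or $\delta$ factor), so I only need to track the three explicit sums when $\iopt\le K$ and the single sum when $\iopt=K+1$.

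For the first sum I would expand $\log\bigl(\tfrac{91\theta_i}{\delta'}\log\tfrac{95\theta_i}{\delta'}\bigr)=\log(1/\delta')+\log(91\theta_i)+\log\log\tfrac{95\theta_i}{\delta'}$ and use $\delta'=\delta/(K(N+1))^{1/4}$, so that $\log(1/\delta')=\log(1/\delta)+\tfrac14\log(K(N+1))$. Dividing by $\log(1/\delta)$, the leading summand has ratio tending to $1$, while the gap-dependent additive term $\log(91\theta_i)$ and the iterated logarithm $\log\log(\cdot)$ both divide away to $0$ as $\delta\to0$; hence $\sum_{i\in\calI}32\theta_i\log(\cdots)$ contributes exactly $32\sum_{i\in\calI}\theta_i$ in the limit. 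The third sum has identical form and contributes $32\sum_{\ell=1}^N\Gamma_{\iopt,\ell}$, which equals $32\theta_{\iopt}$ because $\iopt$ is feasible and $\theta_{\iopt}=\sum_{\ell=1}^N\Gamma_{\iopt,\ell}$ by~\eqref{eqn:thetai}. For the second sum I expand $\log\tfrac{\sum_{i\notin\calI}\phi_i}{\delta}=\log(1/\delta)+\log\bigl(\sum_{i\notin\calI}\phi_i\bigr)$, so after dividing by $\log(1/\delta)$ it contributes $292\sum_{i\notin\calI}\phi_i$ in the limit.

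It then remains to assemble these coefficients and dominate them by a constant multiple of $\calH$. Using $[K]=\calI\cup\calW\cup\{\iopt\}$, and noting that $\phi_i=\infty$ only for $i<\iopt$, all of which lie in $\calI$, every $i\notin\calI$ has finite $\phi_i$ and $\{i\notin\calI\}=\calW\cup\{\iopt\}$, giving $\sum_{i\notin\calI}\phi_i=\sum_{i\in\calW}\phi_i+\phi_{\iopt}$. Combining the three contributions, $\limsup_{\delta\to0}\bbE[\tau]/\log(1/\delta)\le 32\sum_{i\in\calI}\theta_i+292\bigl(\sum_{i\in\calW}\phi_i+\phi_{\iopt}\bigr)+32\theta_{\iopt}$, which is at most $292\bigl(\sum_{i\in\calI}\theta_i+\sum_{i\in\calW}\phi_i+\theta_{\iopt}+\phi_{\iopt}\bigr)=292\calH$, since $\calH_i$ equals $\theta_i$ on $\calI$, $\phi_i$ on $\calW$, and $\theta_{\iopt}+\phi_{\iopt}$ at $\iopt$. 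The case $\iopt=K+1$ is simpler: every arm lies in $\calI$ with $\calH_i=\theta_i$, only $\sum_{i\in[K]}32\theta_i\log(\cdots)$ survives, and it contributes $32\sum_{i\in[K]}\theta_i=32\calH$. Either way the limit is a finite constant times $\calH$, giving the claimed membership in $\widetilde\calO(\calH)$.

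I do not anticipate a genuine obstacle, as the argument is essentially bookkeeping; the one point requiring care is the asymptotic negligibility analysis. Specifically, the nested logarithms $\log\log(\cdot/\delta')$ and the gap-dependent additive constants $\log(\theta_i)$, $\log(\Gamma_{\iopt,\ell})$, $\log(\sum\phi_i)$ must be cleanly separated from the genuine $\log(1/\delta)$ factor before passing to the limit, and the constant-factor discrepancy between $\delta$ and $\delta'$ must be verified to be immaterial, since it merely shifts $\log(1/\delta)$ by the additive constant $\tfrac14\log(K(N+1))$. These are precisely the logarithmic factors that are absorbed into $\widetilde\calO$.
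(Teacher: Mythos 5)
Your proposal is correct and follows essentially the same route as the paper, which proves the corollary by exactly this comparison: dividing the bounds of Theorem~\ref{thm: upper bound} by $\log(1/\delta)$, discarding $G_1,\dots,G_5$ and the $\log\log$ and gap-dependent additive terms as $o(\log(1/\delta))$, and matching the surviving coefficients $32\theta_i$, $292\phi_i$, and $32\Gamma_{\iopt,\ell}$ against $\calH_i$ over the partition $[K]=\calI\cup\calW\cup\{\iopt\}$. Your explicit bookkeeping (including the identification $\sum_{\ell}\Gamma_{\iopt,\ell}=\theta_{\iopt}$ and the observation that $\phi_i<\infty$ for all $i\notin\calI$) simply fills in the details the paper calls ``straightforward to verify,'' and the only point of ambiguity --- whether $\delta'$ is $\delta/(K(N+1))^{1/4}$ or $(\delta/(K(N+1)))^{1/4}$ --- changes the limiting constant by a factor of $4$ at most, which is absorbed into $\widetilde\calO$.
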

The upper bound in Theorem~\ref{thm: upper bound} has a very similar structure to the lower bound in
Theorem~\ref{thm: lower bound}. It indicates that Algorithm~\ref{alg:ours} finds the easiest way to
eliminate all non-optimal arms either by feasibility or worse performance. 
Corollary~\ref{corollary} shows that Algorithm~\ref{alg:ours} achieves the optimal sample complexity when $\delta \to 0$.

\textbf{Gap between the lower and upper bound when $\delta \nrightarrow 0$:}
It is easy to verify that the sample complexity of Algorithm~\ref{alg:ours} matches the lower bound in Theorem~\ref{thm: lower bound} when $N=1$ for any $\delta \in (0,1)$. However, when $N > 1$, Algorithm~\ref{alg:ours} can still match the lower bound for any $\delta$ on all arms except for arms in $\calI$. This mismatch is due to that Theorem~\ref{thm: lower bound} can only be achieved by an oracle who already knows the structure and the means of all arms and samples just to \emph{verify} that $\iopt$ is the optimal arm. 
Theorem~\ref{thm: lower bound} suggests that we should only sample the constraint that has the highest mean for arms in $\calI$.
However, since we assume the decision-maker does not have all the additional information, it is expected to invest in samples on other constraints (not the highest one) for \emph{exploration}. This also explains why in Thoerem~\ref{thm: upper bound}, we have the lower order terms ($G_1, \cdots, G_5$), as they account for the \emph{cost for exploration}. 
Previous works~\citep{simchowitz2017simulator, karnin2016verification} have discussed the difference between \emph{verification} and \emph{exploration} but their tools cannot be directly applied to our problem due to the definition of feasibility. \citet{kaufmann2018sequential} provides a tighter lower bound when the distributions of the arms come from an exponential family, however, it still has a large gap between their upper bound. It remains an open question to lower bound the sample complexity any algorithm need to spend on the cost for \emph{exploration}.\\
\textbf{Proof Sketch of Theorem~\ref{thm: upper bound}:}\\
Our key contribution in the upper bound is to take a multi-level decomposition on the stopping time $\tau$ based on different combinations of the two elimination criteria and different selection rules in Algorithm~\ref{alg:ours}. The decomposition also helps us to bound the cost of \emph{exploration} for the arms in $\calI$. 
The full proofs of Theorem~\ref{thm: upper bound} and Corollary~\ref{corollary} are in the
Appendix. Here we provide the proof sketch for Theorem~\ref{thm: upper bound}. 
Based on the elimination criteria,
we first decompose $\tau$ into two parts based on whether $a_t$ and $b_t$ are in
$\calI$, which is $\tau = $
\begin{align*}
    &\underbrace{\sum_{t=1}^\infty \msi[t\le\tau, a_t \text{ or }b_t \in \calI]}_{A_1} +
\underbrace{\sum_{t=1}^\infty \msi[t\le\tau, a_t \text{ and }b_t \notin \calI]}_{A_2}.
\end{align*}

For $A_1$, we decompose each arm $i \in \calI$ into two cases based on whether
$i$'s feasibility has been determined or not. When it is not determined, we upper-bound the case by the sample complexity to determine its feasibility. When determined, it is still chosen indicating it was incorrectly identified as a feasible arm, which will only happen with probability less than $\delta$ since Algorithm~\ref{alg:ours} is $\delta$-correct. We upper bound it by $\delta$ times the sample complexity to differentiate the performance of each
arm in $\calI$ with all other arms.

For $A_2$, we decompose it into two cases where $b_t \in P$ or $b_t \notin P$. $b_t\notin P$ indicates $P = \{a_t\}$ is a singleton. $P = \{\iopt\}$ can be upper bounded by the sample complexity to determine arm $\iopt$'s feasibility.
$P = \{a_t\}$ for $a_t \in \calW$ will only happen with probability
less than $\delta$, since this indicates an arm in $\calW$ is wrongly identified as better than $\iopt$ or arm $\iopt$ has been wrongly eliminated. Second, $b_t \in P$ can be decomposed to whether arm $a_t$'s and/or $b_t$'s feasibility has been determined and can be bounded similarly.
In the case that the decision-maker
chooses two arms $a_t$ and $b_t$ and the arms in $\calW$ and arm $\iopt$ are not wrongly identified
as infeasible arms, it is similar to a standard BAI problem.

\section{EXPERIMENTS}
\label{sec:experiments}

\begin{figure*}[ht!]
    \centering
    \begin{subfigure}[b]{0.32\textwidth}
        \centering
        \includegraphics[width = \textwidth]{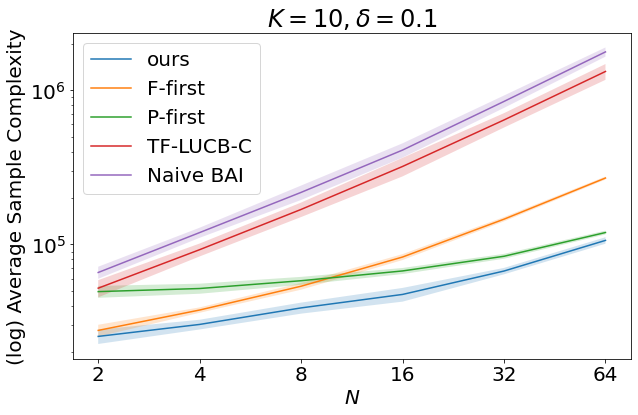}
    \end{subfigure}
    \begin{subfigure}[b]{0.32\textwidth}
        \centering
        \includegraphics[width = \textwidth]{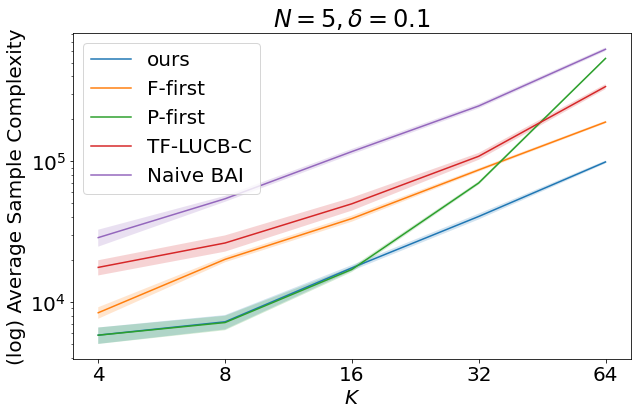}
    \end{subfigure}
    \begin{subfigure}[b]{0.32\textwidth}
        \centering
        \includegraphics[width = \textwidth]{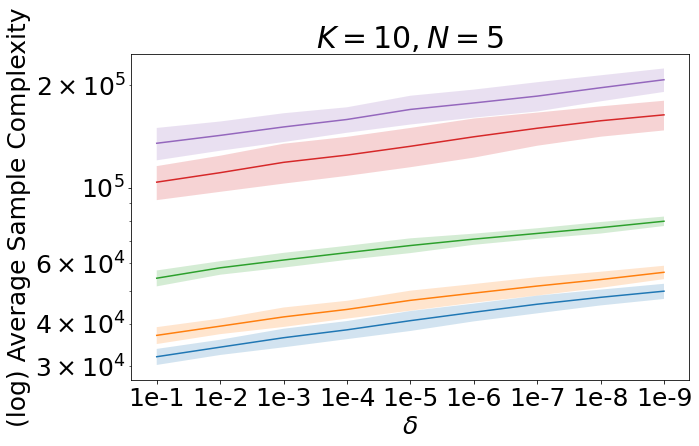}
    \end{subfigure}
    \caption{Results for Experiment 2. Results are averaged over $10$ runs and the error bars are standard deviations.
}
    \label{fig:experiment 2}
\end{figure*}

We now empirically compare our algorithm against the following four methods. For all approaches, we use the same confidence bound as in Equation~\eqref{eqn: confidence bound}.

\begin{enumerate}
    \item Feasibility-First (F-first): 
for each arm $i$, we first use the subroutine \sampleforsafety{}, which samples one constraint at each round, to determine whether it is feasible. 
Then, we apply LUCB algorithm~\cite{kalyanakrishnan2012pac} to find the best arm in the feasible set.

    \item Performance-First (P-first): We first run LUCB algorithm~\cite{kalyanakrishnan2012pac} on all
arms' performance and then test the best arm's feasibility using the subroutine \sampleforsafety{}. If the best arm is feasible, it stops, otherwise it
eliminates this arm and repeats the same procedure on the remaining arms. 
    \item TF-LUCB-C~\citep{katz2019top}: This procedure was developed for feasible BAI, but assume the performance and feasibility constraints can be sampled together.
We implement the most natural adaptation of it to our setting: pull all
performance and feasibility distributions together using $N+1$ samples.
    \item Naive BAI (Naive): It adapts from the Racing algorithm~\citep{even2002pac}, which is optimal for BAI.  On each round, it eliminates the arms that are considered
infeasible or have lower performance than arms that are already considered feasible. It
repeatedly tests the performance and all feasibility constraints of all survival arms. It stops when
only one feasible arm is left or no arm is considered feasible. 
\end{enumerate}

\textbf{Experiment 1:}
We conduct a series of experiments to demonstrate our algorithm can naturally adapt to
the difficulty of the problem.
We consider a setting where $K=5$ and $N=3$ and all distributions to be Gaussian with variance 1. We set $\delta = 0.1$ and change the means of these distributions to construct three types of problem instances. 
The settings are as follows:
\begin{table}[t]
\centering
\begin{tabular}{|l|l|l|l|l|l|}
\hline
    & ours & F-first & P-first & TF-LUCB-C & Naive \\ \hline
a & 1.0  & 0.58    & 3.09    & 1.77      & 1.81      \\ \hline
b & 1.0     & 3.13       &   0.84      &  1.98         &     4.75      \\ \hline
c & 1.0     &   4.00      &   3.47      &    2.78      &     4.06      \\ \hline
\end{tabular}
\caption{Results for Experiment 1: number of samples required relative to Algrithm~\ref{alg:ours}, averaged over 10 runs. The standard deviations are reported in the Appendix.}
\label{tbl:experiment 1}
\end{table}

\begin{enumerate}
\item
$\iopt=5$ and all other arms have a higher reward but are infeasible.
The performance means of all arms are linearly spaced in $[0, 1]$ and the means of the feasibility
distributions are $[0.75, 0.25, 0.25]$ for infeasible arms and $[0.25]^3$ for $\iopt$.
In this setting, we expect $F$-first to perform well and $P$-first to perform poorly because $P$-first will spend unnecessary samples on comparing the performances between each arm while the $F$-first algorithm directly eliminates the non-optimal arms by infeasibility.
\item $\iopt = 1$ and all other arms are feasible but have a lower reward. For all arms, the performance means are linearly spaced in $[1,0]$ and the feasibility constraints are $[0.4]^3$. 
\item We consider a problem between the two extremes and contains multiple types of suboptimal arms. We set $\iopt = 2$ and only arms $2$ and $5$ are feasible. The performance means are $[1, 0.9, 0.5, 0.25, 0]^T$. The feasibility constraints are: $[0.65, 0.4, 0.4]^T$ for arms $1,3,4$; $[0.3, 0.4, 0.4]^T$ for arm $2$; $[0.45, 0.45, 0.45]^T$ for arm $5$. 
\end{enumerate}

\textbf{Synopsis for Experiment 1:} 
Recall the key challenges, we expect F-first to perform the best on instance (a) but poorly on instance (b); and the opposite for P-first. 
The results in Table~\ref{tbl:experiment 1} match our expectations.
Note that Algorithm~\ref{alg:ours} only performs slightly worse than the best algorithm on instances (a) and (b). Moreover, on instance (c) which contains multiple types of suboptimal arms, Algorithm~\ref{alg:ours} outperforms all other methods, showing that it can adapt to the difficulty of the problem. 

\textbf{Experiment 2:}
We compare the five methods when the number of arms, constraints, and $\delta$ change. We assume the distributions are Gaussian with variance 1. 
First, we set $K=10$, $\delta = 0.1$, and increase $N$ from $2$ to $64$. The performance means of all arms are linearly spaced in $[2,0]$. Arms 4--7 are feasible arms and have feasibility constraints means to be $[0.25]^N$. The other arms are infeasible arms and have feasibility constraints to be $[0.75]\times [0.25]^{N-1}$.
Second, we set $N=5$, $\delta = 0.1$, and increase $K$ from $4$ to $64$. The performance means of all arms are linearly spaced in $[10,0]$.  Arms indexes from $\lfloor\frac{K}{3}\rfloor$ to $\lfloor\frac{2K}{3}\rfloor$ are feasible arms whose feasibility constraints means are set to $[0.25]^N$. The other arms are infeasible arms and have feasibility constraint means to be $[0.75]\times [0.25]^{N-1}$.
Third, we set $K = 10$, $N = 5$, and decrease $\delta$ from $10^{-1}$ to $10^{-9}$. The mean values are set to the same when we change $K$. 
The results are given in
Fig.~\ref{fig:experiment 2}.
We see that our method achieves
the best performance when compared to all other methods. 

\begin{figure}[t]
    \centering
    \includegraphics[width = 0.8\columnwidth]{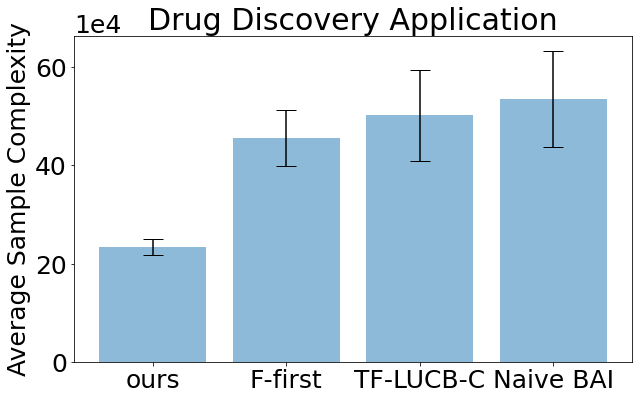}
    \caption{Drug discovery application: we exclude P-first due to poor performance. Results are averaged over 10 runs and the error bars are the standard deviations.}
    \label{fig:drug discovery}
    \vspace{-0.2in}
\end{figure}

\textbf{Drug Discovery:} We investigate finding the most effective drug satisfying different feasibility constraints. The data comes from Tables 2 \& 3 in~\citet{genovese2013efficacy}. Each arm corresponds to a dosage level (in mg): [25, 75, 150, 300, Placebo] and has $3$ attributes: probability of being effective $\mu_{i,0}$, probability of any adverse event $\mu_{i,1}$, and probability of causing an infection or infestation $\mu_{i,2}$. The threshold for constraint 1 is 0.5 and for constraint 2 is 0.25. We manually increase 0.25 to all rewards received from constraint 2 to make the thresholds equal. The means for different arms are $(.34, .519, .259)^T$, $(.469, .612, .184)^T$, $(.465, .465, .209)^T$, $(.537, .61, .293)^T$, and $(.36, .58, .16)^T$. We use Gaussian distribution with variance $1$. Fig.~\ref{fig:drug discovery} shows that our algorithm outperforms all other methods.

In all the experiments, all methods can find $\iopt$ correctly. Moreover, Algorithm~\ref{alg:ours}, P-first, and F-first, which all use \sampleforsafety{}, generally perform better than the other two methods which sample all constraints simultaneously, demonstrating that we should test each constraint individually. Since different confidence bounds will affect the performance of the algorithm~\citep{jamieson2014lil}, for a fair comparison, we use \eqref{eqn: confidence bound} on all methods. In the appendix, To see the influence of the confidence bounds on the methods, we compare Algorithm~\ref{alg:ours} and TF-LUCB-C with two different confidence bounds to see the influence of different confidence bounds. It shows that our algorithm outperforms TF-LUCB-C in both bounds and even when only TF-LUCB-C uses a tighter confidence bound.

\section{CONCLUSION}
In this work, we consider a new formalism for best arm identification (BAI) in the fixed confidence
setting when the optimal arm needs to satisfy multiple given feasibility constraints, and the
feasibility constraints can be tested separately from the performance.
We quantify the complexity of this problem by considering the relative difficulty of deciding if
an arm, that is not the optimal feasible arm, is easier to eliminate via its performance or its
feasibility.
We propose a $\delta$-correct algorithm and
upper bound its expected sample complexity, showing that it is asymptotically optimal compared to
the lower bound. Finally, we conduct empirical experiments showing our algorithm outperforms
other natural baselines for this problem.

\bibliography{example_paper}

@article{bechhofer1958sequential,
  title={A sequential multiple-decision procedure for selecting the best one of several normal populations with a common unknown variance, and its use with various experimental designs},
  author={Bechhofer, Robert E},
  journal={Biometrics},
  volume={14},
  number={3},
  pages={408--429},
  year={1958},
  publisher={JSTOR}
}

@ARTICLE{hou2023,
  author={Hou, Yunlong and Tan, Vincent Y. F. and Zhong, Zixin},
  journal={IEEE Transactions on Information Theory}, 
  title={Almost Optimal Variance-Constrained Best Arm Identification}, 
  year={2023},
  volume={69},
  number={4},
  pages={2603-2634},
  doi={10.1109/TIT.2022.3222231}}

@article{paulson1964sequential,
  title={A sequential procedure for selecting the population with the largest mean from k normal populations},
  author={Paulson, Edward},
  journal={The Annals of Mathematical Statistics},
  pages={174--180},
  year={1964},
  publisher={JSTOR}
}

@inproceedings{even2002pac,
  title={PAC bounds for multi-armed bandit and Markov decision processes},
  author={Even-Dar, Eyal and Mannor, Shie and Mansour, Yishay},
  booktitle={International Conference on Computational Learning Theory},
  pages={255--270},
  year={2002},
  organization={Springer}
}

@inproceedings{karnin2013almost,
  title={Almost optimal exploration in multi-armed bandits},
  author={Karnin, Zohar and Koren, Tomer and Somekh, Oren},
  booktitle={International Conference on Machine Learning},
  pages={1238--1246},
  year={2013},
  organization={PMLR}
}

@inproceedings{bubeck2009pure,
  title={Pure exploration in multi-armed bandits problems},
  author={Bubeck, S{\'e}bastien and Munos, R{\'e}mi and Stoltz, Gilles},
  booktitle={International conference on Algorithmic learning theory},
  pages={23--37},
  year={2009},
  organization={Springer}
}

@inproceedings{kalyanakrishnan2012pac,
  title={PAC subset selection in stochastic multi-armed bandits.},
  author={Kalyanakrishnan, Shivaram and Tewari, Ambuj and Auer, Peter and Stone, Peter},
  booktitle={ICML},
  volume={12},
  pages={655--662},
  year={2012}
}

@inproceedings{jamieson2014lil,
  title={lil’ucb: An optimal exploration algorithm for multi-armed bandits},
  author={Jamieson, Kevin and Malloy, Matthew and Nowak, Robert and Bubeck, S{\'e}bastien},
  booktitle={Conference on Learning Theory},
  pages={423--439},
  year={2014},
  organization={PMLR}
}

@article{mannor2004sample,
  title={The sample complexity of exploration in the multi-armed bandit problem},
  author={Mannor, Shie and Tsitsiklis, John N},
  journal={Journal of Machine Learning Research},
  volume={5},
  number={Jun},
  pages={623--648},
  year={2004}
}

@article{kaufmann2016complexity,
  title={On the complexity of best-arm identification in multi-armed bandit models},
  author={Kaufmann, Emilie and Capp{\'e}, Olivier and Garivier, Aur{\'e}lien},
  journal={The Journal of Machine Learning Research},
  volume={17},
  number={1},
  pages={1--42},
  year={2016},
  publisher={JMLR. org}
}

@inproceedings{chen2017nearly,
  title={Nearly optimal sampling algorithms for combinatorial pure exploration},
  author={Chen, Lijie and Gupta, Anupam and Li, Jian and Qiao, Mingda and Wang, Ruosong},
  booktitle={Conference on Learning Theory},
  pages={482--534},
  year={2017},
  organization={PMLR}
}

@inproceedings{locatelli2016optimal,
  title={An optimal algorithm for the thresholding bandit problem},
  author={Locatelli, Andrea and Gutzeit, Maurilio and Carpentier, Alexandra},
  booktitle={International Conference on Machine Learning},
  pages={1690--1698},
  year={2016},
  organization={PMLR}
}

@inproceedings{mason2020findall,
 author = {Mason, Blake and Jain, Lalit and Tripathy, Ardhendu and Nowak, Robert},
 booktitle = {Advances in Neural Information Processing Systems},
 editor = {H. Larochelle and M. Ranzato and R. Hadsell and M.F. Balcan and H. Lin},
 pages = {20707--20718},
 publisher = {Curran Associates, Inc.},
 title = {Finding All \textbackslash epsilon-Good Arms in Stochastic Bandits},
 url = {https://proceedings.neurips.cc/paper_files/paper/2020/file/edf0320adc8658b25ca26be5351b6c4a-Paper.pdf},
 volume = {33},
 year = {2020}
}

@inproceedings{wang2022best,
  title={Best Arm Identification with Safety Constraints},
  author={Wang, Zhenlin and Wagenmaker, Andrew J and Jamieson, Kevin},
  booktitle={International Conference on Artificial Intelligence and Statistics},
  pages={9114--9146},
  year={2022},
  organization={PMLR}
}

@inproceedings{katz2019top,
  title={Top feasible arm identification},
  author={Katz-Samuels, Julian and Scott, Clayton},
  booktitle={The 22nd International Conference on Artificial Intelligence and Statistics},
  pages={1593--1601},
  year={2019},
  organization={PMLR}
}

@inproceedings{gardner2014bayesian,
  title={Bayesian optimization with inequality constraints.},
  author={Gardner, Jacob R and Kusner, Matt J and Xu, Zhixiang Eddie and Weinberger, Kilian Q and Cunningham, John P},
  booktitle={ICML},
  volume={2014},
  pages={937--945},
  year={2014}
}

@article{hernandez2016,
author = {Hern\'{a}ndez-Lobato, Jos\'{e} Miguel and Gelbart, Michael A. and Adams, Ryan P. and Hoffman, Matthew W. and Ghahramani, Zoubin},
title = {A General Framework for Constrained Bayesian Optimization Using Information-Based Search},
year = {2016},
issue_date = {January 2016},
publisher = {JMLR.org},
volume = {17},
number = {1},
issn = {1532-4435},
journal = {J. Mach. Learn. Res.},
month = {jan},
pages = {5549–5601},
numpages = {53}
}

@article{ungredda2021bayesian,
  title={Bayesian optimisation for constrained problems},
  author={Ungredda, Juan and Branke, Juergen},
  journal={arXiv preprint arXiv:2105.13245},
  year={2021}
}

@inproceedings{eriksson2021scalable,
  title={Scalable constrained bayesian optimization},
  author={Eriksson, David and Poloczek, Matthias},
  booktitle={International Conference on Artificial Intelligence and Statistics},
  pages={730--738},
  year={2021},
  organization={PMLR}
}

@article{berkenkamp2021bayesian,
  title={Bayesian optimization with safety constraints: safe and automatic parameter tuning in robotics},
  author={Berkenkamp, Felix and Krause, Andreas and Schoellig, Angela P},
  journal={Machine Learning},
  pages={1--35},
  year={2021},
  publisher={Springer}
}

@inproceedings{kirschner2019adaptive,
  title={Adaptive and safe Bayesian optimization in high dimensions via one-dimensional subspaces},
  author={Kirschner, Johannes and Mutny, Mojmir and Hiller, Nicole and Ischebeck, Rasmus and Krause, Andreas},
  booktitle={International Conference on Machine Learning},
  pages={3429--3438},
  year={2019},
  organization={PMLR}
}

@article{kirschner2022tuning,
  title={Tuning Particle Accelerators with Safety Constraints using Bayesian Optimization},
  author={Kirschner, Johannes and Mutn{\`y}, Mojmir and Krause, Andreas and de Portugal, Jaime Coello and Hiller, Nicole and Snuverink, Jochem},
  journal={arXiv preprint arXiv:2203.13968},
  year={2022}
}

@article{thompson33sampling,
  author = {W. R. Thompson},
  title= {{On the Likelihood that one Unknown Probability Exceeds Another in
          View of the Evidence of Two Samples}},
  journal={Biometrika},
  year = 1933
}

@Article{lai85bandits,
  author =       "Lai, T. L. and Robbins, H.",
  title = {{Asymptotically Efficient Adaptive Allocation Rules}},
  journal =      "Adv. in App. Math.",
  year =         "1985",
}

@article{koenig1985procedure,
  title={A procedure for selecting a subset of size m containing the l best of k independent normal populations, with applications to simulation},
  author={Koenig, Lloyd W and Law, Averill M},
  journal={Communications in Statistics-Simulation and Computation},
  volume={14},
  number={3},
  pages={719--734},
  year={1985},
  publisher={Taylor \& Francis}
}

@inproceedings{schmidt2006integrating,
  title={Integrating techniques from statistical ranking into evolutionary algorithms},
  author={Schmidt, Christian and Branke, J{\"u}rgen and Chick, Stephen E},
  booktitle={Applications of Evolutionary Computing},
  pages={752--763},
  year={2006},
  organization={Springer}
}

@inproceedings{zhou2014optimal,
  title={Optimal PAC multiple arm identification with applications to crowdsourcing},
  author={Zhou, Yuan and Chen, Xi and Li, Jian},
  booktitle={International Conference on Machine Learning},
  pages={217--225},
  year={2014},
  organization={PMLR}
}

@inproceedings{jun2016top,
  title={Top arm identification in multi-armed bandits with batch arm pulls},
  author={Jun, Kwang-Sung and Jamieson, Kevin and Nowak, Robert and Zhu, Xiaojin},
  booktitle={Artificial Intelligence and Statistics},
  pages={139--148},
  year={2016},
  organization={PMLR}
}

@inproceedings{van2017automatic,
  title={Automatic database management system tuning through large-scale machine learning},
  author={Van Aken, Dana and Pavlo, Andrew and Gordon, Geoffrey J and Zhang, Bohan},
  booktitle={Proceedings of the 2017 ACM international conference on management of data},
  pages={1009--1024},
  year={2017}
}

@inproceedings{katz2018feasible,
  title={Feasible arm identification},
  author={Katz-Samuels, Julian and Scott, Clay},
  booktitle={International Conference on Machine Learning},
  pages={2535--2543},
  year={2018},
  organization={PMLR}
}

@article{kano2019good,
  title={Good arm identification via bandit feedback},
  author={Kano, Hideaki and Honda, Junya and Sakamaki, Kentaro and Matsuura, Kentaro and Nakamura, Atsuyoshi and Sugiyama, Masashi},
  journal={Machine Learning},
  volume={108},
  pages={721--745},
  year={2019},
  publisher={Springer}
}

@inproceedings{katz2020true,
  title={The true sample complexity of identifying good arms},
  author={Katz-Samuels, Julian and Jamieson, Kevin},
  booktitle={International Conference on Artificial Intelligence and Statistics},
  pages={1781--1791},
  year={2020},
  organization={PMLR}
}

@article{kaufmann2018sequential,
  title={Sequential test for the lowest mean: From Thompson to Murphy sampling},
  author={Kaufmann, Emilie and Koolen, Wouter M and Garivier, Aur{\'e}lien},
  journal={Advances in Neural Information Processing Systems},
  volume={31},
  year={2018}
}

@article{degenne2019pure,
  title={Pure exploration with multiple correct answers},
  author={Degenne, R{\'e}my and Koolen, Wouter M},
  journal={Advances in Neural Information Processing Systems},
  volume={32},
  year={2019}
}

@inproceedings{kanellis2020too,
  title={Too many knobs to tune? towards faster database tuning by pre-selecting important knobs},
  author={Kanellis, Konstantinos and Alagappan, Ramnatthan and Venkataraman, Shivaram},
  booktitle={Workshop on Hot Topics in Storage and File Systems},
  year={2020}
}

@inproceedings{paria2020flexible,
  title={A flexible framework for multi-objective bayesian optimization using random scalarizations},
  author={Paria, Biswajit and Kandasamy, Kirthevasan and P{\'o}czos, Barnab{\'a}s},
  booktitle={Uncertainty in Artificial Intelligence},
  pages={766--776},
  year={2020},
  organization={PMLR}
}

@inproceedings{hernandez2016predictive,
  title={Predictive entropy search for multi-objective bayesian optimization},
  author={Hern{\'a}ndez-Lobato, Daniel and Hernandez-Lobato, Jose and Shah, Amar and Adams, Ryan},
  booktitle={International conference on machine learning},
  pages={1492--1501},
  year={2016},
  organization={PMLR}
}

@article{thall1998strategy,
  title={A strategy for dose-finding and safety monitoring based on efficacy and adverse outcomes in phase I/II clinical trials},
  author={Thall, Peter F and Russell, Kathy E},
  journal={Biometrics},
  pages={251--264},
  year={1998},
  publisher={JSTOR}
}

@article{karnin2016verification,
  title={Verification based solution for structured mab problems},
  author={Karnin, Zohar S},
  journal={Advances in Neural Information Processing Systems},
  volume={29},
  year={2016}
}

@inproceedings{simchowitz2017simulator,
  title={The simulator: Understanding adaptive sampling in the moderate-confidence regime},
  author={Simchowitz, Max and Jamieson, Kevin and Recht, Benjamin},
  booktitle={Conference on Learning Theory},
  pages={1794--1834},
  year={2017},
  organization={PMLR}
}

@article{genovese2013efficacy,
  title={Efficacy and safety of secukinumab in patients with rheumatoid arthritis: a phase II, dose-finding, double-blind, randomised, placebo controlled study},
  author={Genovese, Mark C and Durez, Patrick and Richards, Hanno B and Supronik, Jerzy and Dokoupilova, Eva and Mazurov, Vadim and Aelion, Jacob A and Lee, Sang-Heon and Codding, Christine E and Kellner, Herbert and others},
  journal={Annals of the rheumatic diseases},
  volume={72},
  number={6},
  pages={863--869},
  year={2013},
  publisher={BMJ Publishing Group Ltd}
}

\newpage
\onecolumn
\section*{Appendix}


\section{Notation Table}
Here we summarize all the notations that appear in the paper and the Appendix. 
\begin{table}[H]\caption{Notation Table}
\begin{center}
\begin{tabular}{r c p{10cm} }
\toprule
$K$  && number of arms \\
$N$ && number of feasibility constraints \\
$\delta$ && failure rate\\
$\feasible$ && $\feasible = \left\{i\in [K]\,;\; \muil < \frac{1}{2}\; \text{ for all } \ell\in [N]\right\}$, set of feasible arms \\
$\iopt$ && $\iopt = \begin{cases}
        \arg\max_{i\in\feasible} \,\muio , &\text{if } \feasible \neq \emptyset \\
        K+1, &\text{if } \feasible = \emptyset
    \end{cases}$, the optimal arm\\
$\mu_{i, 0}$ && expected reward of arm $i$ on performance (unknown) \\
$\mu_{i, \ell}$ && expected reward of arm $i$ on feasibility constraint $\ell$ (unknown)\\

$\theta_i$ && complexity of feasibility for arm $i$, see Equation~\eqref{eqn:thetai}\\
$\phi_i$ && complexity of performance for arm $i$, see Equation~\eqref{eqn:phii} \\
$\calI$ && $\calI = \{1,\dots,\iopt-1\} \cup \Big( \{\iopt+1,\dots,K\} \cap \{i \in \feasiblec:  \theta_i <
\phi_i\} \Big)$, set of arms that should be eliminated by feasibility\\
$\calW$ && $\calW = \{\iopt+1,\dots,K\} \cap \Big( \feasible \cup 
 \{i \in \feasiblec:\,  \theta_i \geq \phi_i\}
\Big)$, set of arms that should be eliminated by performance \\
$H$ && complexity of the problem, see Equation~\eqref{eqn: H}\\

$t$ && number of epoch of Algorithm~\ref{alg:ours}\\
$\widehat{\mu}_{i, \ell}(t)$ && empirical mean of the rewards of arm $i$'s $\ell^{\rm th}$ distribution at epoch $t$, see Equation~\eqref{eqn: confidence bound} \\
$\widehat{\mu}_{i,\ell,s}$ && empirical mean of the rewards of  arm $i$'s $\ell^{\rm th}$ distribution when $(i,\ell)$ has been sampled $s$ times\\
$N_{i,\ell}(t)$ && $N_{i,\ell}(t) = \sum_{s=1}^t \msi[(i,\ell) \mbox{ is sampled on epoch } s ]$, number of times arm $i$'s performance $(\ell=0)$ or feasibility constraint $\ell$ has been sampled up to at epoch $t$ \\
$M_i(t)$ && $M_i(t) = \sum_{\ell=1}^N N_{i,\ell}(t)$, total number of times we have sampled arm $i$'s feasibility constraints up to epoch $t$\\
$D(n,\delta)$ && $D(n,\delta) = \sqrt{\frac{2}{n}\log\frac{4n^4}{\delta}}$, confidence bound\\
$\overline{\mu}_{i,\ell}(t)$ && upper confidence bound of arm $i$'s $\ell^{\rm th}$ distribution at epoch $t$ , see Equation~\eqref{eqn: confidence bound}\\
$\underline{\mu}_{i,\ell}(t)$ && lower confidence bound of arm $i$'s $\ell^{\rm th}$ distribution at epoch $t$, see Equation~\eqref{eqn: confidence bound}\\
$\widetilde{\mu}_{i,\ell}(t)$ && sampling rule sample feasibility constraint $\ell$ of arm $i$ at epoch $t$, see Equation~\eqref{eqn: feasibility}  \\

$\tau$ && epoch that the algorithm stops  \\
$\Delta_{i,j}$ && $\Delta_{i,j} = (\mu_{i,0} - \mu_{j, 0})/4$, a (scaled) gap between the performance of the $i^{\rm th}$ and $j^{\rm th}$ arms.\\
$\Gamma_{i,\ell}$ && $\Gamma_{i,\ell} = (\mu_{i,\ell} - 1/2)^{-2}$, inverse squared gap between the mean value and the threshold 
1/2 of arm $i$'s feasibility constraint $\ell$ \\
$\delta'$ && $(\delta/(K(N+1)))^{\frac{1}{4}}$\\
$\Delta_{ij}$ && $\Delta_{ij} = \mu_{i,0} - \mu_{j, 0}$, gap between the performance of $i^{\rm th}$ and $j^{\rm th}$ arms.\\

\bottomrule
\end{tabular}
\end{center}
\end{table}

\newpage
Next, we tabulate some notations used in Section~\ref{subsec:feasibility}. The square parenthesis indicates what happens when an arm's feasibility has been sampled up to $s$ times and not $s$ epochs.
For example, $N_{i,\ell}[s]$ denotes the number of times feasibility constraint $\ell$ of arm $i$ has been sampled when arm $i$'s feasibility has been sampled up to $s$ times. Its relationship to $N_{i,\ell}(t)$ can be written as
\begin{align*}
    N_{i,\ell}(t) =  N_{i,\ell}[M_i(t)]. 
\end{align*}

\begin{table}[H]\caption{Notation Table for Section~\ref{subsec:feasibility}}
\begin{center}
\begin{tabular}{r c p{10cm} }
\toprule
$\tau_{F,i}$  && number of samples needed to determine arm $i$'s feasibility \\
$h_i[s]$ && suggested constraint to sample when $\sampleforsafety$ for arm $i$ is invoked $s$ times\\
$H_i[s]$ && set of undetermined constraints of arm $i$ when $\sampleforsafety$ for arm $i$ is invoked $s$ times\\
$\widehat{\mu}_{i,\ell}[s]$ && empirical mean of arm $i$'s feasibility constraint $\ell$ when $\sampleforsafety$ for arm $i$ is invoked $s$ times \\
$N_{i,\ell}[s]$ && $N_{i,\ell}[s] = \sum_{l=1}^s \msi[h_i[l] = \ell]$, the number of times constraint $\ell$ has been sampled when $\sampleforsafety$ for arm $i$ is invoked $s$ times\\
$\widetilde{\mu}_{i,\ell}[s]$ && $\widetilde{\mu}_{i,\ell}[s] = \widehat{\mu}_{i,\ell}[s] + \sqrt{\frac{2\log s}{N_{i,\ell}[s]}}$\\
$\widetilde{\mu}_i^*[s]$ &&$\widetilde{\mu}_i^*[s] = \max_{\ell \in H_i[s]} \widetilde{\mu}_{i,\ell}[s]$\\
$\overline{\mu}_{i,\ell}[s]$ && $\overline{\mu}_{i,\ell}[s] = \widehat{\mu}_{i,\ell}[s] + \sqrt{\frac{2\log(4K(N+1)N^4_{i,\ell}[s]/\delta)}{N_{i,\ell}[s]}}$ \\
$T_i$ && $T_i = N\max_{\ell \in [K]}\lfloor n_{i,\ell} + 2\rfloor$\\
$n_{i,\ell}$ && $n_{i,\ell} = \frac{8}{(\gamma_i(\ell,1/2)/2)^2} \log \frac{8^{\frac{3}{2}} (K(N+1)/\delta)^{\frac{1}{4}} }{(\gamma_i(\ell,1/2)/2)^2} \log \frac{(8^{\frac{3}{2}}+1)(K(N+1)/\delta)^{\frac{1}{4}} }{(\gamma_i(\ell,1/2)/2)^2}.$\\
$\gamma_i(\ell,m)$&& $\gamma_i(\ell,m) = |\mu_{i,\ell} - \mu_{i,m}|$ \\
$\gamma_i(\ell, 1/2) $&& $\gamma_i(\ell, 1/2) = |\mu_{i, \ell} - 1/2|$\\
\bottomrule
\end{tabular}
\end{center}
\end{table}

\section{Lower order terms in Theorem 2}
\label{sec:lower order terms}
\begin{align*}
    G_1 &= \sum_{\ell=1}^N 8\Gamma_{\iopt, \ell} + 16, \quad G_2 = \sum_{i \in \calI} g_i, \quad G_4 = \sum_{i \in [K]}  g_i,\\
     G_3 &= \delta \Bigg(4\sum_{i \in \calI} \sum_{j\neq i}q_{i,j} + \sum_{i \in \calW} \sum_{\ell=1}^N \Big(n_{i,\ell} + 8\Gamma_{i,\ell}\Big)\\
     &\hspace{0.2in}+ 2\sum_{i,j\notin \calI} q_{i,j} \Bigg),  \quad G_5 = 4\delta\sum_{i \in [K]} \sum_{j \neq i} q_{i,j},\\
    g_i &= 8\Gamma_{i,1} + \sum_{\ell =1}^N 8\Gamma_{i,\ell}N(K(N+1))^{-\frac{1}{8}} \\
    &+ \frac{32N - 22+ 16\log\frac{32}{\left(\max\limits_{m\in[N]} \mu_{i,m}-\max\limits_{n\in[N]\backslash\{m\}} \mu_{i,n} \right)^2 }}{\left(\max\limits_{m\in[N]} \mu_{i,m}-\max\limits_{n\in[N]\backslash\{m\}} \mu_{i,n} \right)^2} \\
    &+\sum_{\ell =2}^N \left( 
    \frac{8\log N\max\limits_{\ell\in[N]} \lfloor n_{i,\ell}+2\rfloor}{\left(\max\limits_{m\in[N]}\mu_{i,m}-\mu_{i,\ell}\right)^2}  + \delta'^4 \left(n_{i,\ell} + 8\Gamma_{i,\ell} \right) \right) \\
    q_{i,j} &= \frac{128}{\Delta_{i,j}^2}\log\left(\frac{368}{\Delta_{i,j}^2\delta'} \log \frac{384}{\Delta_{i,j}^2\delta'} \right) + \delta'^4,\\
    n_{i,\ell} &= 32\Gamma_{i,\ell} \log \left(\frac{91\Gamma_{i,\ell}}{\delta'}\log\frac{95\Gamma_{i,\ell}}{\delta'} \right).
\end{align*}

\section{Proof of lower bound (Theorem~\ref{thm: lower bound})}

In this section, we will state the proof of our lower bound.

\paragraph{Preliminaries:}
We will assume Gaussian rewards for this lower bound. 
 That is,  when the decision-maker chooses the tuple $(i,\ell)$, i.e. arm $i$ and performance (if $\ell=0$) or feasibility constraint $\ell$ (if $\ell > 0$), it will receive a stochastic reward drawn from the distribution $\nu_{i,\ell} = \calN(\mu_{i,\ell}, 1)$.

The main challenge in the proof of this lower bound (with an asymptotically matching upper bound) is in defining the complexity terms in Section~\ref{sec:lowerbound}.
Our proof essentially considers different cases, and then
applies the change-of-measure lemma from~\citet{kaufmann2016complexity}. 
Although this lemma was presented for the case where each arm only contains one distribution, 
it easily generalizes to our setting where we have $K(N+1)$ distributions, as we assume no dependence between the distributions. 
For completeness, we have stated this result in Lemma~\ref{appendix: lemma: change of distribution} in Appendix~\ref{sec:usefulresults}.



\paragraph{Strategy:}
In this proof, we first consider the case when $\iopt \le K$ and handle the case $\iopt = K+1$ at the end. 
Our plan of attack is as follows. We will individually consider each arm $i\in[K]$ and 
construct 1--2 alternate instances by  perturbing the performance distribution and/or the feasibility constraint distributions. We will then apply Lemma~\ref{lem:changeofmeasure} to state a lower bound on the number of times we must pull an arm $i$.
We will consider three cases for $i$ and handle them separately:
\emph{A}: $i\in\calI$,
\emph{B}: $i\in\calW$,
and \emph{C}: $i=\iopt$.


\paragraph{\emph{Case (A):} $i\in \calI$.} Recall that $\calI = \{1, \cdots, \iopt -1\}\cup(\{\iopt +1, \cdots, K\} \cap \{i\in \feasiblec:\theta_i<\phi_i\})$, we will first consider the arms
in $\{1, \cdots, \iopt -1\}$ and then consider the arms in $(\{\iopt +1, \cdots, K\} \cap \{i\in \feasiblec:\theta_i<\phi_i\})$ as they require different constructions of the alternate instance.

\emph{Case A.1:}
First, fix an arm $i \in \{1, \cdots, \iopt -1\}$. Note that arm $i$ is an infeasible arm with better performance than arm $\iopt$ in the original instance $\nu$. Now suppose in $\nu$, arm $i$ has $m$ feasibility constraint distributions, denoted as $\ell_{[1]}, \ell_{[2]}, \cdots, \ell_{[m]}$, with mean values greater than $1/2$. We will consider the alternate instance $\nu'$ where arm $i$ becomes feasible by changing the $m$ feasibility constraints of arm $i$ to be less than $1/2$ and keeping all other distributions unchanged. In other words, letting $\alpha > 0$,
\begin{align*}
    &\forall n \in [m], \nu'_{i,\ell_{[n]}} = \calN(1/2-\alpha, 1),\\
    &\forall \ell \in \{0\}\cup[N]\backslash \{\ell_{[1]}, \cdots, \ell_{[m]}\}, \nu'_{i,\ell} = \nu_{i,\ell}, \hspace{0.3in}
    \forall j \neq i, \ell \in \{0\}\cup [N], \nu'_{j, \ell} = \nu_{j,\ell}.
\end{align*}
Recall that an algorithm will output a recommendation arm $\widehat{a}$ as the optimal arm when it stops. Let $\calA = \{\widehat{a} = \iopt\}$ (where here $\iopt$ denotes the optimal arm in $\nu$). Note that in $\nu'$, arm $i$ is feasible and has better performance than $\iopt$. It follows that arm $i$ is the optimal arm under instance $\nu'$, so for any $\delta$-correct algorithm, $\bbP_\nu(\calA) \ge 1-\delta$ and $\bbP_{\nu'}(\calA) <\delta$. In addition, using the properties of Gaussian distributions, we have that
\begin{align*}
    \forall n \in [m], \rmK\rmL(\nu_{i, \ell_{[n]}}, \nu'_{i, \ell_{[n]}}) = \frac{(1/2-\alpha-\mu_{i, \ell_{[n]}})^2}{2}.
\end{align*}
Lower bounding $d(\bbP_\nu(\calA), \bbP_{\nu'}(\calA)) \ge \log \frac{1}{2.4\delta}$ as in~\citet{kaufmann2016complexity} and letting $\alpha \to 0$, we conclude that
\begin{align*}
    \sum_{n=1}^m \bbE_{\nu} [N_{i, \ell_{[n]}}(\tau)] \cdot \frac{(1/2-\mu_{i, \ell_{[n]}})^2}{2} \ge \log \frac{1}{2.4\delta}.
\end{align*}
Now we choose $k \in [m]$ that maximizes $\frac{(1/2-\mu_{i, \ell_{[k]}})^2}{2}$ since we want to lower bound $\sum_{n=1}^m \bbE_{\nu}[N_{i, \ell_{[n]}}(\tau)]$, we have
\begin{align*}
    \sum_{n=1}^m \bbE_\nu[N_{i, \ell_{[n]}}(\tau)] \cdot \left(\max_{k \in [m]} \frac{(1/2-\mu_{i, \ell_{[k]}})^2}{2} \right) \ge \sum_{n=1}^m \bbE_{\nu} [N_{i, \ell_{[n]}}(\tau)] \cdot \frac{(1/2-\mu_{i, \ell_{[n]}})^2}{2}. 
\end{align*}
Putting these together, we have
\begin{align*}
    \sum_{n=1}^m \bbE_{\nu}[N_{i, \ell_{[n]}}(\tau)] \ge \frac{2}{\max_{k\in[m]}(1/2-\mu_{i, \ell_{[n]}})^2}\log\frac{1}{2.4\delta}.
\end{align*}
Recall the definition of $\theta_i = \frac{1}{(\max_{\ell\in[N]}\mu_{i, \ell-1/2})^2}$, see Equation~\eqref{eqn:thetai}, when $i \notin \calF$, since $i$ here is an infeasible arm in $\nu$ and $\{\ell_{[1]}, \cdots, \ell_{[m]}\}$  are the feasibility constraints of arm $i$ that are greater than $1/2$, it is easy to see that
\begin{align*}
    \sum_{n=1}^m \bbE_\nu[N_{i, \ell_{[n]}}(\tau)] \ge 2\theta_i\log\frac{1}{2.4\delta}.
\end{align*}

\emph{Case A.2:}
Second, fix an arm $i \in (\{\iopt +1, \cdots, K\} \cap \{i\in \feasiblec:\theta_i<\phi_i\})$. Note that arm $i$ is an infeasible arm with lower performance than arm $\iopt$ in the original instance $\nu$. In particular, it is easier to be eliminated by feasibility constraints since $\theta_i < \phi_i$. We follow the notation of $\{\ell_{[1]}, \cdots, \ell_{[m]}\}$ in the previous case as the feasibility constraints of arm $i$ that are greater than $1/2$. 
Here simply changing arm $i$ to be feasible is not enough to make it optimal since its performance would still be lower than arm $\iopt$. 
So we consider the alternate instance $\nu'$ where arm $i$ becomes feasible by changing the $m$ feasibility constraints of arm $i$ to be less than $1/2$ and additionally increasing the mean of arm $i$'s performance distribution to be larger than $\mu_{\iopt, 0}$, and keeping all other distributions unchanged. To be more specific, let $\alpha > 0$,
\begin{align*}
    \nu'_{i, 0} = \calN(\mu_{\iopt, 0} + \alpha, 1),
    \hspace{0.3in}
    &\forall n \in [m], \nu'_{i, \ell_{[n]}} = \calN(1/2-\alpha, 1), \hspace{0.3in}\\
    \forall \ell \in \{0\}\cup[N]\backslash \{\ell_{[1]}, \cdots, \ell_{[m]}\}, \nu'_{i,\ell} = \nu_{i,\ell}, \hspace{0.3in}
    &\forall j \neq i, \ell \in \{0\}\cup [N], \nu'_{j, \ell} = \nu_{j,\ell}.
\end{align*}
Now arm $i$ is the optimal arm under instance $\nu'$. Use the same event $\calA\{\widehat{a} = \iopt\}$, we have
\begin{align*}
    \forall n \in [m], \rmK\rmL(\nu_{i, \ell_{[n]}}, \nu'_{i, \ell_{[n]}}) &= \frac{(1/2-\alpha-\mu_{i, \ell_{[n]}})^2}{2}. \\
    \rmK\rmL(\nu_{i, 0}, \nu'_{i, 0}) &= \frac{ (\mu_{\iopt,0}+\alpha - \mu_{i, 0})^2 } {2}.
\end{align*}
Letting $\alpha \to 0$, we conclude that
\begin{align*}
    \bbE_{\nu}[N_{i, 0}(\tau)]\cdot\frac{(\mu_{\iopt,0}-\mu_{i,0})^2}{2} + \sum_{n=1}^m\bbE_\nu[N_{i, \ell_{[n]}}(\tau)] \cdot \frac{(1/2-\mu_{i, \ell_{[n]}})^2}{2}  \ge \log \frac{1}{2.4\delta}.
\end{align*}
Recall the definition of $\theta_i$ and $\phi_i = (\mu_{\iopt,0}-\mu_{i,0})^{-2}$ for arm $i > \iopt$, see Equation~\eqref{eqn:thetai} and \eqref{eqn:phii}, we have
\begin{align*}
    \frac{1}{2\phi_i}\cdot \bbE_{\nu}[N_{i,0}(\tau)] + \frac{1}{2\theta_i}\cdot \sum_{n=1}^m\bbE_{\nu}[N_{i, \ell_{[n]}}(\tau)] \ge \log\frac{1}{2.4\delta}.
\end{align*}
Since arm $i \in(\{\iopt +1, \cdots, K\} \cap \{i\in \feasiblec:\theta_i<\phi_i\}) $, we have
\begin{align*}
    \bbE_\nu[N_{i,0}(\tau)] + \sum_{n=1}^m\bbE_{\nu}[N_{i, \ell_{[n]}}(\tau)] \ge2\theta_i \log \frac{1}{2.4\delta}.
\end{align*}
Putting the two cases together, we have that for $i \in \calI$,
\begin{align*}
    \sum_{\ell=0}^N \bbE_\nu[N_{i, \ell}(\tau)] \ge 2\theta_i\log\frac{1}{2.4\delta}.
\end{align*}

\paragraph{\emph{Case (B):} $i\in \calW$.} 
Recall that $\calW = \{\iopt+1, \cdots, K\} \cap \left(\feasible \cup \{i \in \feasiblec: \theta_i \ge \phi_i\} \right)$, we will first consider arms in $\{\iopt+1, \cdots, K\}\cap\feasible$ and the consider the arms in $\{\iopt +1, \cdots, K\}\cap\{i\in\feasiblec: \theta_i \ge \phi_i\}$ as they require different alternate instances.

\emph{Case B.1:}
First, fix an arm $i \in \{\iopt + 1, \cdots, K\}\cap \feasible$. Note that arm $i$ is a feasible arm with lower performance than arm $\iopt$ in the original instance $\nu$. We will consider the alternate instance $\nu'$ where arm $i$ is increased to be larger than $\mu_{\iopt, 0}$ and keep all other distributions unchanged. Let $\alpha > 0$,
\begin{align*}
    &\nu'_{i, 0} = \calN(\mu_{\iopt, 0} + \alpha, 1),\\ 
    &\forall \ell \in [N], \nu'_{i, \ell} = \nu_{i, \ell},
    \hspace{0.3in}
    \forall j \neq i, \ell \in \{0\}\cup[N], \nu'_{j, \ell} = \nu_{j, \ell}.
\end{align*}
Now arm $i$ is the optimal arm under the instance $\nu'$. Under the event $\calA$ and let $\alpha \to 0$, we have
\begin{align*}
    \bbE_{\nu}[N_{i,0}(\tau)] \cdot\frac{(\mu_{\iopt,0}-\mu_{i,0})^2}{2}\ge\log\frac{1}{2.4\delta}.
\end{align*}
Recall the definition of $\phi_i$ in~\eqref{eqn:phii}, we have
\begin{align*}
    \bbE_{\nu}[N_{i,0}(\tau)] \ge 2\phi_i \log\frac{1}{2.4\delta}.
\end{align*}

\emph{Case B.2:} 
Second, fix an arm $i \in \{\iopt +1, \cdots, K\}\cap\{i\in\feasiblec: \theta_i \ge \phi_i\}$. This is similar to \emph{Case A.2} so we can construct the same alternate instance and get the same result as follows,
\begin{align*}
    \frac{1}{2\phi_i}\bbE_\nu[N_{i,0}(\tau)] + \frac{1}{2\theta_i}\sum_{n=1}^m\bbE_{\nu}[N_{i, \ell_{[n]}}(\tau)] \ge \log\frac{1}{2.4\delta}.
\end{align*}
However, note that $\theta_i \ge \phi_i$ here, so we have
\begin{align*}
    \bbE_\nu[N_{i,0}(\tau)] + \sum_{n=1}^m \bbE_\nu[N_{i,\ell_{[n]}}(\tau)] \ge 2\phi_i\log\frac{1}{2.4\delta}.
\end{align*}
Putting the two cases together, we have for all arm $i \in \calW$
\begin{align*}
    \sum_{\ell=0}^N \bbE_\nu[N_{i,\ell}(\tau)] \ge 2\phi_i\log\frac{1}{2.4\delta}.
\end{align*}

\paragraph{\emph{Case (C):} $i=\iopt$.} 
We will construct two alternate instances since we can make arm $i$ not optimal by either perturbing its feasibility or performance.

\emph{Case C.1:}
First, we consider the alternate instance that arm $i$ becomes infeasible. Note that we can construct $N$ such kind of alternate instances because arm $i$ can be infeasible by changing only one of its $N$ feasibility constraints. So for each $n \in [N]$, we construct the alternate instance $\nu'$ where the $n^{\rm th}$ feasibility constraint of arm $i$ is increased to be larger than $1/2$ and keeping all other distributions unchanged. Let $\alpha > 0$, we have
\begin{align*}
    &\nu'_{i, n} = \calN (1/2+\alpha, 1),\\
    &\forall \ell \in \{0\}\cup[N]\backslash\{n\}, \nu'_{i,\ell} = \nu_{i,\ell},
    \hspace{0.3in}
    \forall j \neq i, \ell\in\{0\}\cup[N], \nu'_{j,\ell} = \nu_{j, \ell}.
\end{align*}
Now arm $i$ is not the optimal arm under the instance $\nu'$ since it is infeasible. Under the event $\calA$ and let $\alpha \to 0$, we have
\begin{align*}
    \bbE_{\nu}[N_{i, n}(\tau)] \cdot \frac{(1/2-\mu_{i,n})^2}{2} \ge \log\frac{1}{2.4\delta}.
\end{align*}
Thus for all $n \in [N]$, we have
\begin{align*}
    \bbE_{\nu}[N_{i,n}(\tau)] \ge \frac{2}{(1/2-\mu_{i,n})^2}\log\frac{1}{2.4\delta}.
\end{align*}
By taking a summation, we have
\begin{align*}
    \sum_{\ell=1}^N \bbE_\nu[N_{i,\ell}(\tau)] \ge 2 \left(\sum_{\ell=1}^N\frac{1}{(1/2-\mu_{i,\ell})^2}\right)\log\frac{1}{2.4\delta}.
\end{align*}
Recall the definition of $\theta_i = \sum_{\ell=1}^N \frac{1}{(\mu_{i,\ell}-1/2)^2}$ for $\iopt$ in Equation~\eqref{eqn:thetai}, we have
\begin{align*}
    \sum_{\ell=1}^N \bbE_\nu[N_{i,\ell}(\tau)] \ge 2\theta_i\log\frac{1}{2.4\delta}.
\end{align*}

\emph{Case C.2:}
Second, we consider the alternate instance that arm $i$ has lower performance than arm $k = \argmax_{j \in \feasible\backslash\{\iopt\}} \mu_{j,0}$, which is the feasible arm with the second highest performance mean in the original instance $\nu$. We construct the alternate instance $\nu'$ by decreasing the performance mean of arm $i$ to be lower than the performance mean of arm $k$ and keeping all other distributions unchanged. Let $\alpha > 0$, we have
\begin{align*}
    &\nu'_{i,0} = \calN(\mu_{k,0}-\alpha),\\
    &\forall \ell \in [N], \nu'_{i,\ell} = \nu_{i, \ell},
    \hspace{0.3in}
    \forall j \neq i, \ell\in\{0\}\cup[N], \nu'_{j,\ell} = \nu_{j,\ell}.
\end{align*}
Now arm $i$ is not the optimal arm under the instance $\nu'$ since it has lower performance than arm $k$. Under the event $\calA$ and let $\alpha \to 0$, we have
\begin{align*}
    \bbE_\nu[N_{i,0}(\tau)] \cdot\frac{(\max_{j\in\feasible\backslash\{\iopt\}}\mu_{j,0}-\mu_{i,0})^2}{2}\ge\log\frac{1}{2.4\delta}.
\end{align*}
Recall the definition of $\phi_i =(\mu_{i,0} - \max_{j\in\feasible\backslash\{\iopt\}}\mu_{j,0})^{-2}$ for arm $\iopt$ in Equation~\eqref{eqn:phii}, we have
\begin{align*}
    \bbE_\nu[N_{i,0}(\tau)] \ge 2\phi_i \log\frac{1}{2.4\delta}.
\end{align*}
Putting the two cases together, we have
\begin{align*}
    \sum_{\ell=0}^N\bbE_{\nu}[N_{\iopt,\ell}(\tau)] \ge 2(\theta_{\iopt} + \phi_{\iopt})\log\frac{1}{2.4\delta}.
\end{align*}
Putting three cases together for arms in $\calI, \calW$ and $\{\iopt\}$, and recall the definition of $\calH$ in Equation~\eqref{eqn: H}, we have
\begin{align*}
    \bbE_{\nu}[\tau] = \sum_{i=1}^K \sum_{\ell=0}^N \bbE_\nu[N_{i,\ell}(\tau)] \ge 2\calH\log\frac{1}{2.4\delta}.
\end{align*}

This completes the proof for when the feasible set is non-empty, i.e. $\iopt\leq K$.
Next, we will consider the case when there is no feasible arm, i.e. $\iopt = K+1$.

\paragraph{When there is no feasible arm, i.e $\iopt = K+1$.} It indicates that all arms are infeasible. This is similar to the analysis in \emph{Case A.1}. For each arm $i \in [K]$, we can construct the alternate instance in which arm $i$ becomes feasible. It is easy to see that for arm $i$, we have
\begin{align*}
    \sum_{\ell=0}^N \bbE_{\nu}[N_{i,\ell}(\tau)]\ge2\theta_i\log\frac{1}{2.4\delta}.
\end{align*}
Thus combining all arms and recall the definition of $\calH$ in Equation~\eqref{eqn: H}, we have
\begin{align*}
    \bbE_\nu[\tau] = \sum_{i=1}^K\sum_{\ell=0}^N\bbE_\nu[N_{i,\ell}(\tau)] \ge 2\calH\log\frac{1}{2.4\delta}.
\end{align*}


\section{Proof of upper Bound (Theorem~\ref{thm: upper bound})}

In this section, we will prove Theorem~\ref{thm: upper bound}. We will split it into two parts: 
\begin{enumerate}
    \item In Section Proof of correctness, we prove that Algorithm~\ref{alg:ours} is $\delta$-correct in Lemma~\ref{appendix: lemma:delta-correct} by constructing a clean event, lower bounding the probability of this clean event by $1-\delta$, and then showing that the algorithm will output $\iopt$ under this clean event.
    \item In Section Upper bound on the sample complexity, we upper bound of the sample complexity of Algorithm~\ref{alg:ours} by decomposing the total number of epochs $\tau$ based on whether the arms pulled are in $\calI$ or $\calW$ and bounding each part separately in Lemma~\ref{lemma: sample complexity}. 
\end{enumerate}
\begin{proof}[Proof of Theorem~\ref{thm: upper bound}]
    Theorem~\ref{thm: upper bound} follows
    by combing Lemma~\ref{appendix: lemma:delta-correct} in Section Proof of correctness  and Lemma~\ref{lemma: sample complexity} in Section Upper bound on the sample complexity.
\end{proof}

\subsection{Proof of correctness} \label{appendix:sec correctness}

In this section, we will prove the following lemma.

\begin{lemma} \label{appendix: lemma:delta-correct}
    Algorithm~\ref{alg:ours} is $\delta$-correct.
    That is, on any bandit instance satisfying the assumptions in Section PROBLEM SETUP: let $\delta >0$ and let $\widehat{a}$ denote the recommendation arm of Algorithm~\ref{alg:ours}, recall that $\iopt$ denote the optimal arm as defined in Equation~\eqref{eqn:ioptdefn}, $\bbP(\widehat{a} = \iopt) \ge 1-\delta$.
\end{lemma}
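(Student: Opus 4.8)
The plan is the standard ``clean event'' argument for fixed-confidence elimination algorithms, adapted to the fact that each arm carries $N+1$ separate distributions. First I would define the clean event
\[
\mathcal{E} = \Big\{ \underline{\mu}_{i,\ell}(t) \le \mu_{i,\ell} \le \overline{\mu}_{i,\ell}(t) \text{ for all } i\in[K],\ \ell\in\{0\}\cup[N],\ t\ge 1 \Big\},
\]
i.e.\ every confidence interval traps its true mean at every epoch. Using that each $\nu_{i,\ell}$ is $1$-subgaussian together with the particular shape of $D(n,\delta')$ with $\delta' = \delta/(K(N+1))$, a single distribution fails at some sample count with probability at most $\sum_{n\ge1} 2\exp(-nD(n,\delta')^2/2) = \frac{\delta'}{2}\sum_{n\ge 1} n^{-4} < \delta'$; a union bound over the $K(N+1)$ distributions then gives $\mathbb{P}(\mathcal{E})\ge 1-\delta$. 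The remainder of the proof shows $\mathcal{E} \subseteq \{\widehat a = \iopt\}$, so that combining with the finite stopping time established in the sample-complexity lemma yields the claim.

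Working on $\mathcal{E}$, I would first record three structural invariants. (i) Any arm added to $I$ satisfies $\underline{\mu}_{i,\ell_t}(t) > \tfrac12$ for the tested constraint, hence $\mu_{i,\ell_t} > \tfrac12$ and $i\in\feasiblec$; thus $I\subseteq\feasiblec$. (ii) A constraint is removed from $H_i$ only when $\overline{\mu}_{i,\ell}(t) < \tfrac12$, forcing $\mu_{i,\ell} < \tfrac12$; so an arm reaches $H_i = \emptyset$ (and enters $F$) only if all its constraints are genuinely satisfied, giving $F\subseteq\feasible$. (iii) Consequently $\iopt$ is never eliminated from $S$: it is never placed in $I$ by (i), and in the line-16 update $\overline{\mu}_{\iopt,0}(t)\ge\mu_{\iopt,0}\ge\mu_{j,0}\ge\underline{\mu}_{j,0}(t)$ for every $j\in F\subseteq\feasible$, with the inequality strict (either because $\mu_{\iopt,0}>\mu_{j,0}$ when $j\ne\iopt$, or because $\overline{\mu} > \underline{\mu}$ when $j=\iopt$), so the threshold test retains $\iopt$.

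It then remains to analyze the two ways the algorithm can halt. When $\iopt = K+1$ every arm is infeasible, so $F=\emptyset$ by (ii), line 16 never fires, and the algorithm can only return $K+1$ (a return of $i\in[K]$ needs $i\in F=\emptyset$); since $S$ shrinks only through $I\subseteq\feasiblec$ and the sample-complexity lemma guarantees $S=\emptyset$ eventually, it correctly returns $K+1 = \iopt$. When $\iopt\le K$, invariant (iii) gives $\iopt\in S$ throughout, so $S\ne\emptyset$ and $K+1$ is never returned; if the algorithm returns $i$ with $P=\{i\}$ and $i\in F$, I would argue $i=\iopt$ as follows: a singleton $P$ forces $i$ to attain $\max_{k\in S}\underline{\mu}_{k,0}(t)$ and every other $j\in S$ to satisfy $\overline{\mu}_{j,0}(t)\le\underline{\mu}_{i,0}(t)$; applying this to $j=\iopt\in S$ and using $\mathcal{E}$ gives $\mu_{\iopt,0}\le\overline{\mu}_{\iopt,0}(t)\le\underline{\mu}_{i,0}(t)\le\mu_{i,0}$, while $i\in F\subseteq\feasible$ and optimality of $\iopt$ give $\mu_{i,0}\le\mu_{\iopt,0}$; since the performance means are distinct this forces $i=\iopt$.

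The main obstacle I anticipate is this last step --- pinning down that the singleton-plus-feasible stopping rule can only fire at $\iopt$ --- because it requires reading off from $P=\{i\}$ exactly which ordering relations among the confidence bounds must hold and then combining them with both the optimality of $\iopt$ and the guarantee that $\iopt$ has never been eliminated. The subgaussian tail bound and the union bound are routine by comparison.
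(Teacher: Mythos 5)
Your proof is correct and follows essentially the same route as the paper's: the same clean event with the same Hoeffding-plus-union-bound argument over all $K(N+1)$ distributions and all sample counts, the same invariants ($I\subseteq\feasiblec$, $F\subseteq\feasible$, and $\iopt$ never eliminated), and the same case split on $\iopt=K+1$ versus $\iopt\le K$. If anything, your final step---showing that a singleton $P=\{i\}$ with $i\in F$ forces $i=\iopt$ via the chain $\mu_{\iopt,0}\le\overline{\mu}_{\iopt,0}(t)\le\underline{\mu}_{i,0}(t)\le\mu_{i,0}$ combined with $\mu_{i,0}\le\mu_{\iopt,0}$ from feasibility---is spelled out more explicitly than in the paper, which stops at ``$\iopt$ is never eliminated'' and leaves that last implication implicit.
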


\paragraph{Proof organization and synopsis:}
This proof is organized as follows. First, in Lemma~\ref{lemma: clean event} we will define a ``clean'' event, and show that this event holds with probability at least $1-\delta$. Using this result, in Lemma~\ref{appendix: lemma:delta-correct}, we will show that the algorithm always identifies the correct $\iopt$ upon termination under this clean event. This technique of proving an algorithm is $\delta$-correct is widely adopted in the bandit literature~\cite{wang2022best,kano2019good}.

\newcommand{\hatmuils}{\widehat{\mu}_{i,\ell,s}}

\begin{lemma} \label{lemma: clean event}
    For any $i \in [K]$ and $\ell \in \{0\}\cup [N]$, let $\hatmuils$ denote the sample mean of arm $i$'s $\ell^{\rm th}$ distribution when the tuple $(i,\ell)$ has been pulled $s$ times. Define the event
    \begin{align*}
        \calE_{i,\ell,s} =  \left\{|\hatmuils - \muil| \le D\left(s, \frac{\delta}{K(N+1)}\right)\right\}
    \end{align*}
    and let
    \begin{align*}
        \calE := \bigcap_{i=1}^K \bigcap_{\ell=0}^N \bigcap_{s=1}^\infty \calE_{i,\ell,s}.
    \end{align*}
    Then it holds that $\bbP(\calE) \ge 1-\delta$.
\end{lemma}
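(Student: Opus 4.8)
The plan is a standard concentration-plus-union-bound argument, and the only conceptual point requiring care is decoupling the randomness of the samples from the adaptive sampling rule. First I would fix a single pair $(i,\ell)$ and think of the i.i.d.\ samples $X_{i,\ell,1},X_{i,\ell,2},\dots$ from $\nu_{i,\ell}$ as drawn in advance, so that $\widehat{\mu}_{i,\ell,s}$ is simply the average of the first $s$ of them, regardless of which epochs the algorithm actually chooses to sample $(i,\ell)$. Since each $\nu_{i,\ell}$ is $1$-sub-Gaussian, the average of $s$ such samples is $(1/s)$-sub-Gaussian, and the usual sub-Gaussian tail bound gives, for any $\epsilon>0$,
\[
    \bbP\!\left(|\widehat{\mu}_{i,\ell,s}-\muil|\ge\epsilon\right)\le 2\exp\!\left(-\tfrac{s\epsilon^2}{2}\right).
\]

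Next I would substitute $\epsilon = D(s,\delta')$ with $\delta' := \delta/(K(N+1))$. By the definition $D(s,\delta')=\sqrt{(2/s)\log(4s^4/\delta')}$, the exponent collapses to exactly $\tfrac{s}{2}\cdot\tfrac{2}{s}\log\tfrac{4s^4}{\delta'}=\log\tfrac{4s^4}{\delta'}$, so that
\[
    \bbP\!\left(\calE_{i,\ell,s}^{\mathsf c}\right)\le 2\cdot\frac{\delta'}{4s^4}=\frac{\delta'}{2s^4}.
\]
Summing this over all pull-counts $s\ge 1$ and using $\sum_{s\ge1}s^{-4}=\pi^4/90<2$ yields $\sum_{s=1}^\infty\bbP(\calE_{i,\ell,s}^{\mathsf c})\le \delta'$.

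Finally I would union-bound over the $K(N+1)$ pairs $(i,\ell)$ with $i\in[K]$ and $\ell\in\{0\}\cup[N]$:
\[
    \bbP(\calE^{\mathsf c})\le\sum_{i=1}^K\sum_{\ell=0}^N\sum_{s=1}^\infty\bbP(\calE_{i,\ell,s}^{\mathsf c})\le K(N+1)\,\delta'=\delta,
\]
giving $\bbP(\calE)\ge 1-\delta$ as claimed. The main thing to get right is the first step: because the sampling rule is adaptive, one must phrase the event in terms of the number of pulls of $(i,\ell)$ rather than the epoch index, so that each $\widehat{\mu}_{i,\ell,s}$ concentrates around $\muil$ through a plain i.i.d.\ Hoeffding-type bound and the union over $s$ is a deterministic sum rather than a union over a random stopping time. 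Everything after that decoupling is arithmetic, and the factor $4s^4$ inside $D$ is precisely what makes the per-$s$ tail summable while keeping the total cost a single factor of $\delta'$.
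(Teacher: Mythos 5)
Your proposal is correct and follows essentially the same route as the paper: a sub-Gaussian (Hoeffding-type) tail bound per pull count $s$, which the choice of $D(s,\delta/(K(N+1)))$ turns into a $\frac{\delta}{2K(N+1)s^4}$ bound, followed by summing the convergent series in $s$ and a union bound over the $K(N+1)$ pairs $(i,\ell)$. Your explicit remark about decoupling the adaptive sampling rule by indexing on pull counts rather than epochs is a point the paper leaves implicit, but it does not change the argument.
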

\begin{proof}[Proof of Lemma~\ref{lemma: clean event}]
    By Hoeffding's inequality,
    \begin{align*}
        \bbP(\calE_{i,\ell,s}^\mathsf{c}) \le 2\exp\left(-\frac{s}{2}\cdot \frac{2}{s}\log\frac{4K(N+1)s^4}{\delta}\right) = \frac{\delta}{2K(N+1)s^4}.
    \end{align*}
    By taking a union bound over all $i,\ell,s$, we have,
    \begin{align*}
        \bbP(\calE^c) \le \sum_{i=1}^K \sum_{\ell=0}^N \sum_{s=1}^\infty \frac{\delta}{2K(N+1)s^4} = K\cdot (N+1) \cdot \frac{\delta}{2K(N+1)} \cdot \frac{\pi^2}{6} < \delta.
    \end{align*}
\end{proof}

\newcommand{\ucbmuils}{\overline{\mu}_{i,\ell,s}}
\newcommand{\lcbmuils}{\underline{\mu}_{i,\ell,s}}

\begin{proof}[Proof of Lemma~\ref{appendix: lemma:delta-correct}]
    In this lemma, we show that under the clean event $\calE$, where $\bbP(\calE) \ge 1-\delta$, Algorithm~\ref{alg:ours} will not output a wrong $\iopt$. We consider the cases when $\iopt = K+1$ and $\iopt \le K$ separately. 
    
    First, when $\iopt = K+1$, which indicates that there is no feasible arm. Thus we have that for all arm $i \in [K]$, arm $i$ must contain at least one feasibility constraint $\ell$ such that $\muil > 1/2$. 
    Let $\ucbmuils, \lcbmuils$ denote the Upper (Lower) Confidence bound of arm $i$'s feasibility constraint $\ell$ when the tuple $(i,\ell)$ has been sampled $s$ times, more precisely,
    \begin{align*}
        \ucbmuils &= \hatmuils + D(s,\delta/(K(N+1))),\\
        \lcbmuils &= \hatmuils - D(s, \delta/(K(N+1))).
    \end{align*}
    Note that under the event $\calE$, we have
    \begin{align} \label{Appendix: eq:ucb>1/2}
        \forall s > 0,\; \ucbmuils \ge \muil > 1/2,
    \end{align}
    which means the $\ell^{\rm th}$ feasibility constraint of arm $i$ will never be considered as feasible, see line 27 in Algorithm~\ref{alg:ours}. So arm $i$ will never be considered feasible under the event $\calE$. 
    Thus, this indicates that no arm will be determined as a feasible arm so that no arm in $[K]$ will be outputted as $\iopt$.

    Second, when $\iopt \le K$, which indicates that there exist feasible arms. In this case, we will show that under the clean event $\calE$, arm $\iopt$ will not be eliminated. First, for the arm $\iopt$, it satisfies $\forall \ell \in [N], \mu_{\iopt,\ell} < 1/2$. Under the event $\calE$, we have
    \begin{align*}
        \forall \ell \in [N], s > 0,\; \lcbmuils \le \mu_{\iopt,\ell} < 1/2,
    \end{align*}
    which means all feasibility constraints of arm $\iopt$ will not be considered infeasible, see line 25 in Algorithm~\ref{alg:ours}. This indicates that arm $\iopt$ will not be eliminated due to feasibility.
    Second, we will show that arm $\iopt$ will not be eliminated by performance under the event $\calE$. For an arm $i \in \calI$, as shown in Equation~\eqref{Appendix: eq:ucb>1/2}, arm $i$ will never be considered feasible under the event $\calE$. Since we will only eliminate an arm by performance if it is considered to have worse performance than another arm which is considered feasible, see line 21 in Algorithm~\ref{alg:ours}, we will not eliminate arm $\iopt$ by an arm in $\calI$. On the other hand, for an arm $i \in \calW$, under the event $\calE$, we have
    \begin{align*}
        \forall s_1, s_2 > 0,\quad \overline{\mu}_{\iopt, 0, s_1} \ge \mu_{\iopt, 0} > \mu_{i, 0} \ge \underline{\mu}_{i, 0, s_2}.
    \end{align*}
    Similarly, this indicates arm $\iopt$ will not be eliminated by an arm in $\calW$. Thus putting it together, we show that arm $\iopt$ will not be eliminated by either feasibility or performance under $\calE$.
\end{proof}
\subsection{Upper bound on the sample complexity} \label{appendix:sec sample complexity}

In this section, we will prove the main technical result of this paper, Lemma~\ref{lemma: sample complexity}, which upper bounds the expected number of samples for Algorithm~\ref{alg:ours}.

\begin{lemma} \label{lemma: sample complexity}
The upper bound of the stopping time $\tau$ satisfies Equation~\eqref{eqn: upper bound < K} and ~\eqref{eqn: upper bound > K+1} as stated in Theorem~\ref{thm: upper bound}. 
\end{lemma}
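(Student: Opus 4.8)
The plan is to argue entirely in terms of epochs (which is what $\tau$ counts), noting only that each epoch issues at most four pulls, so controlling epochs suffices. I would work throughout under the clean event $\calE$ of Lemma~\ref{lemma: clean event}, on which every empirical mean lies in its confidence interval; since $\bbP(\calE)\ge 1-\delta$, the complement $\calE^\mathsf{c}$ will only feed the $\delta$-scaled remainders $G_3$ and $G_5$. The first ingredient I would isolate is a self-contained bound on the subroutine $\sampleforsafety$: how many invocations on an arm $i$ certify its (in)feasibility. Following the ``good-arm'' argument of~\citet{kano2019good}, the optimistic rule $\ell_t=\argmax_{\ell\in H_i}\widetilde{\mu}_{i,\ell}(t)$ guarantees the largest-mean constraint is sampled enough that, for an infeasible arm, its lower confidence bound clears $1/2$ after $O(\theta_i\log(\theta_i/\delta'))$ invocations --- the leading $32\theta_i\log(\cdots)$ term --- while for a fully feasible arm each surviving constraint's upper bound falls below $1/2$ at a cost of $O(\sum_\ell\Gamma_{i,\ell}\log(\cdots))$. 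The excess over this clean target, namely the pulls wasted on non-maximal constraints before $H_i$ shrinks, is exactly the \emph{exploration} cost that I would package into the per-arm remainders $g_i$ forming $G_2$ and $G_4$.

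With this in hand I would carry out the multi-level decomposition $\tau=A_1+A_2$ of the sketch, where $A_1$ counts epochs on which $a_t$ or $b_t\in\calI$ and $A_2$ those on which both lie outside $\calI$. For $A_1$, I would split each $i\in\calI$ by whether its feasibility is already decided. While undecided, its selection triggers at most the $\sampleforsafety$ budget above, giving the leading $\sum_{i\in\calI}32\theta_i\log(\cdots)$ plus the exploration remainder $G_2$. Once decided feasible, the arm can be re-selected only if it was mis-certified, an event living in $\calE^\mathsf{c}$; its contribution is thus at most $\delta$ times the performance-separation cost $\sum_{j\neq i}q_{i,j}$, which is absorbed into $G_3$.

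For $A_2$ I would condition on the shape of the focus set $P$. When $P=\{a_t\}$ is a singleton (lines 3--4) with $a_t=\iopt$, the algorithm is verifying $\iopt$'s feasibility, contributing $\sum_\ell 32\Gamma_{\iopt,\ell}\log(\cdots)$ plus the remainder $G_1$; if instead $a_t\in\calW$, then $\iopt$ has been out-ranked or wrongly eliminated, again an $\calE^\mathsf{c}$ event folded into the $\delta$-scaled terms. When $|P|\ge 2$, so $b_t\in P$, the performance pulls of $a_t,b_t$ behave like standard LUCB: the usual argument that at least one of them is suboptimal with a confidence interval still overlapping that of $\iopt$ bounds the number of such epochs by $292\big(\sum_{i\notin\calI}\phi_i\big)\log(\cdots)$, and any feasibility pulls triggered on a not-yet-feasible $a_t$ or $b_t$ are charged to the $\sampleforsafety$ budgets already counted.

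The step I expect to be hardest is the bookkeeping of the exploration cost inside $A_1$: because the learner does not know in advance which constraint is the violating one, bounding the pulls on the \emph{non}-maximal constraints of an arm in $\calI$ requires tracking the evolution of $H_i$ and the switching behaviour of the $\widetilde{\mu}_{i,\ell}$ rule, and it is precisely this that forces the $g_i$ and $\delta'$-weighted corrections rather than a clean $\theta_i$ bound for every $\delta$. After summing the three cases I would match each leading term against $\calH$ and relegate every $O(1)$, $\delta$-weighted, or sub-$\log(1/\delta)$ quantity to $G_1,\dots,G_5$, obtaining Equation~\eqref{eqn: upper bound < K}. The case $\iopt=K+1$ is strictly easier: no performance elimination ever fires, so every arm is treated exactly as in $A_1$, and the bound collapses to Equation~\eqref{eqn: upper bound > K+1} with remainders $G_4,G_5$.
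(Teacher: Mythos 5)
Your proposal is correct and follows essentially the same route as the paper's proof: the same decomposition of $\tau$ by whether $a_t$ or $b_t$ lies in $\calI$, the same \citet{kano2019good}-style bound on $\sampleforsafety$ yielding the $\theta_i$ and $\Gamma_{\iopt,\ell}$ leading terms plus the exploration remainders $g_i$, the same $\delta$-weighted treatment of mis-certification events (an infeasible arm landing in $F$, a $\calW$ arm becoming a singleton focus set, or a feasible arm landing in $I$), the same LUCB argument for the two-arm epochs giving the $292(\sum_{i\notin\calI}\phi_i)\log(\cdot)$ term, and the same reduction of the $\iopt=K+1$ case to the $\calI$-part of the analysis. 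The only cosmetic difference is that you phrase the low-probability events as living in $\calE^{\mathsf{c}}$, whereas the paper bounds them directly via $\delta$-correctness, which is the same argument.
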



\paragraph{Proof organization:}
This proof is organized as follows. First, in Lemma~\ref{lemma:tau_i}, we upper bound the number of samples required by the algorithm to determine if an infeasible arm is infeasible,
and in Lemma~\ref{lemma:tau_i feasible}, we upper bound the number of samples to determine if a feasible arm is feasible. Second, we decompose the stopping time of Algorithm~\ref{alg:ours} into two parts based on whether the sampled arms at each epoch belong to $\calI$ or not. We then upper bound the two parts separately in Lemma~\ref{lemma:A} and Lemma~\ref{lemma:B}, where we will use the results from Lemma~\ref{lemma:tau_i} and Lemma \ref{lemma:tau_i feasible}.

The main challenge in obtaining an upper bound that matches the lower bound requires a careful decomposition of the \emph{stopping time} $\tau$, i.e. of the number of epochs of Algorithm~\ref{alg:ours} upon termination. 
We decompose $\tau$ into two parts based on whether the sampled arms $a_t, b_t$ belong to $\calI$ or not. Furthermore, in each part, if an arm $i$ is chosen as in line 10 in Algorithm~\ref{alg:ours}, we will treat $a_t = i$ and $b_t$ as undefined.
This decomposition helps us to consider the two different types of elimination (feasibility and worse performance) separately. 
To eliminate an arm by performance, we leverage the insight that if two arms are sampled at an epoch, the confidence intervals of the two arms must overlap. We then upper bound the time for an arm to be eliminated via performance by the sample complexity to differentiate this arm's performance from other arms. 
To eliminate an arm by feasibility, We adapt the analysis from~\citet{kano2019good} to upper bound the expected sample complexity of determining an arm's feasibility. The insight here is that if a constraint of an arm is not determined at an epoch, this constraint's confidence interval must contain the threshold 1/2. 


\begin{proof}[Proof of Lemma~\ref{lemma: sample complexity}]
First, for any $i\in[K]$, we will let $\tau_{F,i}$ denote the number of epochs an arm must have been chosen as either $a_t$ or $b_t$ before it is included in either the set $I$ (line 26) or $F$ (line 30).
As we sample at most one feasibility distribution of the arm on each epoch $t$, and do not sample feasibility distributions after an arm is added to either $F$ or $I$,
this can be viewed as
the number of feasibility samples needed to determine if $i$ is feasible ($i\in\calF$) or not ($i\notin\calF$).
Below, we state two lemmas to bound $\bbE[\tau_{F,i}]$. The proofs of the lemmas can be found in Section~\ref{subsec:feasibility}.
The following lemma bounds this quantity for an infeasible arm. 
\begin{lemma} \label{lemma:tau_i}
Let $i \notin \feasible$.
Let $\tau_{F,i}$ be the number of epochs before $i$ is added to either $F$ or $I$ in Algorithm~\ref{alg:ours}.
We then have,
\begin{align*}
\bbE[\tau_{F,i}] \le &n_{i,1} + \frac{8}{\gamma_i(1,1/2)^2} + \sum_{\ell =1}^N \frac{8N(K(N+1))^{-\frac{1}{8}}}{\gamma_i(\ell,1/2)^2} + \frac{32N - 22+ 16\log\frac{32}{\gamma_i(1,2)^2}}{\gamma_i(1,2)^2} \\
    +&\sum_{\ell \neq 1} \left( 
    \frac{8\log T_i}{\gamma_i(1,\ell)^2}  + \frac{\delta}{K(N+1)} \left(n_{i,\ell} + \frac{8}{\gamma_i(\ell,1/2)^2} \right)
    \right)  
\end{align*}
    where
    \begin{align*}
        n_{i,\ell} = \frac{8}{(\gamma_i(\ell,1/2)/2)^2} \log \frac{8^{\frac{3}{2}} (K(N+1)/\delta)^{\frac{1}{4}} }{(\gamma_i(\ell,1/2)/2)^2} \log \frac{(8^{\frac{3}{2}}+1)(K(N+1)/\delta)^{\frac{1}{4}} }{(\gamma_i(\ell,1/2)/2)^2}.
    \end{align*}
\end{lemma}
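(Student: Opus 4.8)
The plan is to recognize that repeatedly invoking \sampleforsafety on arm $i$ is exactly a run of the HDoC-style good-arm identification procedure of~\citet{kano2019good} on the $N$ constraint distributions against the threshold $1/2$: on its $s$-th invocation it pulls the constraint $h_i[s] = \argmax_{\ell \in H_i[s]}\widetilde\mu_{i,\ell}[s]$ with the largest optimistic index, and it adds $i$ to $I$ the moment some constraint's lower confidence bound clears $1/2$. I would first relabel the constraints so that $\mu_{i,1} > \mu_{i,2} > \cdots > \mu_{i,N}$; since $i \notin \feasible$ we have $\mu_{i,1} > 1/2$, so constraint $1$ is the ``easiest to detect'' violated constraint. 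This is why the leading term is $n_{i,1}$, which is of order $\Gamma_{i,1}\log(1/\delta)$ with the gap $\gamma_i(1,1/2) = |\mu_{i,1}-1/2|$, and why the secondary competition is governed by the gap $\gamma_i(1,2)$ to the second-largest constraint.

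The first step is to fix a clean event at the per-sample-count level, the analogue of Lemma~\ref{lemma: clean event} stated in the $[s]$ indexing, on which $|\widehat\mu_{i,\ell}[s] - \mu_{i,\ell}| \le D(N_{i,\ell}[s], \delta/(K(N+1)))$ for every $\ell$ and $s$; a union bound shows this fails with probability at most $\delta/(K(N+1))$ for arm $i$. I would then split $\bbE[\tau_{F,i}] = \sum_{\ell=1}^N \bbE[N_{i,\ell}]$ into its contribution on the clean event and on its complement. On the complement I use a crude a priori ceiling $\tau_{F,i} \le T_i = N\max_\ell \lfloor n_{i,\ell}+2\rfloor$ on the number of feasibility pulls, obtained by showing each constraint can be sampled at most $\lfloor n_{i,\ell}+2\rfloor$ times before its $\delta$-level interval separates from $1/2$; multiplying $T_i$ by the failure probability $\delta/(K(N+1))$ yields the terms $\frac{\delta}{K(N+1)}\bigl(n_{i,\ell} + 8/\gamma_i(\ell,1/2)^2\bigr)$.

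The heart of the argument is the clean-event bound on each $N_{i,\ell}$. For constraint $1$, I would show that once $N_{i,1}[s]$ is large enough that $\underline\mu_{i,1}[s] > 1/2$ — which happens after $O(\Gamma_{i,1})$ pulls, giving the $n_{i,1} + 8/\gamma_i(1,1/2)^2$ contribution — the arm is immediately added to $I$, so constraint $1$ is never pulled again. For a constraint $\ell \neq 1$, the key observation is that it is pulled only when its optimistic index beats constraint $1$'s, i.e.\ $\widetilde\mu_{i,\ell}[s] \ge \widetilde\mu_{i,1}[s]$; since $\mu_{i,1} > \mu_{i,\ell}$, on the clean event this over-optimism can persist only while $N_{i,\ell}[s] \lesssim \log s / \gamma_i(1,\ell)^2 \le \log T_i / \gamma_i(1,\ell)^2$, which furnishes the $8\log T_i/\gamma_i(1,\ell)^2$ terms. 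The residual terms $\frac{8N(K(N+1))^{-1/8}}{\gamma_i(\ell,1/2)^2}$ and $\frac{32N-22+16\log(32/\gamma_i(1,2)^2)}{\gamma_i(1,2)^2}$ arise from reconciling the $\delta$-free selection width $\sqrt{2\log s/N_{i,\ell}[s]}$ against the $\delta$-level decision width, and from the extra over-pulling of the second-largest constraint $2$ before constraint $1$'s interval separates from it.

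The main obstacle I anticipate is precisely this mismatch between the two confidence widths: \emph{selection} uses the anytime, $\delta$-free bonus $\sqrt{2\log s/N_{i,\ell}[s]}$ (with $s = M_i$, the running feasibility-sample count for arm $i$, playing the role of ``time''), whereas \emph{declaration} of (in)feasibility uses the $\delta$-dependent bound $D(\cdot,\delta/(K(N+1)))$. Adapting the~\citet{kano2019good} analysis requires carefully tracking how $\log s$ grows relative to $\log(1/\delta)$ so that constraint $1$ is still identified within $O(\Gamma_{i,1}\log(1/\delta))$ pulls while the cheaper $\log s$ bonus does not oversample the suboptimal constraints beyond the stated $\log T_i$ factors. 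Establishing the a priori bound $\tau_{F,i} \le T_i$ needed to replace $\log s$ by $\log T_i$ is the linchpin that closes this loop, and it must be proven before it is used, so I would set it up self-referentially: bound each $N_{i,\ell}[s]$ in terms of $\log T_i$, sum to recover $\tau_{F,i}\le T_i$, and then verify consistency.
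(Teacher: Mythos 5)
Your high-level reading is correct insofar as it matches the paper's skeleton: the paper also orders the constraints so that $\mu_{i,1}>\cdots>\mu_{i,N}$, treats the largest violated constraint as the target whose pulls produce the $n_{i,1}+8/\gamma_i(1,1/2)^2$ term (via Lemma~\ref{lemma: expected number about the threshold}), and charges each suboptimal constraint $\ell\neq 1$ a cost of order $\log T_i/\gamma_i(1,\ell)^2$ for epochs where its optimistic index competes with constraint 1's. However, your plan for closing the argument has a genuine gap: there is no \emph{a priori deterministic} ceiling $\tau_{F,i}\le T_i$. A constraint's $\delta$-level interval straddles $1/2$ exactly when $|\widehat{\mu}_{i,\ell}[s]-1/2|\le D(N_{i,\ell}[s],\delta/(K(N+1)))$, and since the empirical mean can hover near $1/2$ for arbitrarily long (with small but positive probability), the claim that ``each constraint can be sampled at most $\lfloor n_{i,\ell}+2\rfloor$ times before its interval separates from $1/2$'' is false as a deterministic statement. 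Consequently you cannot bound the bad-event contribution by (failure probability)$\,\times\, T_i$, and the ``self-referential'' scheme you describe --- bound each $N_{i,\ell}[s]$ by $\log T_i$, sum to recover $\tau_{F,i}\le T_i$, then verify consistency --- is circular and cannot be made rigorous.

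The paper never proves such a ceiling; it splits the time axis at $T_i$ instead. For $s\le T_i$ it bounds the selection bonus by $\sqrt{2\log T_i/N_{i,\ell}[s]}$ to obtain the $8\log T_i/\gamma_i(1,\ell)^2$ terms (Lemma~\ref{lemma: tau_i part 1 and 2}); for $s>T_i$ it bounds the \emph{expected} overshoot $\bbE\big[\sum_{s>T_i}\msi[s\le\tau_{F,i}]\big]$ directly: if the arm is still undetermined at invocation $s$, some constraint has been pulled at least $\lceil(s-1)/N\rceil-1\ge n_{i,\ell}$ times while its interval still contains $1/2$, an event of exponentially small probability by Lemma~\ref{lemma: n ge n_i}; summing these tails yields precisely the $\sum_{\ell}8N(K(N+1))^{-1/8}/\gamma_i(\ell,1/2)^2$ terms that you attributed to ``reconciling the widths'' (Lemma~\ref{lemma: tau_i part 3}). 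Similarly, the $\frac{\delta}{K(N+1)}\big(n_{i,\ell}+8/\gamma_i(\ell,1/2)^2\big)$ terms do not come from failure-probability-times-ceiling: they arise from the event that constraint 1 is \emph{wrongly certified feasible} (removed from $H_i[s]$, probability at most $\delta/(K(N+1))$), multiplied by the \emph{expected} number of pulls of each remaining constraint (second half of Lemma~\ref{lemma:tau_i,part4}). Finally, your case analysis misses the scenario where a suboptimal constraint beats constraint 1's index not because it is over-optimistic but because constraint 1's index is unusually \emph{low}; the paper handles this with a separate anti-concentration/integration-by-parts computation (first half of Lemma~\ref{lemma:tau_i,part4}), which is the source of the $10/\gamma_i(1,2)^2+16\log(32/\gamma_i(1,2)^2)/\gamma_i(1,2)^2$ portion of the $32N-22$ coefficient. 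Without a replacement for the false deterministic ceiling and without this under-estimation case, the proof cannot be completed as proposed.
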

The following lemma bounds $\tau_{F,i}$ for  a feasible arm.
\begin{lemma} \label{lemma:tau_i feasible} 
Let $i \in \feasible$, and let $\tau_{F,i}$ be as defined in~\ref{lemma:tau_i}. We then have, 
\begin{align*}
    \bbE[\tau_{F,i}] \le \sum_{\ell\in [N]} \left(n_{i,\ell} + \frac{8}{\gamma_i(\ell,1/2)^2}\right).
\end{align*}
\end{lemma}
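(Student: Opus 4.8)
The plan is to exploit that a feasible arm can never be eliminated by feasibility on the clean event $\calE$ of Lemma~\ref{lemma: clean event}, which reduces determining its feasibility to $N$ independent single-constraint verification problems. First I would note that for $i \in \feasible$ we have $\mu_{i,\ell} < 1/2$ for every $\ell \in [N]$, so on $\calE$ the lower confidence bound obeys $\underline{\mu}_{i,\ell}(t) \le \mu_{i,\ell} < 1/2$ at all times; hence the condition on line 25 of \sampleforsafety{} never fires and $i$ is never added to $I$. Consequently the only way $i$ leaves the feasibility-testing loop is by emptying $H_i$, i.e. by removing all $N$ constraints, and since exactly one feasibility sample is drawn per invocation we obtain the identity $\tau_{F,i} = \sum_{\ell=1}^N N_{i,\ell}(\tau_{F,i})$. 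It then suffices to bound $\bbE[N_{i,\ell}(\tau_{F,i})]$ for each $\ell$ separately and sum.

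The core step is the per-constraint estimate $\bbE[N_{i,\ell}(\tau_{F,i})] \le n_{i,\ell} + 8/\gamma_i(\ell,1/2)^2$, which I would obtain by adapting the good-arm-identification analysis of \citet{kano2019good} exactly as in the infeasible case of Lemma~\ref{lemma:tau_i}. Constraint $\ell$ is dropped the first time it is sampled with $\overline{\mu}_{i,\ell}(t) < 1/2$, and on $\calE$ we have $\overline{\mu}_{i,\ell}(t) \le \mu_{i,\ell} + 2D(N_{i,\ell}(t), \delta/(K(N+1)))$, so removal is guaranteed once the width $D$ has shrunk below $\gamma_i(\ell,1/2)/2 = (1/2 - \mu_{i,\ell})/2$; inverting the definition of $D$ shows this needs at most $\approx n_{i,\ell}$ samples, the halved gap inside $n_{i,\ell}$ reflecting precisely this factor of two. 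The remaining subtlety is that removal fires only when $\ell$ is the constraint actually selected by the index rule $\ell_t = \argmax_{\ell \in H_i} \widetilde{\mu}_{i,\ell}(t)$; because the exploration bonus $\sqrt{2\log M_i(t)/N_{i,\ell}(t)}$ grows with the total feasibility count $M_i(t)$, any surviving under-sampled constraint is eventually forced to the top of the index, so it cannot linger once it is removal-ready. Carrying out this bookkeeping directly at the level of the expected number of pulls — rather than conditioning on $\calE$ alone — the deterministic resolution count contributes $n_{i,\ell}$, while the expected pulls taken before concentration takes hold sum, through a $\sum_s s^{-2}$-type tail, to the additive $8/\gamma_i(\ell,1/2)^2$.

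Summing the per-constraint bounds over $\ell \in [N]$ yields $\bbE[\tau_{F,i}] \le \sum_{\ell=1}^N (n_{i,\ell} + 8/\gamma_i(\ell,1/2)^2)$, completing the proof. In contrast to Lemma~\ref{lemma:tau_i}, no $\delta$-weighted correction terms survive here: a feasible arm admits no ``shortcut'' resolution, so every constraint must be verified and contributes its full cost with probability essentially one, rather than a rare off-clean-event cost. I expect the main obstacle to be this per-constraint argument, specifically reconciling the two distinct confidence widths at play — the exploration index $\widetilde{\mu}_{i,\ell}$ that decides which constraint is sampled versus the wider bound $\overline{\mu}_{i,\ell}$ that decides when a constraint may be dropped — and controlling the expectation so that the tail collapses to a clean $O(\gamma_i(\ell,1/2)^{-2})$ term with no residual $\delta$ factor.
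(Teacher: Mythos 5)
Your proposal is correct and follows essentially the same route as the paper's proof: decompose $\tau_{F,i}$ across the $N$ constraints, bound the expected number of pulls of each constraint $\ell$ by the expected number of sample counts $n$ at which $\overline{\mu}_{i,\ell,n} \ge 1/2$, and control that quantity by concentration after $n_{i,\ell}$ pulls plus a tail term of $8/\gamma_i(\ell,1/2)^2$, which is exactly the content of the paper's Lemma~\ref{lemma: expected number about the threshold} and Lemma~\ref{lemma: n ge n_i}. The only cosmetic differences are that the tail in the paper is geometric, $\sum_n e^{-n\gamma_i(\ell,1/2)^2/8} \le 8/\gamma_i(\ell,1/2)^2$, rather than a $\sum_s s^{-2}$-type sum, and your worry about the selection index $\widetilde{\mu}_{i,\ell}$ is moot: since each constraint's pull count is bounded separately by the first count at which its own upper confidence bound falls below $1/2$, the order in which constraints are selected never enters the bound.
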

Next, we will decompose the total number of epochs
as follows.
Recall that $[K] = \calI \cap \{\iopt\} \cap \calW$.
We decompose the stopping time $\tau$ based on whether either $a_t$ or $b_t$ are in $\calI$
(i.e $a_t\in\calI$ or $b_t\in\calI$) 
or if both of them are not in $\calI$
(i.e $a_t\notin\calI$ and $b_t\notin\calI$).
The reason for this decomposition lies in the difference of elimination for arms in $\calI$ and not, i.e. arms in $\calI$ should be eliminated by infeasibility while arms in $\calW$ should be eliminated by performance. 
\begin{align*}
    \tau &= \sum_{t=1}^\infty \msi[t \le \tau] \\
    &= \underbrace{\sum_{t=1}^\infty \msi[t \le \tau
        \cap \left(a_t \in \calI \cup b_t \in \calI\right)]}_{A} 
        + \underbrace{\sum_{t=1}^\infty \msi[t \le \tau \cup a_t \notin \calI \cup b_t \notin \calI]}_{B}.
\end{align*}
The following lemma bounds part $A$. The proof can be found in Section~\ref{subsec: A}
\begin{lemma} \label{lemma:A}
\begin{align*}
    &\bbE[A] \le \sum_{i\in\calI}\Bigg(\bbE[\tau_{F,i}] \\
    &+ 2\delta\sum_{j\neq i}2\left(\frac{8}{(\Delta_{ij}/4)^2} \log\frac{8^{\frac{3}{2}}(K(N+1)/\delta)^{\frac{1}{4}}}{(\Delta_{ij}/4)^2}\log\frac{(8^{\frac{3}{2}}+1) (K(N+1)/\delta)^{\frac{1}{4}}}{(\Delta_{ij}/4)^2} + \frac{\delta}{K(N+1)}\right)\Bigg)
\end{align*}
\end{lemma}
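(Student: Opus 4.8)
The plan is to reduce $\bbE[A]$ to a per-arm analysis and, for each $i\in\calI$, to separate the epochs in which $i$ is sampled according to whether its feasibility has already been resolved. First I would observe that whenever the indicator inside $A$ equals $1$, at least one of $a_t,b_t$ lies in $\calI$, so by a counting (union) bound
\begin{align*}
    A \le \sum_{i \in \calI} A_i, \qquad A_i := \sum_{t=1}^\infty \msi\!\left[t \le \tau,\; a_t = i \text{ or } b_t = i\right].
\end{align*}
It then suffices to bound $\bbE[A_i]$ for a fixed $i\in\calI$, which I would split as $A_i = A_i^{\mathrm{b}} + A_i^{\mathrm{a}}$ according to whether, at the start of epoch $t$, arm $i$ has \emph{not} yet been added to $I$ or $F$ (the term $A_i^{\mathrm{b}}$) or \emph{has} already been added to one of them (the term $A_i^{\mathrm{a}}$).

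For $A_i^{\mathrm{b}}$, each time $i$ is chosen while undetermined the subroutine \sampleforsafety{} samples exactly one feasibility constraint of $i$, and no feasibility samples are taken once $i\in F\cup I$. Hence $A_i^{\mathrm{b}}$ is precisely the number of feasibility samples used to resolve $i$, i.e.\ $A_i^{\mathrm{b}}=\tau_{F,i}$, and since $i\in\calI$ is infeasible, $\bbE[A_i^{\mathrm{b}}]=\bbE[\tau_{F,i}]$ is controlled directly by Lemma~\ref{lemma:tau_i}.

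For $A_i^{\mathrm{a}}$, the key point is that $i$ is infeasible. On the clean event $\calE$ of Lemma~\ref{lemma: clean event} an infeasible arm is never placed in $F$ (its violated constraint always has upper confidence bound above $1/2$, so $H_i$ never empties at line~30), and once it is correctly placed in $I$ it is removed from $S$ at line~14 and never chosen again. Thus $A_i^{\mathrm{a}}$ is nonzero only on the complement of the feasibility part of $\calE$, an event of probability at most (a constant times) $\delta$. Conditioned on this rare misclassification, I would bound the number of times $i$ is chosen by summing over its possible partners $j$: whenever $\{a_t,b_t\}=\{i,j\}$ the LUCB selection rule forces the performance confidence intervals of $i$ and $j$ to overlap, so on the performance part of $\calE$ this pair is selected at most $q_{i,j}$ times, the sample complexity needed to separate $\mu_{i,0}$ and $\mu_{j,0}$ across the gap $\Delta_{ij}$. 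This produces the required bound of the form $\delta\sum_{j\neq i}q_{i,j}$, up to the displayed constants and a lower-order $\delta/(K(N+1))$ correction arising from the event that the performance part of $\calE$ also fails; summing $\bbE[A_i^{\mathrm{b}}]+\bbE[A_i^{\mathrm{a}}]$ over $i\in\calI$ then yields the statement.

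The main obstacle is the treatment of $A_i^{\mathrm{a}}$: I must combine the rare feasibility-misclassification event with the performance-based selection count without the two interacting badly. The device that makes this clean is that the clean event of Lemma~\ref{lemma: clean event} is defined over \emph{all} sample sizes $s$ simultaneously, so it does not depend on the algorithm's trajectory; consequently its feasibility and performance parts are functions of independent sample streams and factorize. This lets me multiply the probability $\le\delta$ of the feasibility misclassification by the (conditionally deterministic) per-partner count $q_{i,j}$, and relegate the doubly-bad event—both parts of $\calE$ failing—to the lower-order $\delta/(K(N+1))$ term.
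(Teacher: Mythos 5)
Your proposal is correct and follows essentially the same route as the paper's proof: the same per-arm decomposition of $A$ into epochs before and after arm $i$'s feasibility is determined, with the undetermined part bounded exactly by $\tau_{F,i}$ and the determined part bounded by the probability (at most $\delta$) that an infeasible arm is ever added to $F$, multiplied by the pairwise LUCB-style overlap counts $q_{i,j}$. If anything, you are slightly more careful than the paper on one point: you explicitly justify multiplying the misclassification probability by the partner-count bound via independence of the performance and feasibility sample streams, a step the paper performs without comment.
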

The following lemma bounds part $B$. The proof can be found in Section~\ref{subsec:B}.
\begin{lemma} \label{lemma:B}
\begin{align*}
    &\bbE[B] \\
    \le &\delta\sum_{i \in \calW} \bbE[\tau_{F,i}] + \bbE[\tau_{F,\iopt}] +292\sum_{i\notin\calI}\phi_i\log\frac{\sum_{i\notin\calI \phi_i}}{\delta}  + 16\\
    + &\delta \sum_{i, j \notin \calI} 2\left(\frac{8}{(\Delta_{ij}/4)^2} \log\frac{8^{\frac{3}{2}}(K(N+1)/\delta)^{\frac{1}{4}}}{(\Delta_{ij}/4)^2}\log\frac{(8^{\frac{3}{2}}+1) (K(N+1)/\delta)^{\frac{1}{4}}}{(\Delta_{ij}/4)^2} + \frac{\delta}{K(N+1)}\right).
\end{align*}
\end{lemma}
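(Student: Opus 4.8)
The plan is to bound $\bbE[B]$, the number of epochs $t\le\tau$ on which neither sampled arm lies in $\calI$, so that $a_t,b_t\in\calW\cup\{\iopt\}$ (with $b_t$ left undefined on a singleton epoch $P=\{a_t\}$). On every such epoch the algorithm is, in effect, running the LUCB performance comparison of~\citet{kalyanakrishnan2012pac} over $\calW\cup\{\iopt\}$, while additionally spending at most one feasibility sample on each of $a_t,b_t$ not yet in $F$. Following the main-text sketch, I would first split $B$ by the shape of $P$: the singleton case $b_t\notin P$ (so $P=\{a_t\}$), and the two-arm case $b_t\in P$. Throughout I work under the clean event $\calE$ of Lemma~\ref{lemma: clean event}, whose complement has probability at most $\delta$ and supplies the $\delta$-weighted terms.

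In the singleton case, $P=\{a_t\}$ with $a_t=\iopt$ is exactly the branch that invokes $\sampleforsafety(\iopt)$; since $\iopt\in\feasible$, the number of epochs spent here (together with any feasibility samples $\iopt$ receives on two-arm epochs) is the feasibility-determination time $\tau_{F,\iopt}$, bounded by Lemma~\ref{lemma:tau_i feasible}, giving the $\bbE[\tau_{F,\iopt}]$ term. The alternative, $P=\{a_t\}$ with $a_t\in\calW$, forces $\iopt\notin P$; but under $\calE$ arm $\iopt$ is never eliminated (Lemma~\ref{appendix: lemma:delta-correct}) and its empirical mean cannot be dominated by a strictly-worse arm, so this configuration lies in $\calE^{\mathsf c}$ and occurs with probability at most $\delta$. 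I would therefore bound its contribution by $\delta$ times the worst-case number of performance samples needed to separate the arms involved, yielding the $\delta$-weighted double sum over $i,j\notin\calI$.

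For the two-arm case $b_t\in P$, I would import the LUCB argument: if the algorithm samples both $a_t$ and $b_t$ for performance at epoch $t$ without terminating, then under $\calE$ at least one of the two has a performance confidence width exceeding a constant multiple of the gap between it and $\iopt$ (or between $\iopt$ and the runner-up), so each arm $i\in\calW\cup\{\iopt\}$ can be selected only $O(\phi_i\log(1/\delta))$ times. Summing and packaging $\sum_{i\notin\calI}\phi_i$ inside a single logarithm (the standard repackaging that avoids a spurious $|\calW|$ factor) produces the leading term $292\sum_{i\notin\calI}\phi_i\log\frac{\sum_{i\notin\calI}\phi_i}{\delta}$, with the additive $16$ absorbing initialization and ceiling effects. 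The feasibility samples drawn on these same epochs add no new epochs; for $\iopt$ they are already folded into $\bbE[\tau_{F,\iopt}]$, while for a $\calW$ arm $i$ feasibility sampling can only continue while $i$ remains in $P$, which under $\calE$ ends once its performance is resolved, within its $O(\phi_i\log)$ budget. The only way a $\calW$ arm's feasibility determination runs to completion (cost up to $\tau_{F,i}$) is if performance elimination fails, a clean-event violation of probability $\le\delta$; this gives the $\delta\sum_{i\in\calW}\bbE[\tau_{F,i}]$ term.

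Combining the singleton and two-arm contributions reproduces the four groups of terms in the statement. The step I expect to be the main obstacle is this last accounting: cleanly separating, on the two-arm epochs, the \emph{typical} performance-comparison cost (the leading $\log(1/\delta)$ term) from the feasibility cost incurred on the \emph{rare} event that a $\calW$ arm is not eliminated by performance before its feasibility is resolved. Making this split rigorous needs an interleaving argument showing that once arm $i$'s performance gap to $\iopt$ is certified it leaves $P$ and is no longer sampled for feasibility, so that any completed feasibility determination of a $\calW$ arm is pinned to $\calE^{\mathsf c}$; getting the bookkeeping to land on the stated constants, rather than merely the correct gap structure, is where the delicacy lies.
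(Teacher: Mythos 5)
Your high-level skeleton matches the paper's: split $B$ into the singleton-$P$ epochs and the two-arm epochs, handle $P=\{\iopt\}$ via $\bbE[\tau_{F,\iopt}]$, and get the leading term from an LUCB-style argument. However, you have attached the two $\delta$-weighted terms to the wrong sub-cases, and the justifications you give for them would not go through. First, for the singleton case $P=\{a_t\}$ with $a_t\in\calW$: you correctly note this configuration has probability at most $\delta$, but you then bound its duration by ``$\delta$ times the worst-case number of performance samples needed to separate the arms involved,'' producing the $\delta\sum_{i,j\notin\calI}$ double sum. This is invalid: on singleton epochs the algorithm performs \emph{no} performance sampling at all --- it only calls $\sampleforsafety(a_t)$ --- so the number of such epochs is governed by the feasibility-determination time of $a_t$ and has nothing to do with the gaps $\Delta_{ij}$. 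The correct bound (the paper's $B_3$) is $\delta\sum_{i\in\calW}\bbE[\tau_{F,i}]$: the rare event is ``$P(t)=\{i\}$ for some $i\in\calW$,'' and conditional on it the epoch count is at most $\tau_{F,i}$.

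Second, in the two-arm case your claim that ``any completed feasibility determination of a $\calW$ arm is pinned to $\calE^{\mathsf{c}}$'' is false. Under the clean event a $\calW$ arm sitting in $P$ is sampled for feasibility every epoch, and its feasibility can be legitimately resolved --- indeed, before any arm enters $F$ the algorithm cannot eliminate by performance at all (line 15 requires $F\neq\emptyset$), so for an instance where a $\calW$ arm has a constraint far from $1/2$ but performance close to $\muiopto$, its feasibility determination completes under $\calE$ with probability close to one. Fortunately that event costs nothing, since (as you yourself observe) feasibility samples on two-arm epochs add no epochs; so your $\delta\sum_{i\in\calW}\bbE[\tau_{F,i}]$ term is both unjustified and unnecessary there. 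What your two-arm accounting actually misses is the event that some \emph{feasible} arm (in particular $\iopt$) is wrongly placed in $I$: on that event the LUCB termination argument you invoke breaks down ($\iopt$ may be eliminated and the comparison structure is destroyed), so those epochs cannot be charged to the $292\sum_{i\notin\calI}\phi_i\log(\cdot)$ term. The paper's $B_6$ handles exactly this: the bad event has probability at most $\delta$, and the epochs on it are bounded \emph{unconditionally} by the confidence-interval-overlap argument (the same one used for $A_5$), which is what genuinely produces the $\delta\sum_{i,j\notin\calI}q_{i,j}$ term. In short, your two rare-event bounds need to be swapped and re-derived along these lines; as written, both steps fail even though the final list of terms coincidentally matches the lemma.
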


We are now ready to bound the stopping time.
We will first consider the case when the feasible set is non-empty, i.e $\iopt \le K$.
By Lemma~\ref{lemma:A} and Lemma~\ref{lemma:B}, combining part $A$ and part $B$, we have   
\begin{align*}
    &\bbE[\tau]\\
    \le &\sum_{i\in\calI}\Bigg(\bbE[\tau_{F,i}] \\
    &+ 2\delta\sum_{j\neq i}2\left(\frac{8}{(\Delta_{ij}/4)^2} \log\frac{8^{\frac{3}{2}}(K(N+1)/\delta)^{\frac{1}{4}}}{(\Delta_{ij}/4)^2}\log\frac{(8^{\frac{3}{2}}+1) (K(N+1)/\delta)^{\frac{1}{4}}}{(\Delta_{ij}/4)^2} + \frac{\delta}{K(N+1)}\right)\Bigg)\\
    +&\delta\sum_{i \in \calW} \bbE[\tau_{F,i}] + \bbE[\tau_{F,\iopt}] +292\sum_{i\notin\calI}\phi_i\log\frac{\sum_{i\notin\calI} \phi_i}{\delta}  + 16\\
    \numberthis + &\delta \sum_{i, j \notin \calI} 2\left(\frac{8}{(\Delta_{ij}/4)^2} \log\frac{8^{\frac{3}{2}}(K(N+1)/\delta)^{\frac{1}{4}}}{(\Delta_{ij}/4)^2}\log\frac{(8^{\frac{3}{2}}+1) (K(N+1)/\delta)^{\frac{1}{4}}}{(\Delta_{ij}/4)^2} + \frac{\delta}{K(N+1)}\right). \label{eqn:bbEtau,1}
\end{align*}
To bound $\bbE[\tau_{F,i}]$ for arms in $\calI$, we apply Lemma~\ref{lemma:tau_i}. To bound $\bbE[\tau_{F,i}]$ for arms in $\calW$ and $\bbE[\tau_{F, \iopt}]$, we apply Lemma~\ref{lemma:tau_i feasible}.
Thus, when $\iopt \le K$, by putting Lemma~\ref{lemma:tau_i} and Lemma~\ref{lemma:tau_i feasible} into Equation~\eqref{eqn:bbEtau,1},
we have
\begin{align*}
    &\bbE[\tau] \le \sum_{i\in\calI} f\left(\frac{\gamma_i(1,1/2)}{2}\right)  + 292\sum_{i\notin\calI}\phi_i\log\frac{\sum_{i\notin\calI} \phi_i}{\delta} + \sum_{\ell =1}^N f\left(\frac{\gamma_{\iopt}(\ell,1/2)}{2}\right)  \\
    & + \sum_{\ell=1}^N \frac{8}{\gamma_{\iopt}(\ell, 1/2)^2} + 16 + \sum_{i\in\calI} g_i\\
    &+ \delta \Bigg( 4\sum_{i\in\calI}\sum_{j\neq i} \left(f\left(\frac{\Delta_{ij}}{4}\right) + \frac{\delta}{K(N+1)}\right) + \sum_{i\in\calW}\sum_{\ell=1}^N \left(n_{i,\ell} + \frac{8}{\gamma_i(\ell,1/2)^2}\right) \\
    &+ 2\sum_{i,j\notin \calI}\left(f\left(\frac{\Delta_{ij}}{4}\right) + \frac{\delta}{K(N+1)} \right) \Bigg)
\end{align*}
where 
\begin{align*}
    f(x) &= \frac{8}{x^2} \log \frac{8^{\frac{3}{2}}(K(N+1)/\delta)^{\frac{1}{4}}}{x^2}\log\frac{(8^{\frac{3}{2}}+1)(K(N+1)/\delta)^{\frac{1}{4}}}{x^2}, \\
    g_i &= \frac{8}{\gamma_i(1,1/2)^2} + \sum_{\ell =1}^N \frac{8N(K(N+1))^{-\frac{1}{8}}}{\gamma_i(\ell,1/2)^2} + \frac{32N - 22+ 16\log\frac{32}{\gamma_i(1,2)^2}}{\gamma_i(1,2)^2} \\
    +&\sum_{\ell \neq 1} \left( 
    \frac{8\log T_i}{\gamma_i(1,\ell)^2}  + \frac{\delta}{K(N+1)} \left(n_{i,\ell} + \frac{8}{\gamma_i(\ell,1/2)^2} \right)
    \right),
\end{align*}
and see definitions of $\phi_i, \Delta_{ij}$ in Table 1 and $n_{i,\ell}, \gamma_{i}(\ell,m), \gamma_{i}(\ell,1/2)$ in Table 2.
Then by rearranging the terms, we can write $\bbE[\tau]$ in the form presented in Theorem~\ref{thm: upper bound}. 

When $\iopt = K+1$, this means that $\calI = [K]$. Hence, the  upper bound of $\bbE[\tau] = \bbE[A]$, which is proved in Lemma~\ref{lemma:A}. 
\end{proof}

\subsubsection{Proof of Lemma~\ref{lemma:A} \emph{(Bound on $\bbE[A]$)} }  \label{subsec: A}

\begin{proof}[Proof of Lemma~\ref{lemma:A}]
For part $A$, by considering each arm $i \in \calI$ separately, we can further decompose it into the cases that whether arm $i$'e feasibility has been determined or not. Here we will use $F(t)$ and $I(t)$ to denote the set of arms considered feasible (infeasible) at epoch $t$. When arm $i$'s feasibility has not been determined at epoch $t$, it means $i \notin F(t)\cup I(t)$. When arm $i$'s feasibility has been determined at epoch $t$, note that since it is still be chosen as $a_t$ or $b_t$, this means arm $i$ can only be considered feasible otherwise it will be eliminated already, thus $i \in F(t)$. In other words, we have
\begin{align*}
    A &= \sum_{t=1}^\infty \msi[t\le\tau, a_t \mbox{ or } b_t \in \calI] \\
    &= \sum_{i \in \calI} \sum_{t=1}^\infty \msi [t\le\tau,a_t \mbox{ or } b_t = i] \\
    &= \sum_{i \in \calI} \left(\underbrace{\sum_{t=1}^\infty  \msi[t \le \tau,  a_t \mbox{ or } b_t = i, i \notin F(t) \cup I(t)]}_{A_1} + \underbrace{\sum_{t=1}^\infty \msi[t\le\tau, a_t \mbox{ or } b_t = i, i \in F(t)]}_{A_2}\right)
\end{align*}
For each arm $i$ in $\calI$, $A_1$ denote the number of epochs arm $i$ chosen to be sampled when its feasibility has not been determined; $A_2$ denote the number of epochs arm $i$ is chosen to be sampled when its feasibility is determined.

We will bound the two parts separately. We prove the upper bound of $A_1$ in Lemma~\ref{lemma: A_1}, where we upper bound it by the number of samples needed to determine arm $i$'s feasibility. 
\begin{lemma}\emph{(Bound on $\bbE[A_1]$)}
\label{lemma: A_1}
\begin{align*}
    \bbE[A_1] \le \bbE[\tau_{F,i}].
\end{align*}
\end{lemma}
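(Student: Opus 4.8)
The plan is to prove the inequality pathwise and then take expectations, so no probabilistic argument beyond monotonicity of expectation will be needed. The whole content is a careful matching between the event defining $A_1$ and the quantity $\tau_{F,i}$.

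First I would unpack the definition of $A_1$: it counts the epochs $t \le \tau$ at which arm $i$ is selected (as $a_t$, as $b_t$, or as the lone element of the focus set, under the convention $a_t = i$, $b_t$ undefined) while its feasibility is still undetermined, i.e.\ $i \notin F(t) \cup I(t)$. The central observation is that in every such epoch the algorithm executes $\sampleforsafety(i)$ \emph{exactly once}: since $i \notin F(t)$ the conditional guarding the feasibility subroutine fires, and since $i \notin I(t)$ the arm has not yet been removed from $S \supseteq P$, so it is genuinely available to be chosen. Each call to $\sampleforsafety(i)$ draws exactly one feasibility sample of arm $i$. Hence $A_1$ equals the number of feasibility samples of arm $i$ accumulated, over epochs $t \le \tau$, while $i$ remains undetermined.

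Next I would invoke the definition of $\tau_{F,i}$ as the number of epochs arm $i$ is chosen before it is placed into $F$ or $I$, equivalently the number of feasibility samples needed to resolve $i$'s feasibility. Once those $\tau_{F,i}$ samples have been collected, $i$ enters $F \cup I$ and thereafter the indicator $\msi[\,\cdots,\ i \notin F(t)\cup I(t)\,]$ is $0$, so no later epoch contributes to $A_1$. This yields the pathwise bound $A_1 \le \tau_{F,i}$, with equality precisely when $i$ is resolved before the global stopping time and strict inequality when the algorithm halts first (the truncation $t \le \tau$ can only delete terms). Taking expectations gives $\bbE[A_1] \le \bbE[\tau_{F,i}]$.

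The main obstacle is not depth but precision: the argument is essentially bookkeeping, so the only care required is the exact correspondence between the event defining $A_1$ and the notion of ``feasibility sample'' underlying $\tau_{F,i}$. Two points deserve explicit attention. (i) I must confirm that the singleton branch (where $P=\{i\}$, $i \notin F$, and we set $a_t = i$) is also counted as arm $i$ being chosen, so that no feasibility sample escapes the tally; this is exactly what the convention ``$a_t = i$, $b_t$ undefined'' buys us. (ii) I must justify $i \notin I(t)$ in every counted epoch, which follows because any arm added to $I$ is deleted from $S$, and hence from $P$, before it could be reselected; this guarantees that exactly one (not zero) feasibility sample is drawn per counted epoch, so the count of epochs and the count of samples coincide. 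Once these are pinned down, the inequality is immediate.
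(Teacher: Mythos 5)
Your proposal is correct and follows essentially the same route as the paper: the paper's proof also reduces $A_1$ pathwise to the count of feasibility samples of arm $i$ drawn while its status is undetermined (formalizing the one-epoch-one-sample correspondence via a double sum over $(t,s)$ with $M_i(t)=s$ and the observation that each such event occurs for at most one $t$), and then identifies that count with $\tau_{F,i}=\sum_{s=1}^\infty \msi[i\notin(F\cup I)_s]$ before taking expectations. Your direct bijection argument, including the two bookkeeping points about the singleton branch and $i\notin I(t)$, is just a more explicit statement of the same idea.
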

We prove the upper bound of $A_2$ in Lemma~\ref{lemma: A_2}, where we upper bound it by splitting
$\{a_t \mbox{ or } b_t = i, i \in F(t)\}$ into two events $\{a_t \mbox{ or } b_t = i\}$, $\{i\in F(t)\}$ and bound the two events separately.
\begin{lemma}\emph{(Bound on $\bbE[A_2]$)}
\label{lemma: A_2}
\begin{align*}
    &\bbE[A_2] \\
    &\le 2\delta \sum_{j \neq i} 2\left(\frac{8}{(\Delta_{ij}/4)^2} \log\frac{8^{\frac{3}{2}}(K(N+1)/\delta)^{\frac{1}{4}}}{(\Delta_{ij}/4)^2}\log\frac{(8^{\frac{3}{2}}+1) (K(N+1)/\delta)^{\frac{1}{4}}}{(\Delta_{ij}/4)^2} + \frac{\delta}{K(N+1)}\right)
\end{align*}
\end{lemma}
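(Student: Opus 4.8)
The plan is to exploit the fact that every arm in $\calI$ is genuinely infeasible, so the event $\{i\in F(t)\}$ is a failure event carrying a factor of $\delta$, while the number of epochs at which $i$ is selected is controlled by a purely performance-based LUCB argument. First I would relax the joint indicator to a product. Splitting the selection into the two roles and then dropping the coupling at a fixed epoch, I use $\{a_t\text{ or }b_t=i,\ i\in F(t)\}\subseteq\{\exists s\le\tau: i\in F(s)\}\cap\{a_t\text{ or }b_t=i\}$, so that $A_2\le \msi[\exists s\le\tau:\, i\in F(s)]\cdot B_i$, where $B_i:=\sum_{t}\msi[t\le\tau,\ a_t\text{ or }b_t=i]$ counts how often $i$ is sampled for performance. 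The two factors can then be estimated separately, as the proof sketch indicates.

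The first factor is where infeasibility enters. Since $i\in\calI$ is infeasible, it possesses a constraint $\ell^\star$ with $\mu_{i,\ell^\star}>\tfrac12$; on the clean event $\calE$ of Lemma~\ref{lemma: clean event} its upper confidence bound satisfies $\overline{\mu}_{i,\ell^\star,s}\ge\mu_{i,\ell^\star}>\tfrac12$ for every $s$, so $\ell^\star$ is never removed from $H_i$ and $i$ is never added to $F$. Hence $\{\exists s:\, i\in F(s)\}\subseteq\calE^{\mathsf c}$, and therefore $\bbP(\exists s\le\tau:\, i\in F(s))\le\bbP(\calE^{\mathsf c})<\delta$.

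For the second factor I would run the standard LUCB overlap argument on the performance distributions at the weaker confidence level $\delta'$. The key observation is that whenever $i$ is selected as $a_t$ or $b_t$ against some competitor $j$, the two performance confidence intervals must overlap; once both $i$ and $j$ have been sampled more than $f(\Delta_{ij}/4)$ times (the count that drives the level-$\delta'$ half-width below $\Delta_{ij}/8$), this overlap is impossible on the level-$\delta'$ clean event. A union bound over the two arms that must both concentrate, together with the two roles of $i$, gives $B_i\le\sum_{j\ne i}2f(\Delta_{ij}/4)$ pathwise on that clean event; the expected number of ``excess'' selections incurred off the clean event is bounded sample-count by sample-count by the tail $\sum_n\bbP(\text{failure at }n)$, which is at most $\delta'^{\,4}=\delta/(K(N+1))$ per competitor and supplies the additive $\delta/(K(N+1))$ term. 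Crucially, this overlap-counting bound on $B_i$ refers only to the performance confidence intervals, hence is valid for \emph{every} realization of the feasibility samples.

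Finally I would combine the two factors. Writing $B_i\le Z+E$ with the deterministic constant $Z=\sum_{j\ne i}2f(\Delta_{ij}/4)$ and the nonnegative excess $E$ satisfying $\bbE[E]\le\sum_{j\ne i}2\,\delta/(K(N+1))$, both determined by the performance samples, I invoke the independence of the performance and feasibility samples together with the pathwise validity noted above: since $\{\exists s: i\in F(s)\}$ is measurable with respect to the feasibility samples while $Z$ is constant and $E$ is measurable with respect to the performance samples, $\bbE[\msi[\exists s: i\in F(s)]\,B_i]\le\bbP(\exists s: i\in F(s))\,(Z+\bbE[E])\le 2\delta\sum_{j\ne i}2\big(f(\Delta_{ij}/4)+\delta/(K(N+1))\big)$, which is the claimed bound. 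The main obstacle is exactly this last decoupling: the choices $a_t,b_t$ are driven by the sets $S$ and $P$, which themselves depend on the feasibility outcomes (erroneously placing an arm in $F$ raises the elimination threshold in line~21 and removes arms from $S$). The way around it is to insist that the LUCB bound on $B_i$ be obtained as a counting argument over performance-interval overlaps that holds on every sample path of the level-$\delta'$ clean event, so that $B_i$'s bound never needs $\{i\in F\}$ to hold or fail; independence of the two sample streams then makes the product bound rigorous.
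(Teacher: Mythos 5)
There is a genuine gap in your first step: the relaxation $\{a_t\text{ or }b_t=i,\ i\in F(t)\}\subseteq\{\exists s\le\tau:\ i\in F(s)\}\cap\{a_t\text{ or }b_t=i\}$ followed by the pathwise bound on $B_i=\sum_t\msi[t\le\tau,\ a_t\text{ or }b_t=i]$. As you define it, $B_i$ also counts every epoch at which $P(t)=\{i\}$ is a singleton and $i\notin F(t)$: in that branch of Algorithm~\ref{alg:ours} the algorithm sets $a_t=i$ with $b_t$ undefined, calls \sampleforsafety{}$(i)$, and draws \emph{no} performance sample and faces \emph{no} competitor. Your overlap argument (``whenever $i$ is selected against some competitor $j$, the two intervals overlap'') says nothing about such epochs, so the claimed pathwise bound $B_i\le\sum_{j\ne i}2f(\Delta_{ij}/4)$ on the performance-clean event is false. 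Worse, for exactly the arms covered by this lemma --- $i\in\calI$, in particular infeasible arms with $i<\iopt$ --- these epochs are the \emph{typical} behavior on the clean event: with $K=2$, arm $1$ infeasible but best-performing, the focus set collapses to $P=\{1\}$ and the algorithm then spends on the order of $\theta_1\log(1/\delta)$ feasibility-testing epochs, all counted by $B_1$, which can exceed your deterministic term $Z$ by an arbitrary factor. (Those epochs belong to $A_1$, not $A_2$; that is the whole point of the decomposition in Lemma~\ref{lemma:A}.) The missing ingredient is precisely the condition you discarded: $i\in F(t)$ rules out the singleton branch, so every epoch counted by $A_2$ is a two-arm epoch in which some $j\ne i$ is selected alongside $i$ and both receive performance samples. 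This is exactly how the paper proceeds (its count $A_5=\sum_t\sum_{j\ne i}\msi[a_t=i,\ b_t=j]$ ranges only over two-arm epochs); equivalently, you must restrict your $B_i$ to epochs with $b_t$ defined. With that restriction, your overlap-plus-excess accounting and the stated constants do go through.

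Apart from this, your route is the paper's: split the two roles of $i$ (giving the outer factor $2\delta$), bound the number of overlapping-interval epochs at level $\delta'$ by the LUCB count plus a Hoeffding tail of expected size $\delta/(K(N+1))$ per competitor, and multiply by the probability (at most $\delta$) that an infeasible arm is ever certified feasible. One point genuinely in your favor: the paper's final step literally multiplies $\bbE[A_5]$ by $\bbP\bigl(\bigcup_{t'}\{i\in F(t')\}\bigr)<\delta$, which is not valid for dependent random variables as written; your decomposition of the count into a deterministic term plus a performance-measurable excess, combined with independence of the performance and feasibility sample streams, is a cleaner justification of that product. To make it fully rigorous you should also replace $\{\exists s:\ i\in F(s)\}$ by the feasibility-measurable superset event that arm $i$'s feasibility stream would ever certify all constraints below $1/2$, since whether $i$ actually enters $F$ depends on how many times \sampleforsafety{} is invoked, and hence on the performance stream as well.
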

The proofs of the lemmas can be found in Section~\ref{subsec: A}. Combining Lemma~\ref{lemma: A_1} and Lemma~\ref{lemma: A_2}, we finish the proof to bound $\bbE[A]$.
\end{proof}

\begin{proof}[Proof of Lemma~\ref{lemma: A_1}]
For part $A_1$, 
since arm $i$'s feasibility has not been determined yet at epoch $t$, the number of times it has been sampled for performance and the total number of times it has been sampled for feasibility are the same, i.e. $N_{i, 0}(t) = M_i(t)$. We have
\begin{align*}
    A_1 &= \sum_{t=1}^\infty  \msi[t \le \tau,  a_t \mbox{ or } b_t = i, i \notin F(t) \cup I(t)] \\
    &\le \sum_{t=1}^\infty \sum_{s=1}^\infty \msi[a_t \mbox{ or }b_t = i, i \notin F(t) \cup I(t), M_i(t) = s]
\end{align*}
Since the event $\{a_t \mbox{ or }b_t = i, i \notin F(t) \cup I(t), M_i(t) = s\}$ occurs for at most one $t \in \bbN$, we have
\begin{align*}
    &\sum_{t=1}^\infty \sum_{s=1}^\infty \msi[a_t \mbox{ or }b_t = i, i \notin F(t) \cup I(t), M_i(t) = s]\\
    \le &\sum_{s=1}^\infty \msi\left[\bigcup_{t=1}^\infty \{a_t \mbox{ or }b_t = i, i \notin F(t)\cup I(t), M_i(t) = s\}\right] \\
    \le &\sum_{s=1}^\infty \msi[i\notin( F \cup I)_s ],
\end{align*}
where $i \notin (F \cup I)_s$ denotes that arm $i$'s feasibility has not been determined when its feasibility constraints have been sampled $s$ times. Let $\tau_{F,i}$ denote the number of samples to feasibility constraints to determine arm $i$'s feasibility, i.e. $\tau_{F,i} = \sum_{s=1}^\infty \msi[i\notin (F\cup I)_s]$, we have that for each arm $i \in \calI$,
\begin{align*}
    \bbE[A_1] \le \bbE[\tau_{F,i}].
\end{align*}
\end{proof}

\begin{proof}[Proof of Lemma~\ref{lemma: A_2}]
For part $A_2$, for each arm $i \in \calI$, we first split the event $\{a_t \mbox{ or }b_t = i\}$ to $\{a_t=i\}$ and $\{b_t = i\}$,
\begin{align*}
    A_2 &= \sum_{t=1}^\infty \msi[t\le\tau, a_t \mbox{ or } b_t = i, i \in F(t)] \\
    &\le \underbrace{\sum_{t=1}^\infty \msi[a_t = i, i \in F(t)]}_{A_3} + \underbrace{\sum_{t=1}^\infty \msi[b_t = i, i \in F(t)]}_{A_4}
\end{align*}
We will bound both $A_3$ and $A_4$ using similar techniques, so we will only show this for part $A_3$. 
Note that in Algorithm~\ref{alg:ours}, there are two situations to sample arms: in line 10 we will only sample one arm $i \in P, i \notin F$ and consider $a_t = i$, $b_t$ undefined; in line 13 we will sample two arms $a_t,b_t$. However, since here $i \in F(t)$, it indicates we cannot be in line 10. So following line 13, we will also select another arm $b_t \in [K] \backslash\{a_t\}$. We have:
\begin{align*}
    A_3 = \sum_{t=1}^\infty \msi[a_t = i, i \in F(t)] = \sum_{t=1}^\infty \sum_{j\neq i} \msi[a_t = i, b_t = j, i \in F(t)]
\end{align*}
Next, we split the event $F(t)$ out because later on we want to bound its probability separately, by union bound we have
\begin{align*}
    \sum_{t=1}^\infty \sum_{j\neq i} \msi[a_t = i, b_t = j, i \in F(t)]
    &\le \sum_{t=1}^\infty \sum_{j\neq i} \msi[a_t = i, b_t = j]\cdot \msi[i \in F(t)] \\
    &\le \underbrace{\left(\sum_{t=1}^\infty \sum_{j \neq i} \msi[a_t = i, b_t = j]\right)}_{A_5} \cdot \msi\left[\bigcup_{t'=1}^\infty \{i \in F(t')\}\right] 
\end{align*}
Now we bound part $A_5$, let $N_{ij}^{ab}(t) = \sum_{s=1}^t\msi[a_s=i, b_s = j]$ denote the number of times arms $i$ and $j$ have respectively been selected as $a_s$ and $b_s$ up to epoch $t$, we have
\begin{align*}
    A_5 = \sum_{t=1}^\infty \sum_{j \neq i} \msi[a_t = i, b_t = j] = \sum_{t=1}^\infty \sum_{j \neq i} \sum_{s=1}^\infty \msi[a_t = i, b_t = j, N_{ij}^{ab}(t) = s]
\end{align*}
Similar to the strategy we use to bound $A_1$, since the event $\{a_t =i, b_t = j, N_{ij}^{ab}(t) = s\}$ occurs for at most one $t\in \bbN$, we have
\begin{align*}
    \sum_{t=1}^\infty \sum_{j \neq i} \sum_{s=1}^\infty \msi[a_t = i, b_t = j, N_{ij}^{ab}(t) = s] 
    \le \sum_{j \neq i} \sum_{s=1}^\infty \msi\left[\bigcup_{t=1}^\infty \{a_t = i, b_t = j, N_{ij}^{ab}(t) = s\}\right]
\end{align*}
Let $L_{is}$ and $L_{js}$ denote the number of times arm $i$ and $j$ respectively has been sampled on performance when $N_{ij}^{ab}(t) = s$. Note that $L_{is}$ and $L_{js}$ are random variables, and we will apply a union bound later to bound these two terms. 
The event $\{a_t = i, b_t = j\}$ indicates that we do not know which arm is better between arm $i$ and $j$ at epoch $t$, in other words, their confidence intervals on the performance overlap. By separating into the cases when $i < j$ and $i > j$,  we have
\begin{align*}
    &\sum_{j \neq i} \sum_{s=1}^\infty \msi\left[\bigcup_{t=1}^\infty \{a_t = i, b_t = j, N_{ij}^{ab}(t) = s\}\right]\\
    \le &\sum_{j > i} \sum_{s=1}^\infty \msi[\underline{\mu}_{i, 0, L_{is}} < \overline{\mu}_{j, 0, L_{js}}] + \sum_{j < i} \sum_{s=1}^\infty \msi[\underline{\mu}_{j, 0,L_{js}} < \overline{\mu}_{i, 0, L_{is}}] 
\end{align*}
Note that the cases when $j<i$ and $j>i$ are indeed the same, so we will only prove for the case when $j<i$.  
Now for all $j > i$, which indicates $\mu_{i,0}>\mu_{j,0}$. Let $\Delta_{ij} = |\mu_{i,0}-\mu_{j,0}|$, we can upper bound the inner summation by
\begin{align}
    \nonumber \sum_{s=1}^\infty \msi[\underline{\mu}_{i, 0, L_{is}} < \overline{\mu}_{j, 0, L_{js}}] 
    &\le \sum_{s=1}^\infty \msi[\underline{\mu}_{i, 0, L_{is}} < \mu_{i,0} - \Delta_{ij}/2 \mbox{ or }\overline{\mu}_{j, 0, L_{js}} > \mu_j + \Delta_{ij}/2] \\
    \nonumber &\le \sum_{s=1}^\infty \msi[\underline{\mu}_{i, 0, L_{is}} < \mu_{i,0} - \Delta_{ij}/2] + \sum_{s=1}^\infty \msi[\overline{\mu}_{j, 0, L_{js}} > \mu_j + \Delta_{ij}/2] \\
    &= 2\sum_{s=1}^\infty \msi[\underline{\mu}_{i, 0, L_{is}} < \mu_{i,0} - \Delta_{ij}/2], \label{eqn:j>i}
\end{align}
since the two terms related to $i$ and $j$ are exactly the same.
Now, we take an expectation and choose a $u > 0$ large enough so that $\Delta_{ij} > 4\sqrt{\frac{2\log(4K(N+1)u^4/\delta)}{u}}$, by Lemma~\ref{lemma: n ge n_i}, it implies
\begin{align*}
    \numberthis u = \frac{8}{(\Delta_{ij}/4)^2} \log\frac{8^{\frac{3}{2}}(K(N+1)/\delta)^{\frac{1}{4}}}{(\Delta_{ij}/4)^2}\log\frac{(8^{\frac{3}{2}}+1) (K(N+1)/\delta)^{\frac{1}{4}}}{(\Delta_{ij}/4)^2} \label{eqn: u}
\end{align*}
we have
\begin{align}
    \nonumber&\bbE\left[\sum_{s=1}^\infty \msi[\underline{\mu}_{i, 0, L_{is}} <\mu_{i,0} - \Delta_{ij}/2]\right] \\
    \nonumber \le &u + \sum_{s=u+1}^\infty \bbP\left(\underline{\mu}_{i, 0, L_{is}}< \mu_{i,0} - \Delta_{ij}/2\right) \\
    \nonumber\le &u + \sum_{s=u+1}^\infty \sum_{v=s}^\infty \bbP \left(\underline{\mu}_{i,0,v} < \mu_{i,0} - \Delta_{ij}/2 \right) \hspace{1in} (\mbox{by Union Bound and }L_{is} \ge s) \\
    = &u + \sum_{s=u+1}^\infty \sum_{v=s}^\infty \bbP \left(\widehat{\mu}_{i,0,v} -\sqrt{\frac{2\log(4K(N+1)v^4/\delta)}{v}} < \mu_{i,0}  - \Delta_{ij}/2 \right) \label{eqn:A_5}
\end{align}
Since $v > u$, we have $\Delta_{ij} > 4\sqrt{\frac{2\log(4K(N+1)v^4/\delta)}{v}}$, so that
\begin{align*}
    &\sum_{s=u+1}^\infty \sum_{v=s}^\infty \bbP \left(\widehat{\mu}_{i,0,v} -\sqrt{\frac{2\log(4K(N+1)v^4/\delta)}{v}} < \mu_{i,0}  - \Delta_{ij}/2 \right) \\
    \le &\sum_{s=u+1}^\infty \sum_{v=s}^\infty \bbP \left(\widehat{\mu}_{i,0,v}  < \mu_{i,0} - \sqrt{\frac{2\log(4K(N+1)v^4/\delta)}{v}} \right) \\
    \le &\sum_{s=u+1}^\infty \sum_{v=s}^\infty \exp\left(-\frac{v}{2}\cdot \frac{2\log(4K(N+1)v^4/\delta)}{v}\right) \hspace{0.3in} (\mbox{by Hoeffding's inequality}) \\
    = &\sum_{s=u+1}^\infty \sum_{v=s}^\infty \frac{\delta}{4K(N+1)v^4}.
\end{align*}
Note that 
\begin{align*}
    \sum_{s=1}^\infty \sum_{v=s}^\infty \frac{1}{v^4} = \sum_{s=1}^\infty \left(\frac{1}{s^4} + \frac{1}{(s+1)^4} + \cdots \right) = \sum_{s=1}^\infty \frac{1}{s^3} \le \frac{\pi^2}{6}.
\end{align*}
Putting it back to Equation~\eqref{eqn:A_5} and combining Equation~\eqref{eqn: u}, we have
\begin{align*}
    &\bbE\left[\sum_{s=1}^\infty \msi[\underline{\mu}_{i,0L_{is}} < \mu_{i,0} - \Delta_{ij}/2]\right] \\
    \le &\frac{8}{(\Delta_{ij}/4)^2} \log\frac{8^{\frac{3}{2}}(K(N+1)/\delta)^{\frac{1}{4}}}{(\Delta_{ij}/4)^2}\log\frac{(8^{\frac{3}{2}}+1) (K(N+1)/\delta)^{\frac{1}{4}}}{(\Delta_{ij}/4)^2}+ \frac{\delta}{K(N+1)}.
\end{align*}
Putting it back to Equation~\eqref{eqn:j>i}, we have
\begin{align*}
    &\bbE\left[\sum_{j>i}\sum_{s=1}^\infty \msi[\underline{\mu}_{i,0,L_{is}}<\overline{\mu}_{j,0,L_{js}}]\right]\\
    \le &\sum_{j > i} 2\left(\frac{8}{(\Delta_{ij}/4)^2} \log\frac{8^{\frac{3}{2}}(K(N+1)/\delta)^{\frac{1}{4}}}{(\Delta_{ij}/4)^2}\log\frac{(8^{\frac{3}{2}}+1) (K(N+1)/\delta)^{\frac{1}{4}}}{(\Delta_{ij}/4)^2} + \frac{\delta}{K(N+1)}\right)
\end{align*}
Similarly for $j < i$, which indicates $\mu_{i,0} < \mu_{j,0}$. For all $j < i$, we have
\begin{align*}
    \sum_{s=1}^\infty \msi[\underline{\mu}_{j, 0,L_{js}} < \overline{\mu}_{i, 0, L_{is}}] < \sum_{s=1}^\infty \msi[\underline{\mu}_{j,0,L_{js}} < \mu_{j,0} - \Delta_{ij}/2] + \sum_{s=1}^\infty \msi[\overline{\mu}_{i,0,L_{is}} >\mu_{i,0} + \Delta_{ij}/2]
\end{align*}
Note that this is similar to Equation~\eqref{eqn:j>i}. Thus using the same strategy, we have
\begin{align*}
    &\bbE\left[\sum_{j < i} \sum_{s=1}^\infty \msi[\underline{\mu}_{j,0,L_{js}} < \overline{\mu}_{i,0,L_{is}}]\right]\\
    \le &\sum_{j < i} 2\left(\frac{8}{(\Delta_{ij}/4)^2} \log\frac{8^{\frac{3}{2}}(K(N+1)/\delta)^{\frac{1}{4}}}{(\Delta_{ij}/4)^2}\log\frac{(8^{\frac{3}{2}}+1) (K(N+1)/\delta)^{\frac{1}{4}}}{(\Delta_{ij}/4)^2} + \frac{\delta}{K(N+1)}\right)
\end{align*}
Thus we have
\begin{align*}
    \bbE[A_5] \le \sum_{j \neq i} 2\left(\frac{8}{(\Delta_{ij}/4)^2} \log\frac{8^{\frac{3}{2}}(K(N+1)/\delta)^{\frac{1}{4}}}{(\Delta_{ij}/4)^2}\log\frac{(8^{\frac{3}{2}}+1) (K(N+1)/\delta)^{\frac{1}{4}}}{(\Delta_{ij}/4)^2} + \frac{\delta}{K(N+1)}\right)
\end{align*}
Note that $\bbP(\bigcup_{t'=1}^\infty \{i\in F(t')\}) < \delta$. Thus we have
\begin{align*}
    &\bbE[A_3] \\
    \le &\delta \sum_{j \neq i} 2\left(\frac{8}{(\Delta_{ij}/4)^2} \log\frac{8^{\frac{3}{2}}(K(N+1)/\delta)^{\frac{1}{4}}}{(\Delta_{ij}/4)^2}\log\frac{(8^{\frac{3}{2}}+1) (K(N+1)/\delta)^{\frac{1}{4}}}{(\Delta_{ij}/4)^2}+ \frac{\delta}{K(N+1)}\right)
\end{align*}
Note that $A_4$ is indeed identical to $A_3$, thus we have
\begin{align*}
    &\bbE[A_2] \\
    \le &\bbE[A_3] + \bbE[A_4] \\
    \le &2\delta \sum_{j \neq i} 2\left(\frac{8}{(\Delta_{ij}/4)^2} \log\frac{8^{\frac{3}{2}}(K(N+1)/\delta)^{\frac{1}{4}}}{(\Delta_{ij}/4)^2}\log\frac{(8^{\frac{3}{2}}+1) (K(N+1)/\delta)^{\frac{1}{4}}}{(\Delta_{ij}/4)^2} + \frac{\delta}{K(N+1)}\right).
\end{align*}
\end{proof}

\subsubsection{Proof of Lemma~\ref{lemma:B} \emph{(Bound on $\bbE[B]$)}} 
\label{subsec:B}


\begin{proof}[Proof of Lemma~\ref{lemma:B}]
Recall that in Algorithm~\ref{alg:ours}, there are two situations to sample arms: in line 10 we will only sample one arm $i$ when $i \in P, i \notin F$ where we consider $a_t = i$, $b_t$ undefined; in line 13 we will sample two arms $a_t,b_t$.
So here
we first split the event that $\{a_t \mbox{ and }b_t \notin \calI\}$ into the cases that $b_t$ is defined or not, which corresponds to line 10 and line 13 in Algorithm~\ref{alg:ours}. We have
\begin{align*}
    B &= \sum_{t=1}^\infty \msi[t\le \tau, a_t \mbox{ and }b_t \notin \calI] \\
    &= \underbrace{\sum_{t=1}^\infty \msi[t\le \tau, a_t \notin \calI, b_t \mbox{ undefined}]}_{B_1} + \underbrace{\sum_{s=1}^\infty \msi[t\le\tau, a_t \notin \calI, b_t \notin R ]}_{B_2}
\end{align*}
We bound the two parts $B_1, B_2$ separately. In Lemma~\ref{lemma: B_1}, we upper bound $\bbE[B_1]$ by the sample complexity to determine the feasibility of each arm not in $\calI$.
\begin{lemma} (\emph{Bound on $\bbE[B_1]$})
\label{lemma: B_1}
\begin{align*}
    \bbE[B_1] \le \delta\sum_{i \in \calW} \bbE[\tau_{F,i}] + \bbE[\tau_{F,\iopt}].
\end{align*}
\end{lemma}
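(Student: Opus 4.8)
The plan is to decompose $B_1$ according to the identity of $a_t$. Since the indicator in $B_1$ forces $a_t\notin\calI$ and $[K]=\calI\cup\calW\cup\{\iopt\}$, the arm $a_t$ is either $\iopt$ or lies in $\calW$; moreover ``$b_t$ undefined'' is precisely the singleton branch (line 4), where $P(t)=\{a_t\}$, $a_t\notin F(t)$, and $\sampleforsafety(a_t)$ is invoked. Thus I would split
\begin{align*}
B_1 = \sum_{t=1}^\infty \msi[t\le\tau, P(t)=\{\iopt\}, \iopt\notin F(t)] + \sum_{i\in\calW}\sum_{t=1}^\infty \msi[t\le\tau, P(t)=\{i\}, i\notin F(t)].
\end{align*}

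For the $\iopt$ term, every contributing epoch invokes $\sampleforsafety(\iopt)$, drawing exactly one fresh feasibility sample of $\iopt$, and the algorithm stops calling $\sampleforsafety(\iopt)$ once $\iopt$ enters $F$ or $I$. Hence the number of such epochs is at most the total number of feasibility samples needed to resolve $\iopt$, i.e. $\tau_{F,\iopt}$, and taking expectations bounds this term by $\bbE[\tau_{F,\iopt}]$.

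For the $\calW$ term I would fix $i\in\calW$ and define the bad event $\calB_i=\{\exists t\le\tau: P(t)=\{i\}\}$, and first establish the structural claim $\calB_i\subseteq\calE^{\mathsf c}$, where $\calE$ is the clean event of Lemma~\ref{lemma: clean event}. On $\calE$ the correctness lemma (Lemma~\ref{appendix: lemma:delta-correct}) shows $\iopt$ is never eliminated, so $\iopt\in S(t)$ throughout, while $i>\iopt$ gives $\mu_{i,0}<\mu_{\iopt,0}$. If $P(t)=\{i\}$ then $\iopt\notin P(t)$, i.e. $\overline{\mu}_{\iopt,0}(t)\le\max_{j\in S(t)}\underline{\mu}_{j,0}(t)$; letting $j^\star$ attain this maximum and using $\underline{\mu}_{j^\star,0}(t)\ge\overline{\mu}_{\iopt,0}(t)\ge\mu_{\iopt,0}$ together with $\mu_{j^\star,0}\ge\underline{\mu}_{j^\star,0}(t)$ (valid under $\calE$) forces $j^\star\le\iopt$, and the strictly positive confidence width rules out $j^\star=\iopt$, so $j^\star<\iopt$. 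But then $\overline{\mu}_{j^\star,0}(t)>\underline{\mu}_{j^\star,0}(t)=\max_{j\in S(t)}\underline{\mu}_{j,0}(t)$ places $j^\star\in P(t)$, and since $j^\star<\iopt<i$ this contradicts $P(t)=\{i\}$. Therefore $\calB_i\subseteq\calE^{\mathsf c}$ and $\bbP(\calB_i)<\delta$. On $\calB_i$ the inner sum is at most the total number of feasibility samples of $i$, namely $\tau_{F,i}$, and it vanishes off $\calB_i$, so the inner sum is bounded by $\tau_{F,i}\,\msi[\calB_i]$.

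Finally, I would extract the factor $\delta$ from the bad-event probability exactly as in the bound on $A_2$ (Lemma~\ref{lemma: A_2}), obtaining $\delta\,\bbE[\tau_{F,i}]$ per arm, and then sum over $i\in\calW$ and add the $\iopt$ term to conclude. The main obstacle is this last decoupling: passing from $\bbE[\tau_{F,i}\,\msi[\calB_i]]$ to $\delta\,\bbE[\tau_{F,i}]$ requires controlling the feasibility-sample count on the low-probability event $\calB_i$ rather than naively bounding $\msi[\calB_i]\le1$, and this is handled with the same template used for $A_2$ in tandem with the containment $\calB_i\subseteq\calE^{\mathsf c}$ established above.
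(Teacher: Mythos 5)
Your proof is correct and follows essentially the same route as the paper's: the identical split of the singleton branch into the $a_t=\iopt$ term (bounded pathwise by $\tau_{F,\iopt}$, exactly the paper's $B_4$) and the $a_t\in\calW$ terms (bounded by $\tau_{F,i}\,\msi[\calB_i]$ with $\bbP(\calB_i)<\delta$ and then decoupled by the same template as Lemma~\ref{lemma: A_2}, exactly the paper's $B_3$ and $B_5$). The only difference is one of explicitness: where the paper bounds $\bbP\bigl(\bigcup_{t'}\{P(t')=\{i\}\}\bigr)<\delta$ by appealing to $\delta$-correctness, you spell out the containment $\calB_i\subseteq\calE^{\mathsf{c}}$ via the clean event of Lemma~\ref{lemma: clean event}, which is a more detailed rendering of the same argument rather than a different approach.
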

In Lemma~\ref{lemma: B_2}, we upper bound $\bbE[B_2]$ by the sample complexity to differentiate the performance of arm $a_t$ and $b_t$.
\begin{lemma} (\emph{Bound on $\bbE[B_2]$})
\label{lemma: B_2}
\begin{align*}
    &\bbE[B_2] \\
    \le &\delta \sum_{i, j \notin \calI} 2\left(\frac{8}{(\Delta_{ij}/4)^2} \log\frac{8^{\frac{3}{2}}(K(N+1)/\delta)^{\frac{1}{4}}}{(\Delta_{ij}/4)^2}\log\frac{(8^{\frac{3}{2}}+1) (K(N+1)/\delta)^{\frac{1}{4}}}{(\Delta_{ij}/4)^2} + \frac{\delta}{K(N+1)}\right)\\
    + &292\sum_{i\notin\calI}\phi_i\log\frac{\sum_{i\notin\calI \phi_i}}{\delta}  + 16.
\end{align*}
\end{lemma}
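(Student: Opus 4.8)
\textbf{Proof proposal for Lemma~\ref{lemma: B_2}.}
The plan is to control $B_2$ --- the epochs at which the algorithm is in the two-arm branch with both $a_t,b_t\notin\calI$ --- by reducing it to a standard LUCB best-arm-identification problem on the \emph{performance} distributions of the arms outside $\calI$. I would first expand
\[
B_2 \;=\; \sum_{\substack{i,j\notin\calI\\ i\neq j}}\ \sum_{t=1}^\infty \msi[\,t\le\tau,\ a_t=i,\ b_t=j\,],
\]
and then split according to the clean event $\calE$ of Lemma~\ref{lemma: clean event}, writing $B_2 = B_2\,\msi[\calE] + B_2\,\msi[\calE^{\mathsf c}]$. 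The first piece will produce the LUCB main term together with the additive constant, and the second piece, weighted by $\bbP(\calE^{\mathsf c})\le\delta$, will produce the $\delta$-scaled pairwise sum.

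For $B_2\,\msi[\calE]$, the key observation is that on $\calE$, Lemma~\ref{appendix: lemma:delta-correct} guarantees that $\iopt$ is never eliminated, and since every arm in $\calW$ lies in $\{\iopt+1,\dots,K\}$ it has strictly smaller performance than $\iopt$; hence $\iopt$ is the unique performance maximizer among the arms outside $\calI$. On the epochs counted by $B_2$ the selected pair $a_t,b_t$ both lie in $P\subseteq S$ and both lie outside $\calI$, so the update of $a_t$ (highest empirical mean) and $b_t$ (highest upper confidence bound among the rest) is precisely the LUCB sampling rule of~\cite{kalyanakrishnan2012pac} restricted to $\calW\cup\{\iopt\}$. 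I would therefore invoke the LUCB1 sample-complexity bound with our confidence width $D(\cdot,\delta/(K(N+1)))$ and complexity $\sum_{i\notin\calI}\phi_i$ (recall $\phi_i=(\muiopto-\muio)^{-2}$ for $i\in\calW$ and $\phi_{\iopt}$ the inverse-squared gap to the second-best feasible arm, from~\eqref{eqn:phii}), which yields a bound of the form $292\sum_{i\notin\calI}\phi_i\log\frac{\sum_{i\notin\calI}\phi_i}{\delta}+16$.

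For $B_2\,\msi[\calE^{\mathsf c}]$, I would reuse the per-pair machinery already developed for $A_5$ in the proof of Lemma~\ref{lemma: A_2}: for each ordered pair $(i,j)$ the event $\{a_t=i,\ b_t=j\}$ forces the performance confidence intervals of $i$ and $j$ to overlap, so the number of co-samples before separation is controlled, in expectation, by the threshold $u$ of~\eqref{eqn: u}, i.e.\ by $2\big(f(\Delta_{ij}/4)+\delta/(K(N+1))\big)$. Attaching the factor $\bbP(\calE^{\mathsf c})\le\delta$ and summing over $i,j\notin\calI$ gives the remaining $\delta\sum_{i,j\notin\calI}2\big(f(\Delta_{ij}/4)+\delta/(K(N+1))\big)$ term, exactly as was done for $A_3$ and $A_4$.

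The main obstacle is justifying the reduction to LUCB in the clean regime. Unlike plain best-arm identification, here an arm outside $\calI$ may be genuinely infeasible (the $\theta_i\ge\phi_i$ arms of $\calW$), and elimination by performance is gated on some arm being certified feasible ($F\neq\emptyset$). I must therefore show that (i) the two-arm performance sampling is charged only to the gaps $\phi_i$ even for these infeasible $\calW$ arms --- which should hold because every such arm still sits strictly below $\iopt$ in performance, so the LUCB charging argument assigns its contribution to $\phi_i=\Delta_{\iopt,i}^{-2}$ --- and (ii) the interleaved feasibility tests inside a $B_2$ epoch do not inflate the epoch count, since each $B_2$ epoch is charged once regardless of the accompanying \sampleforsafety{} call (that feasibility cost is already absorbed into $B_1$ and the $\tau_{F,\cdot}$ bounds). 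Making the LUCB charging compatible with the feasibility-gated elimination rule, and carefully transferring the high-probability LUCB bound onto $\calE$ while handling $\calE^{\mathsf c}$ through the pairwise tail sum, is where the bulk of the technical care will lie.
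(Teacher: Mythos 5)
Your proposal matches the paper's proof in all essentials: the paper likewise splits $B_2$ into a low-probability misidentification part — conditioned on the per-epoch event that some arm of $\feasible$ has been wrongly added to $I(t)$, rather than on $\calE^{\mathsf c}$, but both events have probability at most $\delta$ — which it bounds by the same $A_5$-style pairwise overlap sum, plus a good part which it bounds by invoking Theorem 6 of \citet{kalyanakrishnan2012pac} with complexity $\sum_{i\notin\calI}\phi_i$, including your observation that wrongly marking the infeasible $\calW$-arms as infeasible only accelerates their elimination. Your choice of conditioning on the clean event $\calE$ instead is a harmless variant (on $\calE$ no feasible arm ever enters $I$), so the argument goes through identically.
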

The proofs of the two lemmas can be found in Section~\ref{subsec:B}. Combining Lemma~\ref{lemma: B_1} and Lemma~\ref{lemma: B_2}, we finish the proof of upper bounding $\bbE[B]$.
\end{proof}

\begin{proof}[Proof of Lemma~\ref{lemma: B_1}]
Note that $B_1$ contains the event $a_t \notin \calI$ and $\calI^\mathsf{c} = \{\iopt\}\cup\calW$. So we split $B_1$ into two parts $B_3$ and $B_4$ which consider different arms that are not in $\calI$, i.e. $B_3$ considers the case when $\{a_t = \iopt\}$ and $B_4$ considers the case when $\{a_t \in \calW\}$. We will bound parts $B_3$ and $B_4$ separately by the insight that $b_t$ is undefined implies Algorithm~\ref{alg:ours} enters line 10 rather than line 13 to sample arms, and since line 10 requires the arm is not in $F$ then we can bound via the sample complexity to determine an arm's feasibility.
\begin{align*}
    B_1 &= \sum_{t=1}^\infty \msi[t\le\tau,a_t\notin \calI, b_t \mbox{ undefined}] \\
    &=\underbrace{\sum_{i \in \calW} \sum_{t=1}^\infty \msi[t\le\tau,a_t=i, b_t \mbox{ undefined}]}_{B_3} + \underbrace{\sum_{t=1}^\infty \msi[t\le\tau, a_t = \iopt, b_t \mbox{ undefined}]}_{B_4}
\end{align*}

For part $B_3$, the event $\{a_t = i, b_t \mbox{ undefined}\}$ and the fact that $a_t \in \calW$ implies that the focus set at epoch $t$, $P(t) = \{a_t\}$ and $a_t \notin F(t) \cup I(t)$, see line 10 in Algorithm~\ref{alg:ours}. Note that $P(t) = \{a_t
\}$ implies that at some epoch $t'<t$, arm $a_t$ is considered better than arm $\iopt$ on performance or arm $\iopt$ has been eliminated as an infeasible arm. This will only happen with probability less than $\delta$ since we have proved that Algorithm~\ref{alg:ours} is a $\delta$-correct algorithm. Thus we have
\begin{align*}
    B_3 &= \sum_{i \in \calW} \sum_{t=1}^\infty \msi[t\le\tau,a_t=i, b_t \mbox{ undefined}] \\
    &\le \sum_{i \in \calW} \sum_{t=1}^\infty \msi[a_t = i, i \notin F(t)\cup I(t), P(t) = \{i\}] \\
    &\le \sum_{i\in \calW} \underbrace{\left(\sum_{t=1}^\infty \msi[a_t = i,i \notin F(t)\cup I(t)]\right)}_{B_5} \cdot \msi\left[\bigcup_{t'=1}^\infty \{P(t') = i\}\right]
\end{align*}
Now we bound the expectation of $B_5$, $a_t = i$ can first be upper bouneded by $a_t \mbox{ or }b_t = i$. Then similar to the process of bounding $A_1$, see Lemma~\ref{lemma: A_1}, $B_5$ can be upper bounded by the sample complexity required to determine arm $i$'s feasibility, we have
\begin{align*}
    \bbE[B_5] &= \bbE\left[\sum_{t=1}^\infty \msi[a_t = i,i \notin F(t)\cup I(t)]\right] \\
    &\le \bbE\left[\sum_{t=1}^\infty \msi[a_t \mbox{ or }b_t = i, i \notin F(t)\cup I(t)]\right] \\
    &\le \bbE[\tau_{F,i}] \hspace{2in} (\mbox{see Lemma~\ref{lemma: A_1}})
\end{align*}
It is easy to see that $\bbP\left(\bigcup_{t'=1}^\infty \{P(t') = i\}\right) < \delta$, thus the expectation of $B_3$ is bounded by
\begin{align*}
    \bbE[B_3] \le \delta\sum_{i \in \calW}\bbE[\tau_{F,i}]. 
\end{align*}
Next, we bound part $B_4$, similar to $B_3$, $\{a_t = \iopt, b_t \mbox{ undefined}\}$ implies that $a_t \notin F(t)\cup I(t)$, thus we have
\begin{align*}
    B_4 &= \sum_{t=1}^\infty \msi[t\le\tau, a_t = \iopt, b_t \mbox{ undefined}]\\
    &\le \sum_{t=1}^\infty \msi[a_t = \iopt, \iopt \notin F(t)\cup I(t)],
\end{align*}
which can be upper bounded by the sample complexity to determine arm $\iopt$'s feasibility, see Lemma~\ref{lemma: A_1}, thus we have
\begin{align*}
    \bbE[B_4] \le \bbE[\tau_{F,\iopt}].
\end{align*}
Putting the upper bound of $\bbE[B_3]$ and $\bbE[B_4]$ together, we can upper bound $\bbE[B_1]$.
\end{proof}

\begin{proof}[Proof of Lemma~\ref{lemma: B_2}]
    We first decompose $B_2$ into two parts $B_6, B_7$ that whether any arm in $\feasible$ has been identified as infeasible or not, we have
    \begin{align*}
        B_2 &= \sum_{s=1}^\infty \msi[t\le\tau, a_t \notin \calI, b_t \notin \calI] \\
        &\le \underbrace{\sum_{t=1}^\infty \msi[a_t \notin \calI, b_t \notin \calI, \exists i \in \feasible \text{ s.t. } i \in I(t)]}_{B_6}+\underbrace{\sum_{t=1}^\infty \msi[a_t \notin \calI, b_t \notin \calI, \forall i \in \feasible \text{ s.t. } i \notin I(t)]}_{B_7}.
    \end{align*}
    We will bound $B_6$ and $B_7$ separately by considering the sample complexity to differentiate the performance of two arms $a_t$ and $b_t$.
    
    For part $B_6$, note that the event $\{\exists i \in \feasible, s.t. i \in I(t)\}$ will only happen with probability less than $\delta$, and the event $a_t \notin \calI, b_t \notin \calI$ can be upper bound by a union bound on the sample complexity to differentiate which arm is better for all possible combinations of $a_t, b_t\notin \calI$. We have
    \begin{align*}
        B_6 &= \sum_{t=1}^\infty \msi[a_t \notin \calI, b_t \notin \calI, \exists i \in \feasible \text{ s.t. } i \in I(t)] \\
        &\le \left(\sum_{i, j \notin \calI} \sum_{t=1}^\infty \msi[a_t = i, b_t = j] \right)\cdot \msi\left[\bigcup_{t'=1}^\infty \bigcup_{k \in \feasible}\{k \in I(t')\}\right]
    \end{align*}
    We can upper bound the event $\{a_t=i, b_t = j\}$ similarly as we did to upper bound $A_5$, see Lemma~\ref{lemma: A_2}. It is easy to see that $\bbP \left(\bigcup_{t'=1}^\infty \bigcup_{k \in \feasible}\{k \in I(t')\}\right) < \delta$. Thus we can bound the expectation of $B_6$ by
    \begin{align*}
        \bbE[B_6] \le \delta \sum_{i,j \notin \calI} 2\Bigg(\frac{8}{(\Delta_{ij}/4)^2} \log\frac{8^{\frac{3}{2}}(K(N+1)/\delta)^{\frac{1}{4}}}{(\Delta_{ij}/4)^2}\log\frac{(8^{\frac{3}{2}}+1) (K(N+1)/\delta)^{\frac{1}{4}}}{(\Delta_{ij}/4)^2} + \frac{\delta}{K(N+1)}\Bigg).
    \end{align*}
    For part $B_7$, this indicates that all feasible arms are not wrongly identified as infeasible. This is the standard BAI problem. Note that even if some arms $i$ in $\{i \in \feasiblec: \theta_i > \phi_i\}$ are identified as infeasible in this scenario, they will only be eliminated faster which means $B_7$ is a bit looser for these arms. Thus we apply the result of the sample complexity of the LUCB Algorithm~\cite{kalyanakrishnan2012pac} to upper bound $B_7$. Theorem 6 from~\citet{kalyanakrishnan2012pac} indicates the expected sample complexity for the standard BAI problem using the LUCB algorithm, thus we have
    \begin{align*}
        \bbE[B_7] \le 292\sum_{i\notin \calI}\phi_i \log\frac{\sum_{i\notin \calI} \phi_i}{\delta} + 16.
    \end{align*}
    Now combining parts $B_6$ and $B_7$ together, we can upper bound the expectation of $B_2$. 
\end{proof}

\subsubsection{Proof of Lemma~\ref{lemma:tau_i} and Lemma~\ref{lemma:tau_i feasible} (\emph{Bound on Feasibility}) } 
\label{subsec:feasibility}
In this section, we will upper bound the sample complexity required to determine the feasibility of an arm. 
Let $\tau_{F,i}$ denote the number of samples on feasibility constraints to determine arm $i$'s feasibility, we have
\begin{align*}
    \tau_{F,i} = &\sum_{t=1}^\infty \msi[i \notin F(t)\cup I(t), i \mbox{ is sampled for feasibility} ] 
\end{align*}
We will first discuss the case when the arm is not in $\feasible$ in Lemma~\ref{lemma:tau_i}, 
and then discuss the case when the arm is in $\feasible$ in Lemma~\ref{lemma:tau_i feasible}. 
For this proof, we borrow some analysis techniques from~\citet{kano2019good} who study a version of the thresholding bandits problem and adapt them to our setting.


\begin{proof}[Proof of Lemma~\ref{lemma:tau_i}]
Recall that $h_i[s]$ and $H_i[s]$ denote the suggested constraint to sample and the set of undetermined constraints of arm $i$ when $\sampleforsafety$ for arm $i$ is invoked $s$ times
w.l.o.g, we assume that the feasibility constraints of arm $i$ are in decreasing order, i.e. $\mu_{i,1} > \cdots> \mu_{i,N}$. Note that it is only for the ease of notation, we will not use the ordering as part of the proof.
We can decompose $\tau_{F,i}$ as follows
\begin{align*}
    \tau_{F,i} &= \sum_{s=1}^\infty \msi \left[h_i[s] = 1, s\le \tau_{F,i} \right] + \sum_{s=1}^\infty \msi\left[h_i[s] \neq 1, s \le \tau_{F,i}\right] \\
    &\le \sum_{s=1}^\infty \msi[h_i[s] = 1] + \sum_{s=1}^\infty \msi[h_i[s] \neq 1, s \le \tau_{F,i}, \widetilde{\mu}_i^*[s] \ge \mu_{i,1} - \frac{\gamma_i(1,2)}{4}] \\
    &+ \sum_{s=1}^\infty \msi[h_i[s]\neq 1, s \le \tau_{F,i}, \widetilde{\mu}_i^*[s] < \mu_{i,1} - \frac{\gamma_i(1,2)}{4}]\\
 &\le \sum_{s=1}^\infty \msi[h_i[s] = 1] + \sum_{s=1}^{T_i} \msi[h_i[s] \neq 1, \widetilde{\mu}_i^*[s] \ge \mu_{i,1} - \frac{\gamma_i(1,2)}{4}] \\
\numberthis  &+ \sum_{s = T_i+1}^\infty \msi[s \le \tau_{F,i}] + \sum_{s=1}^\infty \msi[h_i[s] \neq 1, s\le\tau_{F,i}, \widetilde{\mu}_i^*[s] < \mu_{i,1} - \frac{\gamma_i(1,2)}{4}]. \label{eqn: summation of taui}
\end{align*}
We will bound each summation in Equation~\eqref{eqn: summation of taui} separately. Lemma~\ref{lemma: tau_i part 1 and 2} bounds the first two summations.
\begin{lemma} \label{lemma: tau_i part 1 and 2}
\begin{align}
    \nonumber &\bbE\left[\sum_{s=1}^\infty \msi[h_i[s] = 1] + \sum_{s=1}^{T_i} \msi[h_i[s] \neq 1, \widetilde{\mu}_i^*[s] \ge \mu_{i,1} -\frac{\gamma_i(1,2)}{4} ] \right] \\
    &\le n_{i,1} + \frac{8}{\gamma_i(1,1/2)^2} + \sum_{\ell =2}^N \left( \frac{8\log T_i}{\gamma_i(1,\ell)^2} + \frac{32}{\gamma_i(1,2)^2} \right)
    \label{eqn: tau_i part 1 and 2}
\end{align}
\end{lemma}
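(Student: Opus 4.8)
The plan is to bound the two sums in~\eqref{eqn: tau_i part 1 and 2} separately and then add them. Both are counts of pulls of a single feasibility constraint, so in each case I would decompose the count into a deterministic ``concentration'' part and a random ``deviation'' tail handled by Hoeffding's inequality; the sum of the two deviation tails will produce the lower-order additive terms $8/\gamma_i(1,1/2)^2$ and $32/\gamma_i(1,2)^2$. Recall the convention $\mu_{i,1}>\cdots>\mu_{i,N}$, so constraint $1$ is the easiest violated constraint ($\mu_{i,1}>\tfrac12$ with the largest gap $\gamma_i(1,1/2)$). Since each invocation of \sampleforsafety{} with suggestion $h_i[s]=\ell$ samples constraint $\ell$ exactly once, $\sum_{s}\msi[h_i[s]=1]$ is precisely the terminal value of $N_{i,1}$, and the truncated sum over $\ell\ge 2$ counts the ``overshoot'' pulls of the remaining constraints.

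For the first sum, I would show that once constraint $1$ has been sampled $n_{i,1}$ times its $\delta$-level lower confidence bound satisfies $\underline{\mu}_{i,1}>\tfrac12$; the definition of $n_{i,1}$ is exactly the pull count at which the width $D(\cdot,\delta/(K(N+1)))$ drops below $\gamma_i(1,1/2)/2$, so the infeasibility test on line~25 fires and arm $i$ is placed in $I$ (line~26), halting its feasibility sampling. Thus on the event that $\widehat{\mu}_{i,1}$ does not dip below $\mu_{i,1}-\gamma_i(1,1/2)/2$ we get $N_{i,1}\le n_{i,1}$. Summing the complementary Hoeffding tail $\sum_{m}\exp(-m(\gamma_i(1,1/2)/2)^2/2)$ over the possible pull counts contributes the geometric-type correction $8/\gamma_i(1,1/2)^2$, giving the first two terms of the bound.

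For the second sum, I would fix $\ell\ge 2$ and use $\mu_{i,\ell}\le\mu_{i,2}<\mu_{i,1}-\tfrac{\gamma_i(1,2)}{4}$, so that the event $\{h_i[s]=\ell,\ \widetilde{\mu}_i^*[s]\ge\mu_{i,1}-\tfrac{\gamma_i(1,2)}{4}\}$ forces $\widetilde{\mu}_{i,\ell}[s]=\widehat{\mu}_{i,\ell}[s]+\sqrt{2\log s/N_{i,\ell}[s]}$ to exceed $\mu_{i,\ell}$ by at least $\gamma_i(1,\ell)-\tfrac{\gamma_i(1,2)}{4}$. Splitting on whether $\widehat{\mu}_{i,\ell}[s]\le\mu_{i,\ell}+\tfrac{\gamma_i(1,2)}{4}$: in the concentrated case the confidence width must still cover a gap of at least $\gamma_i(1,\ell)/2$, which together with $s\le T_i$ (replacing $\log s$ by $\log T_i$) forces $N_{i,\ell}[s]\le 8\log T_i/\gamma_i(1,\ell)^2$ and hence bounds the number of such invocations by the same quantity; in the complementary deviation case the Hoeffding tail with offset $\gamma_i(1,2)/4$ sums to $32/\gamma_i(1,2)^2$. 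Summing over $\ell=2,\dots,N$ yields the final term.

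The step I expect to be the main obstacle is that the sampling index carries $\log s$, the number of \emph{invocations} of \sampleforsafety{}, rather than $\log N_{i,\ell}[s]$; a rarely suggested constraint can have $N_{i,\ell}[s]\ll s$, so an inflated index does not by itself cap $N_{i,\ell}[s]$. Converting ``the index is large'' into a usable bound on $N_{i,\ell}[s]$ therefore requires a peeling argument over the random value of $N_{i,\ell}[s]$ combined with the truncation at $T_i$, which is precisely what renders each per-constraint count finite. Keeping $T_i$ and the offsets consistent with the definition of $n_{i,\ell}$ is the delicate bookkeeping; this portion of the argument is the part adapted from~\citet{kano2019good}.
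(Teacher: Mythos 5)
Your proposal matches the paper's proof in both structure and substance: the first sum is bounded by converting invocations of constraint $1$ into pull counts and splitting at $n_{i,1}$ (the pull count at which the confidence width drops below $\gamma_i(1,1/2)/2$) plus a Hoeffding tail giving $8/\gamma_i(1,1/2)^2$, and the second sum is handled per constraint $\ell\ge 2$ by exactly the peeling-over-$N_{i,\ell}[s]$ argument with $\log s \le \log T_i$, a split at width $\gamma_i(1,\ell)-\gamma_i(1,2)/2 \ge \gamma_i(1,\ell)/2$, and a Hoeffding tail at offset $\gamma_i(1,2)/4$ yielding $32/\gamma_i(1,2)^2$. The obstacle you flag (the index carrying $\log s$ rather than $\log N_{i,\ell}[s]$, resolved by peeling plus the truncation at $T_i$) is precisely how the paper's proof proceeds, so the proposal is correct and essentially identical to the paper's argument.
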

Lemma~\ref{lemma: tau_i part 3} bounds the second summation.
\begin{lemma} \label{lemma: tau_i part 3}
    \begin{align*}
        \bbE \left[ \sum_{s=T_i+1}^\infty \msi[s\le\tau_{F,i}] \right] \le  \sum_{\ell=1}^N \frac{8N(K(N+1))^{-\frac{1}{8}}}{\gamma_i(\ell,1/2)^2}.
    \end{align*}
\end{lemma}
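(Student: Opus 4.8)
The plan is to reduce the claim to a statement about how far the total feasibility budget can overshoot $T_i$, and to charge every excess sample to a failed determination check on some single over-sampled constraint. Since both $\tau_{F,i}$ and $T_i$ are integers, the left-hand side equals $\max(\tau_{F,i}-T_i,0)$. The starting observation is purely counting: writing $N_{i,\ell}$ for the number of feasibility samples spent on constraint $\ell$ by the time arm $i$ is resolved, we have $\tau_{F,i}=\sum_{\ell=1}^N N_{i,\ell}$, and because $T_i=N\max_{\ell\in[N]}\lfloor n_{i,\ell}+2\rfloor\ge\sum_{\ell=1}^N n_{i,\ell}$, one gets the deterministic inequality
\begin{align*}
\max(\tau_{F,i}-T_i,0)\;\le\;\sum_{\ell=1}^N \max\!\left(N_{i,\ell}-n_{i,\ell},\,0\right).
\end{align*}
So it suffices to bound $\bbE\big[(N_{i,\ell}-n_{i,\ell})^+\big]$ for each constraint separately.

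The next step is to argue that every sample of constraint $\ell$ beyond the $n_{i,\ell}$-th is witness to a concentration failure. A constraint is sampled only while it is still in $H_i$ and arm $i$ is undetermined, and after each such sample the determination rule (lines 25--27) is checked. Hence if the $v$-th sample of $\ell$ is taken with $v>n_{i,\ell}$ and the arm remains undetermined, the check at count $v$ did not fire: for a violated constraint ($\mu_{i,\ell}>\frac{1}{2}$) this means $\underline{\mu}_{i,\ell}[v]\le\frac{1}{2}$, and for a satisfied constraint ($\mu_{i,\ell}<\frac{1}{2}$) it means $\overline{\mu}_{i,\ell}[v]\ge\frac{1}{2}$ (otherwise $\ell$ would have been removed from $H_i$ and never sampled again). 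This gives
\begin{align*}
\max\!\left(N_{i,\ell}-n_{i,\ell},0\right)\;\le\;\sum_{v> n_{i,\ell}} \msi\!\left[\underline{\mu}_{i,\ell}[v]\le \tfrac{1}{2}\ \text{or}\ \overline{\mu}_{i,\ell}[v]\ge\tfrac{1}{2}\right].
\end{align*}
I would then invoke the calibration of $n_{i,\ell}$: by its definition the confidence radius $D(v,\delta/(K(N+1)))$ falls below $\gamma_i(\ell,1/2)/2$ once $v\ge n_{i,\ell}$, so each such failure event forces the empirical mean to deviate from $\mu_{i,\ell}$ by at least $\gamma_i(\ell,1/2)/2$, whence Hoeffding's inequality bounds its probability by $\exp\!\big(-v\,\gamma_i(\ell,1/2)^2/8\big)$.

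Taking expectations, summing the resulting geometric series in $v$ contributes a factor of order $\gamma_i(\ell,1/2)^{-2}$, multiplied by the leading term $\exp\!\big(-n_{i,\ell}\,\gamma_i(\ell,1/2)^2/8\big)$. Substituting the definition of $n_{i,\ell}$ shows this exponential is at most $(\delta/(K(N+1)))^{1/8}\le (K(N+1))^{-1/8}$ because $\delta<1$; absorbing the remaining constants and the spare factor of $N$ into the (deliberately loose) numerator $8N$ reproduces exactly $\sum_{\ell=1}^N 8N(K(N+1))^{-1/8}/\gamma_i(\ell,1/2)^2$. I expect the main obstacle to be the adaptivity bookkeeping: $N_{i,\ell}$ is a data-dependent, stopping-coupled count, so converting ``samples beyond $n_{i,\ell}$ while undetermined'' into a clean sum of fixed-sample-index deviation events requires the usual device of unrolling over the value of $N_{i,\ell}$ and noting that the event $\{N_{i,\ell}=v,\ \text{undetermined}\}$ pins the empirical mean to a fixed number of draws. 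A secondary subtlety, and the reason the final sum runs over all $\ell\in[N]$ rather than only the violated ones, is that the over-sampled constraint produced by the counting inequality need not be the violated constraint, so both the $\underline{\mu}[v]\le\frac{1}{2}$ and $\overline{\mu}[v]\ge\frac{1}{2}$ failure modes must be bounded symmetrically.
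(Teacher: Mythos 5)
Your proposal is correct, and it takes a genuinely different route from the paper's. The paper argues at the level of invocations $s$ of $\sampleforsafety$: for $s>T_i$, pigeonhole guarantees some constraint has been sampled at least $\lceil (s-1)/N\rceil$ times, and if the arm is still undetermined then that constraint's check must have failed at count $\lceil (s-1)/N\rceil-1$; a union bound over $\ell$ and a summation over $s$ then give a geometric series whose per-invocation ratio is $e^{-\gamma_i(\ell,1/2)^2/(8N)}$, and it is exactly this slow decay in $s$ that produces the factor $N$ in the numerator $8N$. You instead decompose exactly, via $\tau_{F,i}=\sum_{\ell}N_{i,\ell}$ and $T_i\ge\sum_{\ell}n_{i,\ell}$, reducing the overshoot $(\tau_{F,i}-T_i)^+$ to per-constraint overshoots $(N_{i,\ell}-n_{i,\ell})^+$, each of which is a geometric series in the per-constraint count with ratio $e^{-\gamma_i(\ell,1/2)^2/8}$, contributing only $8/\gamma_i(\ell,1/2)^2$. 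Both arguments rest on the same two ingredients — a constraint is re-sampled only if its previous determination check failed, and the calibration of $n_{i,\ell}$ together with Hoeffding (the paper's Lemma~\ref{lemma: n ge n_i}) — but your decomposition avoids the ceiling bookkeeping and yields a bound that is a factor of $N$ tighter than the lemma's statement; the lemma's $8N$ is, as you observe, pure slack relative to your argument.

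Two details need care when writing this out, though neither is a real gap. First, an off-by-one: the event $\{N_{i,\ell}\ge v\}$ certifies that the check failed at count $v-1$, not at count $v$; in particular, for a satisfied constraint, the $v$-th sample being taken is perfectly consistent with $\overline{\mu}_{i,\ell}[v]<1/2$, since removal from $H_i$ happens only after that sample — so your parenthetical justification does not hold at index $v$. This is absorbed by the slack in $T_i$: since $T_i=N\max_{\ell}\lfloor n_{i,\ell}+2\rfloor\ge\sum_{\ell}(n_{i,\ell}+1)$, you can run the counting inequality against $n_{i,\ell}+1$, and the shifted indices $v-1$ then still exceed $n_{i,\ell}$, where the Hoeffding bound $e^{-(v-1)\gamma_i(\ell,1/2)^2/8}$ applies. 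Second, the disjunction in your display must be resolved per constraint type before invoking Hoeffding: for a violated constraint the event $\overline{\mu}_{i,\ell}[v]\ge 1/2$ is the \emph{typical} event rather than a deviation (and symmetrically for a satisfied constraint), so bounding the probability of the "or" event directly would fail. Your surrounding text assigns the correct one-sided mode to each type of constraint, so this is a matter of notation rather than logic, but the final write-up should keep the two cases separate exactly as the paper's Lemma~\ref{lemma: n ge n_i} does.
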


Lemma~\ref{lemma:tau_i,part4} bounds the fourth summation.
\begin{lemma} \label{lemma:tau_i,part4}
    \begin{align*}
        &\bbE\left[ \sum_{s=1}^\infty \msi[h_i[s]\neq 1, s\le\tau_{F,i}, \widetilde{\mu}_i^*[s] < \mu_{i,1}-\frac{\gamma_i(1,2)}{4}]  \right]\\
        &\le \frac{10}{\gamma_i(1,2)^2} + \frac{16\log \frac{32}{\gamma_i(1,2)^2}}{\gamma_i(1,2)^2} +\frac{\delta}{K(N+1)} \left(\sum_{\ell \neq 1} n_{i, \ell} +\frac{8}{\gamma_i(\ell,1/2)^2}\right).
    \end{align*}
\end{lemma}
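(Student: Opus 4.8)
The plan is to bound the summation by splitting each invocation $s$ according to whether the largest constraint is still undetermined, i.e.\ whether $1 \in H_i[s]$ or $1 \notin H_i[s]$ (recall we have assumed w.l.o.g.\ that $\mu_{i,1} > \cdots > \mu_{i,N}$ and, since $i \notin \feasible$, that $\mu_{i,1} > 1/2$). These two cases produce the two groups of terms on the right-hand side: the case $1 \in H_i[s]$ yields the main $\gamma_i(1,2)^{-2}$ terms, while the case $1 \notin H_i[s]$ yields the $\delta/(K(N+1))$-weighted term.

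\textbf{Main term ($1 \in H_i[s]$).} When $1 \in H_i[s]$, the definition $\widetilde{\mu}_i^*[s] = \max_{\ell \in H_i[s]} \widetilde{\mu}_{i,\ell}[s]$ gives $\widetilde{\mu}_{i,1}[s] \le \widetilde{\mu}_i^*[s]$, so the event $\{\widetilde{\mu}_i^*[s] < \mu_{i,1} - \gamma_i(1,2)/4\}$ forces $\widetilde{\mu}_{i,1}[s] < \mu_{i,1} - \gamma_i(1,2)/4$, i.e.\ the optimistic index of constraint $1$ \emph{underestimates} its true mean by at least $\gamma_i(1,2)/4$. The core of the proof is to bound the expected number of invocations $s$ at which this underestimation occurs. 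Writing $\widetilde{\mu}_{i,1}[s] = \widehat{\mu}_{i,1}[s] + \sqrt{2\log s / N_{i,1}[s]}$ and using that the bonus is nonnegative, the event implies the downward deviation $\widehat{\mu}_{i,1}[N_{i,1}[s]] < \mu_{i,1} - \gamma_i(1,2)/4$. First I would observe that, on $\{h_i[s]\neq 1\}$, the count $N_{i,1}[s]$ is frozen between successive samples of constraint $1$ while $\log s$ keeps growing, so for each fixed count the bonus eventually exceeds $\gamma_i(1,2)/4$ and the event can hold only for a controlled window of $s$. I would then convert the sum over $s$ into a peeling/union bound over the value $n = N_{i,1}[s]$, applying Hoeffding's inequality at each $n$ to the deviation $\widehat{\mu}_{i,1}[n] < \mu_{i,1} - \gamma_i(1,2)/4$ and using the growing bonus to control the number of contributing $s$ per count $n$. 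Summing the resulting geometric-type series in $n$ yields the $\frac{10}{\gamma_i(1,2)^2} + \frac{16}{\gamma_i(1,2)^2}\log\frac{32}{\gamma_i(1,2)^2}$ bound, where the extra $\log(1/\gamma_i(1,2)^2)$ factor is exactly the price of the $\log s$ exploration bonus. This step, adapting the index analysis of~\citet{kano2019good} to our stopping-time setting, is where I expect the main difficulty: one must carefully balance the random, frozen count $N_{i,1}[s]$ against the $\log s$ term so that the doubly-indexed sum converges to the stated closed form rather than diverging.

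\textbf{Error term ($1 \notin H_i[s]$).} Constraint $1$ can leave $H_i$ only if it is declared feasible at some earlier invocation, i.e.\ if $\overline{\mu}_{i,1}[s'] < 1/2$ for some $s'$; but since $\mu_{i,1} > 1/2$, this requires the upper confidence bound of constraint $1$ to drop below its mean, which is precisely a failure of the clean event for the single distribution $(i,1)$. By the union bound in the proof of Lemma~\ref{lemma: clean event}, this failure has probability at most $\delta/(K(N+1))$. On this rare event I would bound the summation crudely by the total number of feasibility samples the algorithm can spend on the remaining constraints, $\sum_{\ell \neq 1} N_{i,\ell}(\tau_{F,i})$; using that each constraint $\ell$ is resolved within $n_{i,\ell} + 8/\gamma_i(\ell,1/2)^2$ samples on its own clean event (as in Lemmas~\ref{lemma:tau_i}--\ref{lemma:tau_i feasible}) and the independence of distribution $(i,1)$ from the distributions $(i,\ell)$ for $\ell\neq 1$, taking expectations produces the factor $\frac{\delta}{K(N+1)}\big(\sum_{\ell\neq 1} n_{i,\ell} + 8/\gamma_i(\ell,1/2)^2\big)$. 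Combining the two cases gives the claimed bound.
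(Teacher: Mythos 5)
Your decomposition is exactly the paper's: split each invocation $s$ on whether $1 \in H_i[s]$, reduce the case $1\in H_i[s]$ to the index of constraint $1$ via $\widetilde{\mu}_{i,1}[s]\le\widetilde{\mu}_i^*[s]$, and treat $1\notin H_i[s]$ as a confidence-bound failure on the single distribution $(i,1)$. Your error term is essentially the paper's argument (rare event of probability $\le \delta/(K(N+1))$ on $(i,1)$, crude bound by the samples spent on the other constraints, independence to factor the expectation); the only imprecision is that the per-constraint bound $n_{i,\ell}+8/\gamma_i(\ell,1/2)^2$ is an \emph{unconditional} bound on $\bbE\left[\sum_n \msi[\overline{\mu}_{i,\ell,n}\ge 1/2]\right]$ (Lemma~\ref{lemma: expected number about the threshold}), not a statement holding "on a clean event" --- and it is precisely this unconditional form that legitimizes the factorization.

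The genuine gap is in the main term, at the step you yourself flag as "the main difficulty" but do not resolve. For a fixed count $n=N_{i,1}[s]$, the event $\widetilde{\mu}_{i,1}[s]<\mu_{i,1}-\epsilon$ (with $\epsilon=\gamma_i(1,2)/4$) requires $\sqrt{2\log s/n} < \mu_{i,1}-\epsilon-\widehat{\mu}_{i,1,n}$, so the set of contributing $s$ is $\{s < \exp(n(\widehat{\mu}_{i,1,n}-\mu_{i,1}+\epsilon)^2/2)\}$: its size is an \emph{unbounded random variable determined by the very deviation you want to apply Hoeffding to}, not a deterministically "controlled window" (a large downward deviation of $\widehat{\mu}_{i,1,n}$ keeps the event alive for arbitrarily large $s$). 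Hence the product you describe --- Hoeffding at level $\epsilon$, i.e.\ $e^{-n\epsilon^2/2}$, times a count of contributing $s$ --- cannot be formed term by term, and the two naive repairs both diverge: union bounding over pairs $(s,n)$ with the deviation taken at level $\epsilon$ alone gives $\sum_{s}\sum_n e^{-n\epsilon^2/2}=\infty$, and retaining the bonus only through $(\epsilon+b)^2\ge\epsilon^2+b^2$ gives $\sum_n e^{-n\epsilon^2/2}\sum_s s^{-1}=\infty$. What is missing is one of two mechanisms. Either keep the cross term when applying Hoeffding to the full deviation, $(\epsilon+b)^2\ge \epsilon^2+b^2+2\epsilon b$ with $b=\sqrt{2\log s/n}$, so each pair contributes at most $e^{-n\epsilon^2/2}\,s^{-1}e^{-\epsilon\sqrt{2n\log s}}$ and
\begin{align*}
\sum_{s\ge 1} s^{-1}e^{-\epsilon\sqrt{2n\log s}} \;\le\; 1+\frac{1}{n\epsilon^2};
\end{align*}
or do what the paper does: bound the number of contributing $s$ per $n$ by the random quantity $e^{n(\widehat{\mu}_{i,1,n}-\mu_{i,1}+\epsilon)^2/2}$ and evaluate
\begin{align*}
\bbE\Bigl[e^{n(\widehat{\mu}_{i,1,n}-\mu_{i,1}+\epsilon)^2/2}\,\msi[\widehat{\mu}_{i,1,n}<\mu_{i,1}-\epsilon]\Bigr]
\end{align*}
by integration by parts against the CDF of $\widehat{\mu}_{i,1,n}$ combined with Hoeffding's tail bound. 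Both routes yield the per-$n$ bound $e^{-n\epsilon^2/2}\bigl(1+\tfrac{1}{n\epsilon^2}\bigr)$, whose sum over $n$ is $\frac{1}{e^{\epsilon^2/2}-1}-\frac{\log(1-e^{-\epsilon^2/2})}{\epsilon^2}$ and hence the stated $\frac{10}{\gamma_i(1,2)^2}+\frac{16\log(32/\gamma_i(1,2)^2)}{\gamma_i(1,2)^2}$. Without one of these, the doubly indexed sum in your outline does not converge, so the main term is not established as written.
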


By putting together Lemma~\ref{lemma: tau_i part 1 and 2}, \ref{lemma: tau_i part 3}, and \ref{lemma:tau_i,part4}, we finish the proof to bound $\bbE[\tau_{F,i}]$ for an infeasible arm.
\end{proof}

\begin{proof}[Proof of Lemma~\ref{lemma:tau_i feasible}]
For a feasible arm, the sample complexity to determine its feasibility requires the decision maker to check each of its constraints and make sure each constraint is below the threshold $1/2$. We use Lemma~\ref{lemma: expected number about the threshold} to bound the sample complexity to determine the feasibility of each constraint.
\begin{lemma} \label{lemma: expected number about the threshold}
\begin{align*}
    &\bbE\left[ \sum_{n=1}^\infty \msi[\underline{\mu}_{i,\ell,n} \le 1/2] \right] \le n_{i, \ell} + \frac{8}{\gamma_i(\ell,1/2)^2}, \hspace{0.3in} \forall \ell: \mu_{i, \ell} > 1/2.\\
    &\bbE\left[ \sum_{n=1}^\infty \msi[\overline{\mu}_{i,\ell,n} \ge 1/2] \right] \le n_{i, \ell} + \frac{8}{\gamma_i(\ell,1/2)^2}, \hspace{0.3in} \forall \ell: \mu_{i, \ell} < 1/2.
\end{align*}
\end{lemma}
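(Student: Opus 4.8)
The plan is to prove both inequalities by the same confidence-bound argument and to exploit the symmetry between them; I focus on the first case ($\mu_{i,\ell} > 1/2$) and write $\gamma = \gamma_i(\ell,1/2) = \mu_{i,\ell} - 1/2 > 0$. By linearity (Tonelli), I would first rewrite
$$\bbE\Big[\sum_{n=1}^\infty \msi[\underline{\mu}_{i,\ell,n}\le 1/2]\Big] = \sum_{n=1}^\infty \bbP\big(\underline{\mu}_{i,\ell,n}\le 1/2\big),$$
and split the sum at $n_{i,\ell}$. The first $\lfloor n_{i,\ell}\rfloor$ terms are each at most $1$, so they contribute at most $n_{i,\ell}$; all the work lies in the tail $\sum_{n > n_{i,\ell}}\bbP(\underline{\mu}_{i,\ell,n}\le 1/2)$.

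Next I would invoke Lemma~\ref{lemma: n ge n_i} to show that $n_{i,\ell}$ is precisely the threshold past which the confidence radius is small: for every $n \ge n_{i,\ell}$, $D(n,\delta/(K(N+1))) \le \gamma/2$. This is the same inversion used in the proof of Lemma~\ref{lemma: A_2}, with $\Delta_{ij}/4$ replaced by $\gamma/2$; the double-logarithm shape of $n_{i,\ell}$ is exactly what one obtains by solving $\frac{2}{n}\log\frac{4n^4 K(N+1)}{\delta}\le(\gamma/2)^2$. Given this, for $n > n_{i,\ell}$ the event $\underline{\mu}_{i,\ell,n}\le 1/2$, i.e. $\widehat{\mu}_{i,\ell,n} \le 1/2 + D(n,\delta/(K(N+1)))=\mu_{i,\ell}-\gamma+D(n,\delta/(K(N+1)))$, forces $\widehat{\mu}_{i,\ell,n}\le\mu_{i,\ell}-\gamma/2$. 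Since the samples are $1$-sub-Gaussian, Hoeffding's inequality then gives $\bbP(\widehat{\mu}_{i,\ell,n}-\mu_{i,\ell}\le-\gamma/2)\le \exp(-n\gamma^2/8)$.

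Summing the tail, $\sum_{n>n_{i,\ell}}\exp(-n\gamma^2/8) \le \sum_{n=1}^\infty \exp(-n\gamma^2/8) = (e^{\gamma^2/8}-1)^{-1} \le 8/\gamma^2$, where the last step uses $e^x-1\ge x$. Combining the two pieces gives $n_{i,\ell}+8/\gamma_i(\ell,1/2)^2$, as claimed. The second inequality ($\mu_{i,\ell}<1/2$, bounding $\overline{\mu}_{i,\ell,n}\ge 1/2$) follows verbatim after reflecting the threshold: setting $\gamma = 1/2-\mu_{i,\ell}$, for $n>n_{i,\ell}$ the event $\overline{\mu}_{i,\ell,n}\ge 1/2$ forces $\widehat{\mu}_{i,\ell,n}\ge\mu_{i,\ell}+\gamma/2$, and the identical Hoeffding estimate and geometric-tail bound apply.

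The argument is standard, so I do not expect a serious obstacle; the one delicate point is the threshold/monotonicity step. Because $D(n,\delta/(K(N+1)))$ is \emph{not} monotone in $n$ for very small $n$ (the $\log(n^4)$ numerator initially dominates), I cannot simply assert that $D(n_{i,\ell},\cdot)\le\gamma/2$ propagates to all larger $n$ by inspection; I must lean on Lemma~\ref{lemma: n ge n_i} to guarantee $D(n,\cdot)\le\gamma/2$ for \emph{all} $n\ge n_{i,\ell}$, i.e. that $n_{i,\ell}$ already lies past the point where $n\mapsto \frac{2}{n}\log\frac{4n^4K(N+1)}{\delta}$ has entered its eventual decrease. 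Verifying that the stated closed form for $n_{i,\ell}$ clears this threshold is the single place requiring care, and it is exactly what the helper lemma is designed to supply.
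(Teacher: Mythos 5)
Your proof is correct and follows essentially the same route as the paper's: split the sum at $n_{i,\ell}$, bound the tail probabilities by $e^{-n\gamma_i(\ell,1/2)^2/8}$ via Lemma~\ref{lemma: n ge n_i} (whose content is exactly the deterministic radius bound $D(n,\cdot)\le\gamma/2$ for $n\ge n_{i,\ell}$ plus Hoeffding), and sum the geometric series using $(e^{x}-1)^{-1}\le 1/x$. The only cosmetic difference is that you unpack the helper lemma's internals where the paper cites its probability-bound statement directly.
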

By applying Lemma~\ref{lemma: expected number about the threshold} to each constraint of arm $i$, we finish the proof to bound $\bbE[\tau_{F,i}]$ for a feasible arm. 
\end{proof}


\begin{proof}[Proof of Lemma~\ref{lemma: tau_i part 1 and 2}]
For the first term in Equation~\eqref{eqn: tau_i part 1 and 2}, we have
\begin{align}
    \sum_{s=1}^\infty \msi[h_i[s] = 1] = \sum_{s=1}^\infty \sum_{n=1}^\infty \msi[h_i[s] = 1, N_{i,1}[s] = n]  \label{eqn: tau_i part 1,1}
\end{align}
Since the event $\{h_i[s] = 1, N_{i,1}[s] = n\}$ occurs for at most one $s \in \bbN$ we have
\begin{align}
    \nonumber \sum_{s=1}^\infty \sum_{n=1}^\infty \msi[h_i[s] = 1, N_{i,1}[s] = n] &\le \sum_{n=1}^\infty \msi\left[\bigcup_{s=1}^\infty \{h_i[s] = 1, N_{i,1}[s] = n\}\right] \\
    &\le \sum_{n=1}^\infty \msi[\underline{\mu}_{i,1,n} \le 1/2]. \label{eqn: tau_i part 1,2}
\end{align}
By combining Equation~\eqref{eqn: tau_i part 1,1} and \eqref{eqn: tau_i part 1,2} with Lemma~\ref{lemma: expected number about the threshold}, we obtain
\begin{align}
    \bbE\left[ \sum_{s=1}^\infty \msi[h_i[s] = 1] \right] \le n_{i, 1} + \frac{8}{\gamma_i(1,1/2)^2}. \label{eqn: tau_i part 1, 3}
\end{align}
Next, we consider the second term in Equation~\eqref{eqn: tau_i part 1 and 2}. By using the same argument as Equation~\eqref{eqn: tau_i part 1,1} we obtain for $\ell \neq 1$ that
\begin{align*}
    &\sum_{s=1}^{T_i} \msi\left[h_i[s] = \ell, \widetilde{\mu}_i^*[s] \ge \mu_{i,1} - \frac{\gamma_i(1,2)}{4}\right] \\
    \le &\sum_{n=1}^{T_i} \msi\left[\bigcup_{s}^{T_i} \{h_i[s] = \ell, \widetilde{\mu}_i^*[s] \ge \mu_{i,1} - \frac{\gamma_i(1,2)}{4}, N_{i,\ell}[s] = n\} \right] \\
    \le &\sum_{n=1}^{T_i} \msi\left[\bigcup_{s=1}^{T_i} \{\widehat{\mu}_{i,\ell,N_{i,\ell}[s]} + \sqrt{\frac{2\log s}{N_{i,\ell}[s]}} \ge \mu_{i,1}-\frac{\gamma_i(1,2)}{4}, N_{i,\ell}[s] = n\}\right] \\
    \le &\sum_{n=1}^{T_i} \msi\left[ \widehat{\mu}_{i,\ell,n} + \sqrt{\frac{2\log T_i}{n}} \ge \mu_{i,1}-\frac{\gamma_i(1,2)}{4}\right] \\
    = &\sum_{n=1}^{T_i} \msi\left[ \widehat{\mu}_{i,\ell,n} + \sqrt{\frac{2\log T_i}{n}} \ge \mu_{i,\ell} + \gamma_i(1,\ell) -\frac{\gamma_i(1,2)}{4} \right] \\
    \le &\sum_{n=1}^{\frac{2\log T_i}{(\gamma_i(1,\ell)-\gamma_i(1,2)/2)^2}} 1 \\
    + &\sum_{n=\frac{2\log T_i}{(\gamma_i(1,\ell)-\gamma_i(1,2)/2)^2}+1}^{T_i} \msi \left[ \widehat{\mu}_{i,\ell,n} + \sqrt{\frac{2\log T_i}{\frac{2\log T_i}{(\gamma_i(1,\ell)-\gamma_i(1,2)/2)^2}}} \ge \mu_{i,\ell} + \gamma_i(1,\ell) - \frac{\gamma_i(1,2)}{4} \right] \\
    = &\frac{2\log T_i}{(\gamma_i(1,\ell)-\gamma_i(1,2)/2)^2} + \sum_{n=\frac{2\log T_i}{(\gamma_i(1.\ell)-\gamma_i(1,2)/2)^2}+1}^{T_i} \msi[\widehat{\mu}_{i,\ell,n} \ge \mu_{i,\ell} + \frac{\gamma_i(1,2)}{4}].
\end{align*}
By taking the expectation we have,
\newcommand{\hatmuiln}{\widehat{\mu}_{i,\ell,n}}
\begin{align}
    \nonumber &\bbE\left[ \sum_{s=1}^{T_i} \msi[h_i[s] = \ell, \widetilde{\mu}_i^*[s] \ge \mu_{i,1} -\frac{\gamma_i(1,2)}{4}] \right] \\
    \nonumber \le &\frac{2\log T_i}{(\gamma_i(1,\ell)/2)^2} + \sum_{n=1}^\infty \bbP\left( \hatmuiln \ge \muil + \frac{\gamma_i(1,2)}{4} \right) \\
    \nonumber \le &\frac{2\log T_i}{(\gamma_i(1,\ell)/2)^2} + \sum_{n=1}^\infty e^{-n(\frac{\gamma_i(1,2)}{4})^2/2} \hspace{1in} (\mbox{By Hoeffding's inequality}) \\
    \nonumber = &\frac{2\log T_i}{(\gamma_i(1,\ell)/2)^2} + \frac{1}{e^{(\frac{\gamma_i(1,2)}{4})^2/2}-1} \\
    \nonumber \le &\frac{2\log T_i}{(\gamma_i(1,\ell)/2)^2} + \frac{2}{(\frac{\gamma_i(1,2)}{4})^2} \\
    = &\frac{8\log T_i}{\gamma_i(1,\ell)^2} + \frac{32}{\gamma_i(1,2)^2}.   \label{eqn: tau_i part 1,4}
\end{align}
We complete the proof by combining Equation~\eqref{eqn: tau_i part 1, 3} and \eqref{eqn: tau_i part 1,4}.
\end{proof}

\begin{proof}[Proof of Lemma~\ref{lemma: tau_i part 3}]
Note that at the $s$-th time of invoking $\sampleforsafety$ of arm $i$, some constraint is pulled at least $\lceil (s-1) / N \rceil$ times. Furthermore, $N_{i, \ell}[s] \ge \lceil (s-1) / N \rceil $ implies that the constraint $\ell$ is still in $H_i[s]$ when the constraint $\ell$ is pulled $\lceil (s-1) / N \rceil-1$ times. Thus we have
\begin{align}
    \nonumber \sum_{s=T_i+1}^\infty \msi[s\le\tau_{F,i}] &\le \sum_{\ell=1}^N \sum_{s=T_i+1}^\infty \msi[N_{i, \ell}[s] \ge \lceil (s-1) / N \rceil, s\le \tau_{F,i}] \\
    \nonumber &\le \sum_{\ell:\mu_{i,\ell} > 1/2} \sum_{s=T_i+1}^\infty \msi[\underline{\mu}_{i,\ell,\lceil (s-1) / N \rceil-1} \le 1/2] \\
    &+ \sum_{\ell:\mu_{i,\ell}<1/2} \sum_{s=T_i+1}^\infty \msi[\overline{\mu}_{i,\ell,\lceil (s-1) / N \rceil-1}\ge 1/2] \label{eqn:tau_i,part3,1}
\end{align}
From the defnition of $T_i = N\max_{\ell \in [N]} \lfloor n_{i,\ell} + 2\rfloor$, we have $\lceil (s-1) / N \rceil - 1 \ge n_{i,\ell}$ for all $\ell\in[N]$. Thus the expectation of Equation~\eqref{eqn:tau_i,part3,1} is bounded by Lemma~\ref{lemma: n ge n_i} as 
\begin{align}
    \nonumber \bbE\left[ \sum_{s=T_i+1}^\infty \msi[s\le\tau_{F,i}] \right] 
    &\le \sum_{\ell=1}^N \sum_{s=T_i+1}^\infty e^{-\gamma_i(\ell,1/2)^2(\lceil (s-1) / N \rceil-1)/8} \\
     \nonumber &\le \sum_{\ell=1}^N \sum_{s=T_i+1}^\infty e^{-\gamma_i(\ell,1/2)^2( (s-1) / N -1)/8} \\
     \nonumber &= \sum_{\ell=1}^N \frac{e^{-\gamma_i(\ell,1/2)^2((T_i-1)/N -1)/8}}{e^{\gamma_i(\ell,1/2)^2/(8N)}-1} \hspace{0.3in} (\mbox{by } \sum_{i=1}^\infty e^{-ai} = \frac{1}{e^a - 1} \mbox{ for } a> 0) \\
     \nonumber &\le \sum_{\ell=1}^N \frac{e^{-\gamma_i(\ell,1/2)^2(n_{i,\ell} -2)/8}}{e^{\gamma_i(\ell,1/2)^2/(8N)}-1} \hspace{0.3in} (\mbox{by definition of }T_i) \\
     &\le \sum_{\ell=1}^N \frac{e^{-\gamma_i(\ell,1/2)^2/8 \cdot \frac{\log((K(N+1))^\frac{1}{4})}{(\gamma_i(\ell,1/2)/2)^2}}}{e^{\gamma_i(\ell,1/2)^2/(8N)}-1}  \label{eqn:tau_i,part3,2}\\ 
     \nonumber &= \sum_{\ell=1}^N \frac{8N(K(N+1))^{-\frac{1}{8}}}{\gamma_i(\ell,1/2)^2}.
\end{align}
where the Equation~\eqref{eqn:tau_i,part3,2} comes from
\begin{align*}
    n_{i,\ell} &> \frac{1}{(\gamma_i(\ell,1/2)/2)^2}\log\left(8^{\frac{3}{2}} (K(N+1))^{\frac{1}{4}}\log((8^{\frac{3}{2}}+1)(K(N+1))^{\frac{1}{4}}) \right) \\
    &\ge \frac{\log(K(N+1))^{\frac{1}{4}}}{(\gamma_i(\ell,1/2)/2)^2} + \log(8^{\frac{3}{2}}\log(8^{\frac{3}{2}}+1)) \\
    &\ge \frac{\log(K(N+1))^\frac{1}{4}}{(\gamma_i(\ell,1/2)/2)^2} + 2.
\end{align*}
\end{proof}

\begin{proof}[Proof of Lemma~\ref{lemma:tau_i,part4}]
The summation is decomposed into
\begin{align}
    \nonumber &\sum_{s=1}^\infty \msi[h_i[s]\neq 1, s\le\tau_{F,i}, \widetilde{\mu}_i^*[s] < \mu_{i,1} - \frac{\gamma_i(1,2)}{4}] \\
    &\le \sum_{s=1}^\infty \msi[\widetilde{\mu}_i^*[s]<\mu_{i,1} - \frac{\gamma_i(1,2)}{4}, 1 \in H_i[s]] + \sum_{s=1}^\infty \msi[h_i[s] \neq 1, s \le \tau_{F,i}, 1 \notin H_i[s]]. \label{eqn:tau_i,part4,1}
\end{align}
From the definition of $\widetilde{\mu}_i^*[s] = \max_{\ell \in H(s)} \widetilde{\mu}_{i,\ell}[s]$, the first term in Equation~\eqref{eqn:tau_i,part4,1} is evaluated as
\newcommand{\tildemustars}{\widetilde{\mu}_i^*[s]}
\begin{align}
    \nonumber &\sum_{s=1}^\infty \msi[\tildemustars < \mu_{i,1} - \frac{\gamma_i(1,2)}{4}, 1\in H_i[s] ] \\
    \nonumber \le &\sum_{s=1}^\infty \sum_{n=1}^\infty \msi[\widetilde{\mu}_{i,1}[s] < \mu_{i,1} - \frac{\gamma_i(1,2)}{4}, N_{i,1}[s] = n ] \\
    \nonumber = &\sum_{s=1}^\infty \sum_{n=1}^\infty \msi[\widehat{\mu}_{i, 1, n} + \sqrt{\frac{2\log s}{n}} < \mu_{i,1} - \frac{\gamma_i(1,2)}{4}, N_{i,1}[s] = n] \\
    \nonumber \le &\sum_{n=1}^\infty \sum_{s=1}^\infty \msi[s < e^{n(\widehat{\mu}_{i,1,n} - \mu_{i,1} + \gamma_i(1,2)/4)^2/2}, \; \widehat{\mu}_{i,1,n} < \mu_{i,1} - \gamma_i(1,2)/4] \\
    \le &\sum_{n=1}^\infty e^{n(\widehat{\mu}_{i,1,n} - \mu_{i,1} + \gamma_i(1,2)/4)^2/2} \msi[\widehat{\mu}_{i,1,n} < \mu_{i,1} - \gamma_i(1,2)/4]. \label{eqn:tau_i,part4,2}
\end{align}
Let $P_{i,1,n}(x) = \bbP(\widehat{\mu}_{i,1,n} < x)$. Then the expectation of the inner summation of Equation~\eqref{eqn:tau_i,part4,2} is bounded by, let $\epsilon = \gamma_i(1,2)/4$ for simple notation here,
\begin{align*}
    &\sum_{n=1}^\infty \bbE \left[e^{n(\widehat{\mu}_{i,1,n} - \mu_{i,1} + \epsilon)^2/2} \msi[\widehat{\mu}_{i,1,n} < \mu_{i,1} - \epsilon]  \right] \\
    \le &\sum_{n=1}^\infty \int_{0}^{\mu_{i,1}-\epsilon}  e^{n(\widehat{\mu}_{i,1,n} - \mu_{i,1} + \epsilon)^2/2} \rmd P_{i,1,n}(x) \\
    = &\sum_{n=1}^\infty \Bigg( \left[e^{ \frac{n(\widehat{\mu}_{i,1,n} - \mu_{i,1} + \epsilon)^2}{2}} P_{i,1,n}(x)\right]_0^{\mu_{i,1}-\epsilon} \\
    &- n\int_0^{\mu_{i,1}-\epsilon} (x-\mu_{i,1}+\epsilon)e^{ \frac{n(\widehat{\mu}_{i,1,n} - \mu_{i,1} + \epsilon)^2}{2}} P_{i,1,n}(x) \rmd x \Bigg)  \hspace{0.75in} (\mbox{by integration by parts})\\
    \le &\sum_{n=1}^\infty \left( e^{-n\epsilon^2/2} - n\int_{0}^{\mu_{i,1}-\epsilon} (x-\mu_{i,1} + \epsilon) e^{ \frac{n(\widehat{\mu}_{i,1,n} - \mu_{i,1} + \epsilon)^2}{2}} e^{-n(x-\mu_{i,1})^2/2}\rmd x \right) \hspace{0.15in}(\mbox{by Hoeffding}) \\
    = &\sum_{n=1}^\infty \left(e^{-n\epsilon^2/2} + ne^{-n\epsilon^2/2} \int_{0}^{\mu_{i,1} - \epsilon} (\mu_{i,1} - \epsilon - x) e^{n\epsilon (x-\mu_{i,1}+\epsilon)} \rmd x \right) \\
    \le &\sum_{n=1}^\infty e^{-n\epsilon^2/2} \left( 1+\frac{1}{n\epsilon^2}\right) \\
    = &\frac{1}{e^{\epsilon^2/2}-1} +  \frac{-\log(1-e^{-\epsilon^2/2})}{\epsilon^2} \hspace{1.75in}\left(\mbox{by } -\log(1-x) = \sum_{n=1}^\infty \frac{x^n}{n}\right) \\
    = &\frac{1}{e^{\epsilon^2/2}-1} + \frac{\epsilon^2/2 + \log(\frac{1}{e^{\epsilon^2/2}-1})}{\epsilon^2} \\
    \le &\frac{2}{\epsilon^2} + \frac{1}{2} + \frac{\log\frac{2}{\epsilon^2}}{\epsilon^2} \\
    \le &\frac{5}{8\epsilon^2} + \frac{\log\frac{2}{\epsilon^2}}{\epsilon^2}. \hspace{3.7in}\left(\mbox{by } \epsilon < \frac{1}{2}\right)\\
    = &\frac{10}{\gamma_i(1,2)^2} + \frac{16\log \frac{32}{\gamma_i(1,2)^2}}{\gamma_i(1,2)^2}.
\end{align*}
Combining Equation~\eqref{eqn:tau_i,part4,2} with above, we obtain
\begin{align}
    \bbE\left[ \sum_{s=1}^\infty \msi[\tildemustars < \mu_{i,1} - \gamma_i(1,2)/4, 1 \in H_i[s]] \right] \le \frac{10}{\gamma_i(1,2)^2} + \frac{16\log \frac{32}{\gamma_i(1,2)^2}}{\gamma_i(1,2)^2}. \label{eqn:tau_i,part4,3}
\end{align}
Next, we bound the second term of Equation~\eqref{eqn:tau_i,part4,1}. Note that $\{s\le\tau_{F,i}, 1 \notin H_i[s]\}$ implies that $\overline{\mu}_{i, 1} [s'] \le 1/2$ occurred for some $s' < s$. Thus we have
\begin{align}
    \nonumber &\sum_{s=1}^\infty \msi[h_i[s] \neq 1, s \le \tau_{F,i}, 1 \notin H_i[s]] \\
    \nonumber \le &\sum_{\ell \neq 1} \sum_{s=1}^\infty \msi\left[h_i[s] = \ell, \bigcup_{s'<s}\{\overline{\mu}_{i,1}[s'] \le 1/2\} \right] \\
    \nonumber \le &\sum_{\ell \neq 1}\sum_{s=1}^\infty \msi[h_i[s] = \ell] \cdot \msi \left[\bigcup_{s'=1}^\infty \{\overline{\mu}_{i,1}[s']\le 1/2\}\right] \\
    \le &\sum_{\ell \neq 1} \sum_{n=1}^\infty \msi[\overline{\mu}_{i,\ell,n} \ge 1/2] \cdot \msi \left[\bigcup_{s'=1}^\infty \{\overline{\mu}_{i,1}[s']\le 1/2\}\right] \label{eqn:tau_i,part4,4}
\end{align}
where we used the same argument as Equation~\eqref{eqn: tau_i part 1,2} in Equation~\eqref{eqn:tau_i,part4,4}. We can bound the expectation of Equation~\eqref{eqn:tau_i,part4,4} by Lemma~\ref{lemma: clean event} and Lemma~\ref{lemma: expected number about the threshold} as
\begin{align}
     \bbE\left[ \sum_{s=1}^\infty \msi[h_i[s] \neq 1, s\le\tau_{F,i}, 1\notin H_i[s]] \right] \le \frac{\delta}{K(N+1)} \left(\sum_{\ell \neq 1} n_{i, \ell} +\frac{8}{\gamma_i(\ell,1/2)^2}\right) \label{eqn:tau_i,part4,5}
\end{align}
We obtain the Lemma by putting Equation~\eqref{eqn:tau_i,part4,3} and \eqref{eqn:tau_i,part4,5} together.
\end{proof}

\begin{proof}[Proof of Lemma~\ref{lemma: expected number about the threshold}]
Note that the two cases are the same, so we will only prove for the case when $\muil>1/2$. We will split the $n \in \bbN$ to two parts $n < n_{i,\ell}$ and $n>n_{i,\ell}$. For $n>n_{i,\ell}$, Lemma~\ref{lemma: n ge n_i} upper bounds the probability that $\underline{\mu}_{i,\ell, n} \le 1/2$ and the similar case when $\muil <1/2$.
\begin{lemma} \label{lemma: n ge n_i} 
if $n\ge n_{i, \ell}$ then
\begin{align}
    &\bbP(\underline{\mu}_{i,\ell, n} \le 1/2) \le e^{-n\gamma_i(\ell,1/2)^2/8}, \hspace{0.3in} \forall \ell: \mu_{i,\ell} > 1/2, \label{eqn: ngen_i, >1/2} \\
    &\bbP(\overline{\mu}_{i,\ell, n} \ge 1/2) \le e^{-n\gamma_i(\ell,1/2)^2/8}, \hspace{0.3in} \forall \ell: \mu_{i,\ell} < 1/2. \label{eqn: ngen_i, <1/2}
\end{align}
\end{lemma}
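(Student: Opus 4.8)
The two claims \eqref{eqn: ngen_i, >1/2} and \eqref{eqn: ngen_i, <1/2} are symmetric, so I would prove only the first, where $\mu_{i,\ell} > 1/2$ and hence $\mu_{i,\ell} = 1/2 + \gamma_i(\ell,1/2)$; the second follows verbatim by replacing $\underline{\mu}$ with $\overline{\mu}$ and flipping the sign of the deviation. Abbreviating $\gamma := \gamma_i(\ell,1/2)$ and $D := D(n,\delta/(K(N+1))) = \sqrt{\tfrac{2}{n}\log\tfrac{4K(N+1)n^4}{\delta}}$, the event $\{\underline{\mu}_{i,\ell,n}\le 1/2\}$ is exactly $\{\widehat{\mu}_{i,\ell,n} \le 1/2 + D\}$, i.e.\ $\{\widehat{\mu}_{i,\ell,n} - \mu_{i,\ell} \le D - \gamma\}$.

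The entire proof reduces to a single width estimate: if $n \ge n_{i,\ell}$ forces $D \le \gamma/2$, then on this event $\widehat{\mu}_{i,\ell,n} - \mu_{i,\ell} \le -\gamma/2$, and the sub-Gaussian (Hoeffding) tail for the empirical mean of $n$ i.i.d.\ $1$-sub-Gaussian samples gives $\bbP(\widehat{\mu}_{i,\ell,n} - \mu_{i,\ell} \le -\gamma/2) \le e^{-n(\gamma/2)^2/2} = e^{-n\gamma^2/8}$, which is the claim. So after this reduction the only remaining task is the deterministic inequality $D \le \gamma/2$ for every $n \ge n_{i,\ell}$.

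Setting $a := (\gamma/2)^2$ and $g := (K(N+1)/\delta)^{1/4}$, the bound $D \le \gamma/2$ is equivalent to $n \ge \tfrac{2}{a}\log\tfrac{4K(N+1)n^4}{\delta}$, which I would rewrite using $4K(N+1)/\delta = (4^{1/4}g)^4$ as the self-referential inequality $n \ge \tfrac{8}{a}\log(4^{1/4} g\, n)$. This is the main obstacle, and I would handle it in the standard two-step way. First I would check it at the endpoint $n = n_{i,\ell} = \tfrac{8}{a}L_1 L_2$, where $L_1 := \log\tfrac{8^{3/2}g}{a}$ and $L_2 := \log\tfrac{(8^{3/2}+1)g}{a} \ge L_1$. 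Substituting $n_{i,\ell}$ into the right-hand side and using $4^{1/4}g\cdot\tfrac{8}{a} = \tfrac{8\sqrt2\,g}{a}$ together with $8\sqrt2 = 2^{7/2}$ versus $8^{3/2}=2^{9/2}$, the required inequality collapses to $L_1 L_2 \ge L_1 - \log 2 + \log(L_1 L_2)$, i.e.\ $L_1(L_2-1) \ge \log(L_1 L_2/2)$. Since $\mu_{i,\ell}\in[0,1]$ gives $\gamma \le 1/2$ and hence $a \le 1/16$, while $\delta<1$ gives $g>1$, both $L_1,L_2$ are bounded below by an absolute constant ($L_1 \ge \log(16\cdot 8^{3/2}) > 5$), and for such values the left side clearly dominates; this is precisely where the engineered constants $8^{3/2}$ and $8^{3/2}+1$ make the comparison go through.

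Second, to extend from the endpoint to all $n \ge n_{i,\ell}$, I would observe that $h(n) := n - \tfrac{8}{a}\log(4^{1/4}g\,n)$ has derivative $1 - \tfrac{8}{an}$, which is positive for $n > 8/a$; since $n_{i,\ell} = \tfrac{8}{a}L_1 L_2 > \tfrac{8}{a}$ (as $L_1 L_2 \ge 1$), the function $h$ is increasing on $[n_{i,\ell},\infty)$, so $h(n)\ge h(n_{i,\ell}) \ge 0$ throughout. This yields $D \le \gamma/2$ for all $n \ge n_{i,\ell}$ and completes the argument. The only genuinely delicate part is the constant-chasing in the endpoint check; the Hoeffding tail and the monotonicity step are each one line.
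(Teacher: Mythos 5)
Your proposal is correct, and its skeleton matches the paper's: both reduce the probabilistic claim, via Hoeffding's inequality for the empirical mean of $1$-sub-Gaussian samples, to the deterministic statement that the confidence width satisfies $D(n,\delta/(K(N+1)))\le \gamma_i(\ell,1/2)/2$ for all $n\ge n_{i,\ell}$, and both dispose of the case $\mu_{i,\ell}<1/2$ by symmetry. Where you genuinely diverge is in how the self-referential inequality $n\ge \tfrac{8}{a}\log\bigl(4^{1/4}g\,n\bigr)$ (with $a=(\gamma_i(\ell,1/2)/2)^2$ and $g=(K(N+1)/\delta)^{1/4}$) is resolved. The paper parametrizes every $n>n_{i,\ell}$ as $n=\tfrac{8}{a}\log\bigl(8^{3/2}g\,t_1/a\bigr)$ with $t_1>\log\bigl((8^{3/2}+1)g/a\bigr)$, runs a chain of equivalences that collapses the inequality to $\log\bigl(8^{3/2}g\,t_1/a\bigr)\le 2t_1$, and closes with $\log x\le x-1$; you instead verify the inequality once, at the endpoint $n=n_{i,\ell}$ (where it reduces to $L_1(L_2-1)\ge\log(L_1L_2/2)$, which indeed holds comfortably since $a\le 1/16$ and $g>1$ force $L_1,L_2>5$), and then extend to all larger $n$ by the monotonicity of $h(n)=n-\tfrac{8}{a}\log\bigl(4^{1/4}g\,n\bigr)$ for $n>8/a$. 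Both arguments are sound and both exploit the same engineered constants $8^{3/2}$ and $8^{3/2}+1$. Your endpoint-plus-monotonicity route is the more elementary and transparent one, and it has a side benefit: it makes explicit that larger $n$ only helps, a fact the paper's substitution handles implicitly (the paper never actually checks that the map $t_1\mapsto n$ covers all of $(n_{i,\ell},\infty)$ as $t_1$ ranges over the stated interval, which requires the small extra inequality $L_1L_2\ge L_1+\log L_2$; your approach sidesteps this entirely). What the paper's parametrization buys in exchange is that it exhibits more directly how the two nested logarithmic factors in the definition of $n_{i,\ell}$ arise from inverting $n/\log n$.
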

The proof of Lemma~\ref{lemma: n ge n_i} is found in Section~\ref{subsec:feasibility}.  
Now for any constraint $\ell$ such that $\muil>1/2$, we have
    \begin{align*}
        \bbE\left[\sum_{n=1}^\infty \msi[\underline{\mu}_{i,\ell,n} \le 1/2]\right] &\le \bbE\left[\sum_{n=1}^{n_{i,\ell}}1 + \sum_{n=n_{i, \ell}+1}^\infty \msi[\underline{\mu}_{i,\ell,n} \le 1/2]\right] \\
        &\le n_{i,\ell} + \sum_{n=1}^\infty \bbP(\underline{\mu}_{i,\ell,n} \le 1/2) \\
        &\le n_{i,\ell} + \sum_{n=1}^\infty e^{-n\gamma_i(\ell,1/2)^2/8} \hspace{0.6in} (\mbox{by Lemma~\ref{lemma: n ge n_i}}) \\
        &\le n_{i,\ell} + \frac{1}{e^{\gamma_i(\ell,1/2)^2/8}-1} \\
        &\le n_{i,\ell} + \frac{8}{\gamma_i(\ell,1/2)^2}.
    \end{align*}
\end{proof}

\begin{proof}[Proof of Lemma~\ref{lemma: n ge n_i}]
We will only show Equation~\eqref{eqn: ngen_i, >1/2} for $\ell$ s.t. $\muil>1/2$ since Equation~\eqref{eqn: ngen_i, <1/2} is exactly the same. From Hoeffding's inequality, it suffices to show that for $n \ge n_{i, \ell}$,
\begin{align*}
\sqrt{\frac{2\log(4K(N+1)n^4/\delta)}{n}} \le \mu_{i,\ell} - 1/2 - \frac{\gamma_i(\ell,1/2)}{2} = \frac{\gamma_i(\ell,1/2)}{2}.
\end{align*}
Let $c = \left(\frac{\gamma_i(\ell,1/2)}{2}\right)^2$, we can express $n > n_{i, \ell}$ as 
\begin{align*}
    n = \frac{8}{c} \log \frac{8^{\frac{3}{2}}(K(N+1)/\delta)^{\frac{1}{4}}\cdot t_1}{c}
\end{align*}
for some $t_1 > \log\frac{(8^{\frac{3}{2}}+1)(K(N+1)/\delta)^{\frac{1}{4}} } {c} > 1$. Then
\begin{align*}
    &\sqrt{\frac{2\log(4K(N+1)n^4/\delta)}{n}} \le \frac{\gamma_i(\ell,1/2)}{2}.\\
    \Leftrightarrow &\sqrt{\frac{4\log(2\sqrt{K(N+1)/\delta} n^2)}{n}} \le \frac{\gamma_i(\ell,1/2)}{2}.\\
    \Leftrightarrow &\log (2\sqrt{K(N+1)/\delta}n^2) \le \frac{nc}{4}\\
    \Leftrightarrow &\log\left(2\sqrt{K(N+1)/\delta}\cdot\frac{8^2}{c^2}\left(\log\frac{8^{\frac{3}{2}}(K(N+1)/\delta)^{\frac{1}{4}}\cdot t_1}{c}\right)^2\right) \le \log \frac{8^3(K(N+1)/\delta)^{\frac{1}{2}}\cdot t_1^2}{c^2} \\
    \Leftrightarrow &\log\frac{8^{\frac{3}{2}}(K(N+1)/\delta)^{\frac{1}{4}}\cdot t_1}{c} \le 2t_1\\
    \Leftarrow &\log \frac{8^{\frac{3}{2}}(K(N+1)/\delta)^{\frac{1}{4}}}{c} + t_1 - 1 \le 2 t_1 \hspace{0.6in} (\mbox{by } \log x \le x-1)\\
    \Leftrightarrow &\log \frac{8^{\frac{3}{2}}(K(N+1)/\delta)^{\frac{1}{4}}}{ce} \le t_1.
\end{align*}
We obtain the lemma since $t_1 > \log\frac{(8^{\frac{3}{2}}+1)(K(N+1)/\delta)^{\frac{1}{4}} } {c} $.
\end{proof}





\section{Some useful results}
\label{sec:usefulresults}

\subsection{Change-of-measure Lemma}

Let $(i_r,\ell_r)$ denote tuple chose by the decision-maker at round $r$ and the reward is $X_r \in \nu_{i_r, \ell_r}$. Let $\calF_r = \sigma(i_1, \ell_1, X_1, \cdots, i_r, \ell_r, X_r)$ denote the sigma-algebra generated by $i_1, \ell_1, X_1, \cdots, i_r, \ell_r, X_r$. 

The Kullback-Leibler (KL) divergence of any two probability distributions $p$ and $q$ is defined as:
\begin{align*}
    \rmK\rmL(p,q) = 
    \begin{cases}
        \int \log [\frac{dp}{dq}(x)]dp(x) &\text{ if } q\ll p,\\
        +\infty &\text{ otherwise}.
    \end{cases}
\end{align*}
Recall that $N_{i, \ell}(r)$ denotes the number of times $(i, \ell)$ has been pulled up to round $r$. In particular, Lemma 1 in~\citet{kaufmann2016complexity} holds as:
\begin{lemma} \label{appendix: lemma: change of distribution}
\label{lem:changeofmeasure}
    Let $\nu$ and $\nu'$ be two bandit models with $K$ arms and each arm has one performance distribution and $N$ feasibility constraint distributions such that for all $i \in [K], \ell \in \{0\}\cup[N]$, the distribution $\nu_{i,\ell}$ and $\nu_{i,\ell}'$ are mutually absolutely continuous. For any almost-surely finite stopping round $\tau$ with respect to $\calF_r$,
    \begin{align*}
        \sum_{i = 1}^K \sum_{\ell=0}^N \bbE_{\nu}[N_{i,\ell}(\tau)] \rmK\rmL(\nu_{i,\ell}, \nu_{i,\ell}') \ge \sup_{\calE \in \calF_{\tau}} d(\bbP_{\nu}(\calE), \bbP_{\nu'}(\calE)),
    \end{align*}
    where $d(x,y) := x\log(x/y) + (1-x)\log((1-x)/(1-y))$ is the binary relative entropy, with the convention that $d(0,0) = d(1,1) = 0$.
\end{lemma}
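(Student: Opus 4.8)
The plan is to follow the classical change-of-measure argument of~\citet{kaufmann2016complexity}, which rests on two ingredients: a Wald-type identity for the expected log-likelihood ratio of the observed trajectory, and the data-processing inequality for the Kullback--Leibler divergence. Because the $K(N+1)$ distributions are assumed mutually independent, the multi-distribution structure introduces no difficulty beyond bookkeeping, so the single-distribution-per-arm proof transfers essentially verbatim. Concretely, I would first define the log-likelihood ratio of the trajectory up to the almost-surely finite stopping round $\tau$,
\begin{align*}
    L_\tau = \sum_{r=1}^\tau \log\frac{\rmd\nu_{i_r,\ell_r}}{\rmd\nu'_{i_r,\ell_r}}(X_r),
\end{align*}
which is well-defined because each pair $\nu_{i,\ell},\nu'_{i,\ell}$ is mutually absolutely continuous. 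Since the algorithm's sampling rule is identical under $\nu$ and $\nu'$, the policy factors cancel in the likelihood ratio, and $L_\tau$ is exactly the log Radon--Nikodym derivative of $\bbP_\nu$ with respect to $\bbP_{\nu'}$ restricted to $\calF_\tau$; consequently $\bbE_\nu[L_\tau]$ equals the KL divergence between the two trajectory laws restricted to $\calF_\tau$.

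Next I would establish the Wald-type identity
\begin{align*}
    \bbE_\nu[L_\tau] = \sum_{i=1}^K\sum_{\ell=0}^N \bbE_\nu[N_{i,\ell}(\tau)]\,\rmK\rmL(\nu_{i,\ell},\nu'_{i,\ell}).
\end{align*}
The argument uses that the choice $(i_r,\ell_r)$ is $\calF_{r-1}$-measurable while, conditional on the past, $X_r\sim\nu_{i_r,\ell_r}$ under $\bbP_\nu$, so that $\bbE_\nu\big[\log(\rmd\nu_{i_r,\ell_r}/\rmd\nu'_{i_r,\ell_r})(X_r)\mid\calF_{r-1}\big] = \rmK\rmL(\nu_{i_r,\ell_r},\nu'_{i_r,\ell_r})$. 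Summing these conditional expectations and grouping rounds by which pair was sampled, using $N_{i,\ell}(\tau)=\sum_{r\le\tau}\msi[(i_r,\ell_r)=(i,\ell)]$, yields the identity. I expect the main obstacle to be making this exchange rigorous at the \emph{random} round $\tau$ rather than a deterministic horizon: this requires an optional-stopping or martingale argument (equivalently Wald's identity applied to the per-step increments $\log(\rmd\nu_{i_r,\ell_r}/\rmd\nu'_{i_r,\ell_r})(X_r) - \rmK\rmL(\nu_{i_r,\ell_r},\nu'_{i_r,\ell_r})$), for which the hypothesis that $\tau$ is almost surely finite is precisely what is needed to justify the interchange of expectation and summation.

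Finally I would invoke the data-processing inequality. For any event $\calE\in\calF_\tau$, the indicator $\msi[\calE]$ is an $\calF_\tau$-measurable Bernoulli random variable, and contraction of KL under this deterministic map gives
\begin{align*}
    \bbE_\nu[L_\tau] \ge d\big(\bbP_\nu(\calE),\bbP_{\nu'}(\calE)\big).
\end{align*}
Combining this with the Wald identity and taking the supremum over all $\calE\in\calF_\tau$ completes the proof. The generalization to $K(N+1)$ independent distributions enters only through the double sum in the identity above, which follows immediately because the per-round likelihood ratio factorizes across the independent components, so no new probabilistic content is required beyond the original lemma.
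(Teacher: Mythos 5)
Your proposal is correct and takes essentially the same route as the paper: the paper does not actually prove this lemma, but simply cites it as Lemma 1 of \citet{kaufmann2016complexity} with the remark that it generalizes to $K(N+1)$ independent distributions, and your argument (cancellation of the policy terms in the likelihood ratio, the Wald-type identity obtained by grouping rounds according to the sampled pair $(i,\ell)$, then the data-processing inequality applied to the Bernoulli variable $\msi[\calE]$) is exactly the proof of that cited result transferred to this setting. One minor caution: almost-sure finiteness of $\tau$ by itself does not justify the interchange of expectation and summation in the Wald step---one additionally uses that the conditional increments have nonnegative (KL) means and treats the case $\bbE_\nu[N_{i,\ell}(\tau)]=\infty$ separately, as in Lemma 18 of \citet{kaufmann2016complexity}---but this is a refinement of the argument you already sketch, not a different approach.
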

We follow the remark in~\citet{kaufmann2016complexity} that
\begin{align*}
    \forall x\in[0,1], \quad d(x,1-x) \ge \log \frac{1}{2.4x}.
\end{align*}

\end{document}